\crefname{figure}{Figure}{Figures}
\crefname{assumption}{Assumption}{Assumptions}
\Crefname{figure}{Figure}{Figures}
\crefname{table}{Table}{Tables}
\Crefname{table}{Table}{Tables}
\crefname{equation}{Equation}{Equations}
\Crefname{equation}{Equation}{Equations}
\crefname{section}{Section}{Sections}
\Crefname{section}{Section}{Sections}
\crefname{subsection}{Subsection}{Subsections}
\Crefname{subsection}{Subsection}{Subsections}
\newtheorem{lemma}{Lemma}
\declaretheorem{proposition}
\declaretheorem{assumption}
\declaretheorem{definition}
\newcommand{\SG}{\textsc{sg}}
\newcommand{\norm}[1]{\lVert #1 \rVert}
\newcommand{\opnorm}[1]{\norm{#1}_{\mathrm{op}}}
\newcommand{\E}{\mathbb{E}}
\newcommand{\T}{\mathsf{T}}
\newcommand{\R}{\ensuremath{\mathbb{R}}}
\DeclareMathOperator*{\rank}{rank}
\renewcommand{\geq}{\geqslant}
\renewcommand{\leq}{\leqslant}
\DeclareMathOperator*{\Span}{span}
\renewcommand{\Re}{\mathrm{Re}}
\DeclareMathOperator*{\Tr}{\mathrm{tr}}
\newcommand{\cvectwo}[2]{\begin{bmatrix} #1 \\ #2 \end{bmatrix}}
\newcommand{\vfapprox}{V_{\phi,w}}
\newcommand{\psifapprox}{\hat{\psi}}
\newcommand{\Wtd}{W_{\Phi, G}^{\mathrm{TD}}}
\newcommand{\wtd}{w_{\Phi}^{\mathrm{TD}}}
\newcommand{\wmc}{w_{\Phi}^{\mathrm{MC}}}
\newcommand{\Vtd}{\hat{V}^{\mathrm{TD}}}
\newcommand{\Vsup}{\hat{V}^{\mathrm{MC}}}
\newcommand{\Vres}{\hat{V}^{\mathrm{res}}}
\newcommand{\cLsup}{\mathcal{L}_{\mathrm{aux}}^{\mathrm{MC}}}
\newcommand{\cLtd}{\mathcal{L}_{\mathrm{aux}}^{\mathrm{TD}}}
\newcommand{\cLres}{\mathcal{L}_{\mathrm{aux}}^{\mathrm{res}}}
\newcommand{\cG}{\mathcal{G}}
\newcommand{\cL}{\mathcal{L}}
\newcommand{\cO}{\mathcal{O}}
\newcommand{\rN}{\mathbb{N}}
\newcommand{\rR}{\mathbb{R}}
\newcommand{\Rmax}{R_{\textrm{max}}}
\newcommand{\rpi}{r_\pi}
\newcommand{\Ppi}{P_\pi}
\DeclareMathOperator*{\argmin}{arg\,min}
\DeclareMathOperator*{\expect}{{\huge \mathbb{E}}}
\newcommand{\mdp}{\mathcal{M}}
\newcommand{\sspace}{\mathcal{S}}
\newcommand{\rew}{\mathcal{R}}
\newcommand{\prob}{\mathcal{P}}
\newcommand{\aspace}{\mathcal{A}}
\newcommand{\cbar}{\, | \,}
\icmltitlerunning{Bootstrapped Representations in Reinforcement Learning}
\begin{document}

\twocolumn[
\icmltitle{Bootstrapped Representations in Reinforcement Learning}

\begin{icmlauthorlist}
\icmlauthor{Charline Le Lan}{ox}
\icmlauthor{Stephen Tu}{dm}
\icmlauthor{Mark Rowland}{dm}\\
\icmlauthor{Anna Harutyunyan}{dm}
\icmlauthor{Rishabh Agarwal}{dm}
\icmlauthor{Marc G. Bellemare}{dm}
\icmlauthor{Will Dabney}{dm}
\end{icmlauthorlist}

\icmlaffiliation{ox}{University of Oxford}
\icmlaffiliation{dm}{Google DeepMind}

\icmlcorrespondingauthor{Charline Le Lan}{charline.lelan@stats.ox.ac.uk}

\icmlkeywords{Machine Learning, ICML}

\vskip 0.3in
]
\printAffiliationsAndNotice %

\begin{abstract}

\looseness=-1 In reinforcement learning (RL), state representations are key to dealing with large or continuous state spaces. While one of the promises of deep learning algorithms is to automatically construct features well-tuned for the task they try to solve, such a representation might not emerge from end-to-end training of deep RL agents.
To mitigate this issue, auxiliary objectives are often incorporated into the learning process and help shape the learnt state representation. Bootstrapping methods are today's method of choice to make these additional predictions. Yet, it is unclear which features these algorithms capture and how they relate to those from other auxiliary-task-based approaches. In this paper, we address this gap and provide a theoretical characterization of the state representation learnt by temporal difference learning \citep{sutton1988learning}.
Surprisingly, we find that this representation differs from the features learned by Monte Carlo and residual gradient algorithms for most transition structures of the environment in the policy evaluation setting.
We describe the efficacy of these representations for policy evaluation, and use our theoretical analysis to design new auxiliary learning rules. We complement our theoretical results with an empirical comparison of these learning rules for different cumulant functions on classic domains such as the four-room domain \citep{sutton99between} and Mountain Car \citep{Moore90efficientmemory-based}.

\end{abstract}

\section{Introduction}
\label{sec:intro}
\begin{figure}[h!]
  \centering
  \includegraphics[width=0.47\textwidth]{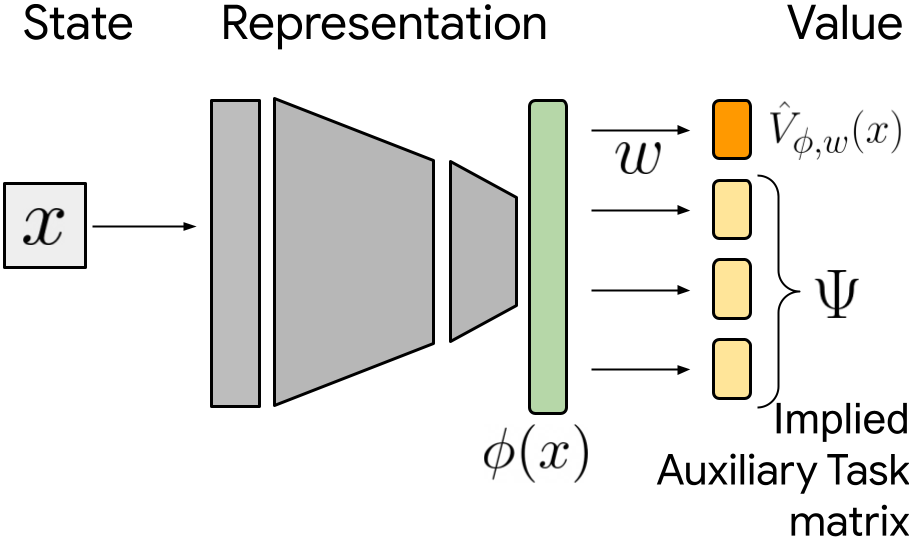}
\caption{In deep RL, we see the penultimate layer of the network as the representation $\phi$ which is linearly transformed into a value prediction $\hat{V}_{\phi, w}$ and auxiliary predictions $\Psi(x)$ by bootstrapping methods.}
  \label{fig:highlevel}
    \vspace{-1.em}
\end{figure}
The process of representation learning is crucial to the success of reinforcement learning at scale. In deep reinforcement learning, a neural network is used to parameterise a representation $\phi$ which is linearly mapped into a \textit{value function} (\cref{fig:highlevel}) \citep{yu2009basis, bellemare2019geometric, levine17shallow}; this approach often leads to state-of-the-art performance in the field \citep{mnih15human}. State representations are key to the stability and quality of this learning process.

However, a representation supporting the downstream task of interest might not emerge from end-to-end training. Hence, auxiliary objectives are often incorporated into the training process to help the agent combine its inputs into useful features \citep{sutton2011horde, jaderberg17reinforcement, bellemare17distributional, lyle2021effect} and the resulting network's representation can help the agent estimate the value function.  To construct representations supporting these characteristics, different kind of auxiliary tasks have thus been incorporated into the learning process such as controlling visual aspects of observed states \citep{jaderberg17reinforcement}, predicting the values of several policies \citep{bellemare2019geometric, dabney2021value}, predicting values over multiple discount factors \citep{fedus2019hyperbolic} or prediction of next state observations \citep{jaderberg17reinforcement, gelada2019deepmdp} and rewards \citep{dabney2021value, lyle2021effect, farebrother2022protovalue}.

While these tasks have mainly been trained by bootstrapping, a precise characterization of the representations they learn is lacking. This paper aims to fill this gap. 
We study the representations learnt by TD learning when training auxiliary tasks consisting in predicting the expected return of a fixed policy for several cumulant functions (\cref{sec:which}).  More generally, this analysis informs bootstrapped representations arising from algorithms such as Q-learning \citep{watkins1992q}, n-step Q-learning \citep{hessel18rainbow, kapturowski2019recurrent, schwarzer2020data} or Retrace \citep{munos2016safe}. Our key insight is that the way we train these value functions, for instance by TD learning, Monte Carlo or residual gradient, influences the resulting features. In particular, we show that when trained by TD learning, these features converge to the top-$d$ real invariant subspace of the transition matrix $P^\pi$, when it exists (\cref{thm:TDrepresentation}). We present an empirical evaluation that supports our theoretical characterizations and show the importance of the choice of a learning rule to learn the value function in \cref{sec:experiments_bootstrap}.

In \cref{sec:what}, we characterise the goodness of these representations by quantifying the approximation error of a linear prediction of the value function from these frozen representations in the TD learning and batch Monte Carlo settings (\cref{sec:which_cumulants}). We find that to minimize this error, the cumulants need to depend on the dynamics of the environment but in a different way whether we learn the main value function by batch Monte Carlo or TD learning. Then, we show random cumulants which have been used in the literature \citep{lyle2021effect, farebrother2022protovalue} can be good pseudo-reward functions for some particular structures of the successor representation \citep{dayan1993improving} by providing an error bound that arises from sampling a small number of random pseudo-reward functions (\cref{sec:boundrandomcumulants}).

Finally, we find that one way to improve this bound is to sample pseudo-reward functions which depend on the dynamics of the environment and inspired by this observation, we propose a novel auxiliary task method with adaptive cumulants and show that the resulting pretrained features outperform training from scratch on the Four Rooms and Mountain Car domains \cref{sec:pretraining}.

\section{Background}
\label{sec:background}
We consider a Markov decision process (MDP) $\mdp = \langle\sspace, \aspace, \rew, \prob, \gamma\rangle$ \citep{puterman1994markov} with finite state space $\sspace$, finite
set of actions $\aspace$, transition kernel $\prob : \sspace \times \aspace \to \mathscr{P}(\sspace)$, deterministic reward function $\rew : \sspace \times \aspace \to [-\Rmax, \Rmax]$, and discount factor $\gamma \in [0, 1)$.
A stationary policy $\pi:\sspace \rightarrow \mathscr{P}(\aspace)$ is a mapping from states to distributions over actions, describing a particular way of interacting with the environment. We denote the set of all policies by $\Pi$. We write $\Ppi: \sspace \rightarrow \mathscr{P}(\sspace)$ the transition kernel induced by a policy $\pi \in \Pi$
\begin{equation*}
    \Ppi(s, s') = \sum_{a \in \aspace} \prob(s, a)(s') \pi(a \cbar s) 
\end{equation*}
and $r_\pi: \sspace \to [-\Rmax, \Rmax]$ the expected reward function
\begin{align*}
r_\pi(s) = \E_\pi[\rew({S_0},{A_0}) \cbar S_0 =s, A_0 \sim \pi(\cdot \cbar S_0)].    
\end{align*}

For any policy $\pi \in \Pi$, the value function $V^\pi(s)$ measures the expected discounted sum of rewards received when starting from state $s\in\sspace$ and acting according to $\pi$:
\begin{equation*}
	V^\pi(s) := \expect_{\pi, \prob} \left[ \sum_{t = 0}^\infty \gamma^t \rew(S_t,A_t) \cbar S_0 = s, A_t \sim \pi(\cdot \cbar S_t) \right] .
\end{equation*}
It satisfies the Bellman equation \citep{bellman1957dynamic}
\begin{align*}
    V^\pi(s) = r_\pi(s) + \gamma \E_{S' \sim \Ppi(\cdot| s)}[ V^\pi (S^\prime)] ,
\end{align*}
which can be expressed using vector notation
\citep{puterman1994markov}. Assuming that there are $S = |\sspace|$ states and viewing $\rpi$ and $V^\pi$ as vectors in $\rR^S$ and $\Ppi$ as an $ \rR^{S \times S}$ transition matrix, we have
\begin{equation*}
    V^\pi = \rpi + \gamma \Ppi V^\pi = (I - \gamma \Ppi)^{-1} \rpi.
\end{equation*}
We are interested in approximating the value function $V^\pi$ using a linear combination of features \citep{yu2009basis, levine17shallow, bellemare2019geometric}. We call the mapping $\phi: \sspace \rightarrow \mathbb{R}^d$ a \emph{state representation}, where $d \in \rN^+$. Usually, we are interested in reducing the number of parameters needed to approximate the value function and have $d \ll S$. Given a feature vector $\phi(s)$ for a state $s \in \sspace$ and a weight vector $w \in \rR^d$,  the value function approximant at $s$ can be expressed as
\begin{equation*}
    \vfapprox (s) = \phi(s)^\top w.
\end{equation*}
We write the \textit{feature matrix} $\Phi \in \rR^{S \times d}$ whose rows correspond to the per-state feature vectors $(\phi(s), s \in \sspace)$. This leads to the more concise value function approximation
\begin{equation*}
    \vfapprox = \Phi w .
\end{equation*}
In the classic linear function approximation literature, the feature map $\phi$ is held fixed, and the agent adapts only the weights $w$ to attempt to improve its predictions. By contrast, in deep reinforcement learning, $\phi$ itself is parameterized by a neural network and is typically updated alongside $w$ to improve predictions about the value function.

\looseness=-1 We measure the accuracy of the linear approximation $\vfapprox$ in terms of the squared $\xi$-weighted $l_2$ norm, for $\xi \in \mathscr{P}(\sspace)$, \footnote{We assume that $\xi(s)>0$ for all states $s \in \sspace$.}
\begin{align*}
    \|\vfapprox - V^\pi\|^2_{\xi} = \sum_{s \in \sspace} \xi(s) (\phi(s)^\T w - V^\pi(s))^2 .
\end{align*}
The $\xi$-weighted norm describes the importance of each state.

\subsection{Auxiliary Tasks}
In deep reinforcement learning, the agent can use its representation $\phi$ to make additional predictions on a set of $T$ auxiliary task functions $\{ \psi_t \in \rR^S\}_{t \in \{1, ..., T \}}$ where each $\psi_t$ maps states to real values \citep{jaderberg17reinforcement, bellemare2019geometric, dabney2021value}.
These predictions are used to refine the representation itself. We collect these targets into an \textit{auxiliary task matrix} $\Psi \in \rR^{S \times T}$ whose rows are $\psi(s)= [\psi_1(s), ..., \psi_T(s)]$. We are interested in the case of linear task approximation
\begin{align*}
    \hat{\Psi} = \Phi W,
\end{align*}
where $W \in \rR^{d \times T}$ is a weight matrix, and want to choose $\Phi$ and $W$ such that $\hat{\Psi} \approx \Psi^\pi$.
In this paper, we consider a variety of auxiliary tasks that ultimately involve predicting the value functions of auxiliary \textit{cumulants}, also referred to as general value functions \citep[GVFs;][]{sutton2011horde}.
By construction, these tasks can be decomposed into a non-zero cumulant function $g: \sspace \rightarrow \R^T$, mapping each state to $T$ real values,
and an expected discounted next-state term when acting according to $\pi$
\begin{align*}
    \psi^\pi(s) = g(s) + \gamma \E_{S' \sim P_\pi(\cdot|s)} [\psi^\pi(S')] .
\end{align*}
In matrix form, this recurrence can be expressed as follows
\begin{align*}
     \Psi^\pi = G + \gamma P_\pi \Psi^\pi = (I - \gamma P^\pi)^{-1}G,
\end{align*}
where $G \in \R^{S \times T}$ is a \textit{cumulant matrix} whose columns correspond to each pseudo-reward vector.
An example of a family of auxiliary tasks following this structure is the successor representation (SR) \citep{dayan1993improving}. The SR encodes a state in terms of the expected discounted time spent in other states and satisfies the following recursive form
\begin{align*}
\psi^{\pi}\left(s, s^{\prime \prime}\right)=\mathbb{I}\left[s=s^{\prime \prime}\right]+\gamma \mathbb{E}_{S' \sim P_\pi(\cdot|s)}\left[\psi^\pi\left(S', s^{\prime \prime}\right)\right] ,
\end{align*}
for all $s^{\prime \prime} \in \sspace$.
The SR is a collection of value functions associated with the cumulant matrix $G=I$. Here we focus our analysis in its tabular form, noting that it can be extended to larger state spaces in a number of ways \citep{barreto2017successor,janner2020gamma,blier2021learning,thakoor2022generalised,farebrother2022protovalue}.

\subsection{Monte Carlo Representations}
\looseness=-1 To understand how auxiliary tasks shape representations, we start by presenting the simple case where the values of auxiliary cumulants are predicted in a supervised way. Here, the targets $\Psi^\pi = (I - \gamma P^\pi)^{-1}G$ are obtained by Monte Carlo rollouts, that is using the fixed policy to perform roll-outs and collecting the sum of rewards. The goal is to minimize the loss below
\begin{align*}
\cLsup(\Phi, W)= \min_{W \in \R^{d \times T}}  \| \Xi^{1/2}(\Phi W - \Psi^\pi)\|_F^2.
\end{align*}

\looseness=-1 This method results in the network’s representation $\Phi$, assuming a linear, fully-connected last layer, corresponding to the $k$ principal components of the auxiliary task matrix $\Psi^\pi$ if the network is other unconstrained \citep{bellemare2019geometric}.

\begin{restatable}[Monte Carlo representations]{proposition}{SL}
\label{prop:SL}
\looseness=-1 If $\rank(\Psi^\pi) \geq d \geq 1$,
all representations spanning the top-$d$ left singular vectors of $\Psi^\pi$ with respect to the inner product $\langle x, y \rangle_\Xi$ are global minimizers of $\cLsup$ and can be recovered by stochastic gradient descent.
\end{restatable}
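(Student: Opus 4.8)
The plan is to reduce the weighted low-rank approximation problem to the standard unweighted one by whitening, apply the Eckart--Young--Mirsky / Ky Fan theorem, and then translate the resulting subspace characterization back through the inner product $\langle \cdot, \cdot \rangle_\Xi$.

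First I would absorb the weighting into the data matrices: set $A := \Xi^{1/2}\Psi^\pi$ and $B := \Xi^{1/2}\Phi$, so that $\cLsup(\Phi) = \min_W \|A - BW\|_F^2$. Since $\xi(s) > 0$ for every $s$, the diagonal matrix $\Xi^{1/2}$ is invertible, so for a fixed $\Phi$ with column space $\mathcal{U} := \mathrm{col}(\Phi)$, the product $BW$ ranges over exactly those matrices whose columns lie in $\tilde{\mathcal{U}} := \Xi^{1/2}\mathcal{U}$ as $W$ ranges over $\R^{d \times T}$. Solving the inner least-squares problem column by column shows that $\min_W \|A - BW\|_F^2 = \|(I - P_{\tilde{\mathcal{U}}})A\|_F^2$, where $P_{\tilde{\mathcal{U}}}$ is the ordinary orthogonal projector onto $\tilde{\mathcal{U}}$. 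A key observation is that the value depends on $\Phi$ only through the subspace $\tilde{\mathcal{U}}$, which is precisely what underlies the ``all representations spanning $\ldots$'' phrasing: any two feature matrices with the same column space yield the same loss.

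Next I would optimize over the choice of subspace. Using that $I - P_{\tilde{\mathcal{U}}}$ is a symmetric idempotent and the cyclic property of the trace, $\|(I - P_{\tilde{\mathcal{U}}})A\|_F^2 = \|A\|_F^2 - \mathrm{tr}(P_{\tilde{\mathcal{U}}} A A^\top)$, so minimizing the residual is equivalent to maximizing $\mathrm{tr}(P_{\tilde{\mathcal{U}}} A A^\top)$ over all $d$-dimensional subspaces $\tilde{\mathcal{U}}$. By Ky Fan's theorem (the subspace form of Eckart--Young--Mirsky) this trace is maximized, with value $\sum_{i=1}^d \sigma_i^2$ and residual $\sum_{i>d}\sigma_i^2$, when $\tilde{\mathcal{U}}$ is the span of the top-$d$ left singular vectors of $A$; the hypothesis $\rank(\Psi^\pi) \geq d$ ensures these carry positive singular values so the optimum is non-degenerate. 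Finally I would translate back: writing the standard SVD $A = \Xi^{1/2}\Psi^\pi = \sum_i \sigma_i \tilde{u}_i v_i^\top$, the vectors $u_i := \Xi^{-1/2}\tilde{u}_i$ satisfy $\langle u_i, u_j\rangle_\Xi = \tilde{u}_i^\top \tilde{u}_j = \delta_{ij}$ and are exactly the $\Xi$-weighted left singular vectors of $\Psi^\pi$. Hence the optimal $\mathrm{col}(\Phi) = \Xi^{-1/2}\tilde{\mathcal{U}}$ is the span of the top-$d$ weighted left singular vectors, establishing the optimality claim. For the stochastic-gradient statement, I would invoke the benign landscape of the factorized map $(\Phi, W) \mapsto \|\Xi^{1/2}(\Phi W - \Psi^\pi)\|_F^2$: as in the PCA-style analysis of \citet{bellemare2019geometric}, every local minimum of such a matrix-factorization objective is global and the remaining critical points are strict saddles, so under standard step-size and noise conditions SGD escapes the saddles and converges to a global minimizer, which by the above spans the top-$d$ weighted left singular subspace.

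The step I expect to be the main obstacle is the careful bookkeeping of the weighted geometry: verifying that the ``weighted SVD'' named in the statement is precisely $\Xi^{-1/2}$ applied to the ordinary SVD of $\Xi^{1/2}\Psi^\pi$, and handling the boundary case $\sigma_d = \sigma_{d+1}$, where the maximizing subspace is no longer unique and the set of global minimizers correspondingly enlarges. The SGD portion is comparatively routine once the no-spurious-minima landscape result is invoked rather than reproved.
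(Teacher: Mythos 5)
Your proposal is correct and follows essentially the same route as the paper's proof: both reduce the inner minimization over $W$ to the orthogonal-projection residual $\| P^\perp_{\Xi^{1/2}\Phi} \Xi^{1/2}\Psi^\pi\|_F^2$ (your whitening step is exactly this), identify the minimizing subspaces via Eckart--Young (your Ky Fan trace form is the standard subspace version the paper invokes implicitly when writing the minimizer set as $\{\Phi = F_d M,\ M \in GL_d(\R)\}$), and settle the SGD claim by appealing to the benign no-spurious-minima, strict-saddle landscape of bilinear matrix factorization. Your write-up is merely more explicit than the paper's---in particular in verifying that the $\Xi$-weighted left singular vectors of $\Psi^\pi$ are $\Xi^{-1/2}$ times the ordinary left singular vectors of $\Xi^{1/2}\Psi^\pi$, and in flagging the degenerate case $\sigma_d = \sigma_{d+1}$---but there is no substantive difference in approach.
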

In large environments, it is not practical to collect full trajectories to estimate $\Psi^\pi$. Instead, practitioners learn them by bootstrapping \citep{sutton1998reinforcement}.
\subsection{Temporal Difference Learning with a Deep Network}
\label{back:td}
Temporal difference \citep[TD;][]{sutton1988learning} is the method of choice for these auxiliary predictions. The main idea of this approach is \textit{bootstrapping} \citep{sutton1998reinforcement}. It consists in using the current estimate of the auxiliary task function to generate some targets replacing their true value $\Psi^\pi$ in order to learn a new approximant of the auxiliary task function. In this paper, we consider one-step temporal difference learning where we replace the targets by a one-step prediction from the currently approximated auxiliary task function. The analysis can easily be extended to n-step temporal difference learning; see \cref{app:nstep} for further details.
In deep reinforcement learning, both the representation $\phi$ and the weights $W$ are learnt simultaneously by minimizing the following loss function
\begin{align*}
\cLtd(\phi, W)=  \mathop{\E}_{\substack{s \sim \xi \\ s'\sim \Ppi(\cdot|s)}} \Big[\phi(s)W - \SG \big(g(s) +\gamma \phi(s^\prime)W\big)\Big]^2
\end{align*}
where $\SG$ denotes a \textit{stop gradient} and means that $\phi$ and $W$ are treated as a constant when taking the gradient from automatic differentiation tools \citep{bradbury2018jax, abadi2016tensorflow, paszke2017automatic}.
Written in matrix form, we have
\begin{align*}
\cLtd(\Phi, W)=    \|(\Xi)^{\frac{1}{2}}(\Phi W - \SG(G +\gamma P^\pi \Phi W))\|_F^2
\end{align*}
\looseness=-1 Here, $\Xi \in \R^{S \times S}$ is a diagonal matrix with elements $\{ \xi(s): s \in \sspace \}$ on the diagonal. For clarity of exposition, we express this loss with universal value functions but the analysis can be extented to state-action values at the cost of additional complexity.
The idea is to reduce the mean squared error between the approximant $\psifapprox$ and the target values by stochastic gradient descent (SGD).
Taking the gradient of $\cL$ with respect to $\Phi$ and $W$, we obtain the \textit{semi-gradient} update rule
\begin{align}
\label{eq:tdupdaterule}
    \Phi &\gets \Phi - \alpha \Xi \left((I - \gamma P^\pi)\Phi W - G  \right) W^\top \nonumber \\
    W &\gets W - \alpha \Phi^\T \Xi \left((I - \gamma P^\pi)\Phi W - G \right)
\end{align}
for a step size $\alpha$.
\looseness=-1 Because the values of the targets change over time, the loss $\cL$ does not have a proper gradient field \citep{dann2014policy} except in some particular cases \citep{barnard1993temporal, ollivier2018approximate} and hence classic analysis of stochastic gradient descent \citep{bottou2018optimization} does not apply.

\section{Bootstrapped Representations}
\label{sec:which}

We now study the $d$-dimensional features that arise when performing value estimation of a fixed set of cumulants and how the choice of a learning method such as TD learning affects the learnt representations.
Our first result characterizes representations that bootstrap themselves. We assume that the features $\Phi$ are updated in a tabular manner under the dynamics in \cref{eq:tdupdaterule}.
To simplify the presentation, we now make the following invertibility assumption.
\begin{assumption}
\label{assump:distribution}
We assume that $\Phi^\T \Xi (I - \gamma P^\pi) \Phi$ is invertible for any full rank representation $\Phi \in \R^{S \times d}$.
\end{assumption}
This standard assumption is for instance verified when $\xi$ is the stationary distribution over states under $\pi$  of an aperiodic, irreducible Markov chain \citep[see e.g.][]{sutton2016emphatic}.

An interesting characterization of the dynamical system in \cref{eq:tdupdaterule} is its set of critical points. For a given $\Phi$, we write
\begin{align*}
 \Wtd \in \{ W \in \R^{d \times T}| \nabla_W \cLtd(\Phi, W) = 0 \}.
\end{align*}
\looseness=-1 Using classic linear algebra, we find that the weights $W_{\mathrm{TD}}$ obtained at convergence correspond to the LSTD solution \citep{bradtke1996linear, boyan2002technical, zhang2021breaking}
\begin{align*}
   \Wtd =   \left ( \Phi^\T \Xi (I - \gamma P^\pi) \Phi \right)^{-1}  \Phi^\T \Xi G.
\end{align*}
A key notion for our analysis is the concept of \textit{invariant subspace} of a linear mapping.

\begin{definition}[\citeauthor{gohberg2006invariant}, \citeyear{gohberg2006invariant}]
A representation $\Phi \in \R^{S \times d}$ spans a real invariant subspace of a linear mapping $M: \sspace \rightarrow \R^{S}$ if the column span of $\Phi$ is preserved by $M$, that is in matrix form
\begin{align*}
    \Span(M \Phi) \subseteq \Span(\Phi).
\end{align*}
\end{definition}
For instance, any real eigenvector of $M$ generates one of its one-dimensional real invariant subspaces.

We are now equipped with the tools to enumerate the set of \textit{critical representations} $\{ \Phi \in \R^{S \times d} \cbar \nabla_\Phi \cLtd(\Phi,  W_\Phi^{\mathrm{TD}}) = 0 \}$ in the lemma below.

\begin{restatable}[Critical representations for TD]{lemma}{criticalpoints}
\label{criticalpoints}
All full rank representations which are critical points to $\mathcal{L}_{\mathrm{aux}}^{\mathrm{TD}}$ span real invariant subspaces of $(I - \gamma P^\pi)^{-1} GG^\T \Xi$, that is $\Span((I - \gamma P^\pi)^{-1} GG^\T \Xi \Phi) \subseteq \Span(\Phi)$. 
\end{restatable}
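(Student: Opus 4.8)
The plan is to start from the stationarity condition for $\Phi$ at the critical weights $\Wtd$ and reduce it, by stripping invertible factors from both sides, to the desired invariant-subspace identity.

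First I would read off the semi-gradient of $\cLtd$ with respect to $\Phi$. Treating the bootstrap target under $\SG$ as constant, the update in \cref{eq:tdupdaterule} gives $\nabla_\Phi \cLtd(\Phi, W) = \Xi\big((I - \gamma P^\pi)\Phi W - G\big)W^\top$. A full-rank $\Phi$ is critical exactly when this vanishes at $W = \Wtd$. Writing $A := I - \gamma P^\pi$, $M := \Phi^\top \Xi A \Phi$ (invertible by \cref{assump:distribution}), and $N := \Phi^\top \Xi G$, so that $\Wtd = M^{-1} N$, the critical condition reads
\begin{equation*}
\Xi\big(A \Phi \Wtd - G\big)(\Wtd)^\top = 0.
\end{equation*}

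The key observation is that $\Xi$ and $A$ are both invertible: $\Xi$ is diagonal with positive entries $\xi(s)>0$, and $A = I - \gamma P^\pi$ is invertible since $\gamma \in [0,1)$ and $P^\pi$ is stochastic (this is the same $(I-\gamma P^\pi)^{-1}$ appearing throughout). Left-multiplying the critical condition by the invertible $A^{-1}\Xi^{-1}$ leaves it equivalent to
\begin{equation*}
\big(\Phi \Wtd - A^{-1} G\big)(\Wtd)^\top = 0.
\end{equation*}
Substituting $\Wtd = M^{-1} N$, so that $(\Wtd)^\top = N^\top M^{-\top}$, and right-multiplying by the invertible $M^\top$ to clear the rightmost factor, this becomes $\Phi M^{-1} N N^\top = A^{-1} G N^\top$.

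Finally I would identify the two sides. Since $\Xi$ is symmetric, $G N^\top = G G^\top \Xi \Phi$, so the right-hand side is $(I - \gamma P^\pi)^{-1} G G^\top \Xi \Phi$, while the left-hand side is $\Phi C$ with $C := M^{-1} N N^\top \in \R^{d \times d}$. Hence $(I - \gamma P^\pi)^{-1} G G^\top \Xi \Phi = \Phi C$, which is precisely $\Span\big((I - \gamma P^\pi)^{-1} G G^\top \Xi \Phi\big) \subseteq \Span(\Phi)$. The main obstacle, beyond bookkeeping of transposes, is to verify that each cancellation is legitimate: the left cancellation uses invertibility of $A$ and $\Xi$, and the right cancellation uses invertibility of $M$, which is exactly where the full-rank hypothesis and \cref{assump:distribution} enter. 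One must also note that we do not cancel the (generally non-square) factor $N^\top$ itself — only the invertible $M^{-\top}$ is stripped, while $N N^\top$ is retained and recombined into $G G^\top \Xi \Phi$.
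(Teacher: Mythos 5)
Your proof is correct and takes essentially the same route as the paper's: compute the semi-gradient of $\cLtd$ in $\Phi$, substitute the LSTD weights $\Wtd$, and strip the invertible factors $\Xi$, $(I-\gamma P^\pi)$, and $\left(\Phi^\T \Xi (I-\gamma P^\pi)\Phi\right)^{-\T}$ to arrive at exactly the identity $(I-\gamma P^\pi)^{-1}GG^\T\Xi\Phi = \Phi C$ that appears (in projection form) in the paper's penultimate line. The only cosmetic difference is that the paper packages this identity as a fixed-point equation for the oblique projection $\Pi_{L^\T\Xi\Phi}$ and invokes \cref{lemma:obliqueprojection}, whereas you read the span inclusion off directly from $\Phi C$; the underlying algebra is identical.
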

\begin{proof}
The proof is given in \cref{app:which} and relies on the view of LSTD as an oblique projection \citep{scherrer2010should}.
\end{proof}
In the particular case of an identity cumulant matrix and a uniform distribution over states, this set can be more directly expressed as the representations invariant under the transition dynamics.
\begin{restatable}[]{corollary}{TDrepresentationGI}
If $G=I$ and $\Xi=I / S$, all full rank representations which are critical points to $\mathcal{L}_{\mathrm{aux}}^{\mathrm{TD}}$ span real invariant subspaces of the invariant subspaces of $P^\pi$.
\end{restatable}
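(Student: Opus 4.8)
The plan is to reduce the claim to \cref{criticalpoints} by simplifying the matrix that appears there under the stated choices of $G$ and $\Xi$, and then to show that the resulting matrix has exactly the same invariant subspaces as $P^\pi$. First I would substitute $G = I$ and $\Xi = I/S$ into $(I - \gamma P^\pi)^{-1} G G^\T \Xi$ from \cref{criticalpoints}. Since $G G^\T = I$ and $\Xi = I/S$, this matrix collapses to $\tfrac{1}{S}(I - \gamma P^\pi)^{-1}$. The positive scalar $1/S$ plays no role in invariant subspaces, because $\Span\!\big(\tfrac{1}{S}(I - \gamma P^\pi)^{-1}\Phi\big) = \Span\!\big((I - \gamma P^\pi)^{-1}\Phi\big)$, so it suffices to characterize the invariant subspaces of $A := (I - \gamma P^\pi)^{-1}$. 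By \cref{criticalpoints}, every full rank critical representation spans an invariant subspace of $A$, and the corollary follows once I show such a subspace is also invariant under $P^\pi$.

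The key step is the implication that any subspace invariant under $A$ is invariant under $P^\pi$. Let $V = \Span(\Phi)$ satisfy $A V \subseteq V$. Since $A = (I - \gamma P^\pi)^{-1}$ is invertible (for $\gamma \in [0,1)$ the spectral radius of $\gamma P^\pi$ is at most $\gamma < 1$, so $I - \gamma P^\pi$ is nonsingular) and $V$ is finite-dimensional, the restriction $A|_V : V \to V$ is injective, hence bijective; therefore $A V = V$ and consequently $A^{-1} V = V$. It follows that $(I - \gamma P^\pi) V = A^{-1} V \subseteq V$, and since $I$ preserves $V$ we get $\gamma P^\pi V = \big(I - (I - \gamma P^\pi)\big) V \subseteq V$. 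Dividing by $\gamma > 0$ yields $P^\pi V \subseteq V$, i.e.\ $\Span(P^\pi \Phi) \subseteq \Span(\Phi)$, which is precisely the statement that $\Phi$ spans a real invariant subspace of $P^\pi$.

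For completeness I would also record the converse inclusion, giving a full equivalence of invariant subspaces: if $P^\pi V \subseteq V$, then the Neumann expansion $(I - \gamma P^\pi)^{-1} = \sum_{k=0}^{\infty} \gamma^k (P^\pi)^k$ (convergent since $\gamma < 1$ and $P^\pi$ is row-stochastic) preserves $V$, because each partial sum maps the closed subspace $V$ into itself and $V$ is closed. The main obstacle is the reverse direction used above: one cannot treat $(I - \gamma P^\pi)^{-1}$ as a polynomial in $P^\pi$ and must instead exploit its invertibility together with finite-dimensionality to upgrade the inclusion $A V \subseteq V$ to the equality $A V = V$, which is what licenses passing from $A$ to $A^{-1} = I - \gamma P^\pi$ and hence to $P^\pi$. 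One minor caveat worth flagging is the degenerate case $\gamma = 0$, where $A = I$ makes every subspace invariant and the characterization is vacuous; the statement is intended for $\gamma \in (0,1)$.
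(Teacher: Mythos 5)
Your proof is correct and takes essentially the same route as the paper's: both reduce to \cref{criticalpoints} with the matrix collapsing (up to a harmless scalar) to $(I - \gamma P^\pi)^{-1}$, then use invertibility of $I - \gamma P^\pi$ together with finite-dimensionality to upgrade the inclusion $\Span((I-\gamma P^\pi)^{-1}\Phi) \subseteq \Span(\Phi)$ to an equality, from which $P^\pi$-invariance follows by rearranging and dividing by $\gamma$. Your cleaner subspace-level phrasing, the Neumann-series converse, and the explicit $\gamma > 0$ caveat (which the paper's division by $\gamma$ silently assumes) are minor polish on the same argument.
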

Similarly to how the top principal components of a matrix explain most of its variability \citep{hotelling1933analysis}, these critical representations are not equally informative of the dynamics of the environment.
This motivates the need to understand the behavior of the updates from \cref{eq:tdupdaterule}. To ease the analysis, we assume that the weights $W$ have converged perfectly to $\Wtd$ at each time step \citep{lan2022novel} and consider the following continuous-time dynamics.   
\begin{align}
    \frac{d}{dt} \Phi = - \nabla_\Phi \mathcal{L}(\Phi, \Wtd) = - F(\Phi),
    \label{eq:tdode}
\end{align}
where:
\begin{align*}
    F(\Phi) := 2 \Xi \left( (I - \gamma P^\pi) \Phi \Wtd - G \right)(\Wtd)^\top.
\end{align*}
We can characterize the behavior of the above critical representations in terms of the notion of \textit{top-$d$ invariant subspace} of $P^\pi$.
\begin{definition}[Top-$d$ invariant subspace]
   Let $\lambda_1, \dots, \lambda_{S}$ be the (possibly complex) eigenvalues of a linear mapping $M: \sspace \rightarrow \R^{S}$, ordered by decreasing real part $\Re(\lambda_i) \geq \Re(\lambda_{i+1})$, $i \in \{1, \dots, S\}$.
Assume that $\Re(\lambda_{d}) > \Re(\lambda_{d+1})$.
A representation $\Phi \in \R^{S \times d}$ spans a top-$d$ invariant subspace of $M$ if (i) it is an $M$-invariant subspace and (ii) 
$\Span(\Phi) \cap \Span(v_j) = \{0\}$ for all $j \in \{d+1,\dots, S\}$, where
$v_j$ is a corresponding eigenvector for eigenvalue $\lambda_j$.
\end{definition}
Our key result is that certain non top-$d$ invariant subspaces of $P^\pi$ correspond to unstable critical points of the ordinary differential equation \ref{eq:tdode}.
\begin{restatable}[TD representations]{theorem}{TDrepresentation}
\label{thm:TDrepresentation}
    Assume $P^\pi$ real diagonalisable, $G=I$ and a uniform distribution $\xi$ over states. Let $(v_1, \dots, v_S)$ denote the eigenvectors of $P^\pi$ corresponding to the eigenvalues
$(\lambda_1, \dots, \lambda_S)$.
    Under the dynamics in \cref{eq:tdode}, all real invariant subspaces of dimension $d$ are critical points, and any real non top-$d$ invariant subspace $\Phi = (v_{i_1}, \dots, v_{i_d})$ is unstable for gradient descent.
\end{restatable}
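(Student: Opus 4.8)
The plan is to first reduce the vector field $F$ under the stated hypotheses, read off the critical points (which already yields the first claim), and then prove instability by linearising the flow \cref{eq:tdode} at a non top-$d$ invariant subspace and exhibiting a single transverse perturbation with strictly positive growth rate. With $G=I$ and $\Xi=I/S$ the least-squares weights collapse to $\Wtd=(\Phi^\top(I-\gamma P^\pi)\Phi)^{-1}\Phi^\top$, so writing $A:=I-\gamma P^\pi$ and $\Pi_\Phi:=A\Phi(\Phi^\top A\Phi)^{-1}\Phi^\top$ for the oblique projector onto $\Span(A\Phi)$ along $\Span(\Phi)^\perp$ (the LSTD view underlying \cref{criticalpoints}), the field becomes $F(\Phi)=(2/S)(\Pi_\Phi-I)\Phi(\Phi^\top A^\top\Phi)^{-1}$. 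As $(\Phi^\top A^\top\Phi)^{-1}$ is invertible, $F(\Phi)=0$ iff $(\Pi_\Phi-I)\Phi=0$, i.e. $\Span(\Phi)=\Span(A\Phi)$, which (with $A$ invertible and $\Phi$ full rank) is equivalent to $\Span(P^\pi\Phi)\subseteq\Span(\Phi)$. Hence every full-rank $\Phi$ spanning a $d$-dimensional real $P^\pi$-invariant subspace is a critical point, proving the first assertion; for $P^\pi$ real diagonalisable these are exactly the spans $\Phi^*=(v_{i_1},\dots,v_{i_d})$.

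For instability, fix a non top-$d$ invariant subspace $\Phi^*=(v_{i_1},\dots,v_{i_d})$ with index set $\mathcal I=\{i_1,\dots,i_d\}\neq\{1,\dots,d\}$. Since $|\mathcal I|=d$ and the gap $\Re(\lambda_d)>\Re(\lambda_{d+1})$ holds, $\mathcal I$ must omit some $j\le d$ and contain some $k>d$, so $\lambda_j>\lambda_k$. I will show $\Phi^*$ is an unstable equilibrium of $\dot\Phi=-F(\Phi)$ by producing an eigenvalue of the linearisation $-DF(\Phi^*)$ with strictly positive real part and invoking the standard linearisation (Lyapunov) instability criterion for autonomous ODEs. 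To separate genuine subspace motion from the gauge freedom $\Phi\mapsto\Phi R$, I parametrise neighbouring $d$-subspaces in the (bi-orthogonal) eigenbasis by the Grassmann chart $\Phi(C)=\Phi^*+\sum_{p\in\mathcal I^c}v_p c_p^\top$, with $C=0$ at $\Phi^*$, and reduce the flow to the transverse coordinates $C$; perturbations lying inside $\Span(\Phi^*)$ are gauge and do not move the subspace, contributing only null directions.

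Differentiating $F$ at the critical point, the term carrying $(\Phi^\top A^\top\Phi)^{-1}$ drops because $(\Pi_{\Phi^*}-I)\Phi^*=0$, leaving $DF(\Phi^*)[\delta\Phi]=(2/S)\{(D\Pi_{\Phi^*}[\delta\Phi])\Phi^*+(\Pi_{\Phi^*}-I)\delta\Phi\}(\Phi^{*\top}A^\top\Phi^*)^{-1}$; moreover at a critical point $\Pi_{\Phi^*}$ is the \emph{orthogonal} projector onto $\Span(\Phi^*)$, since its range is $\Span(A\Phi^*)=\Span(\Phi^*)$ and its kernel is $\Span(\Phi^*)^\perp$. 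Evaluating on the transverse rank-one mode $\delta\Phi=v_j e_c^\top$ (where column $c$ of $\Phi^*$ is $v_k$) and using $Av_l=\mu_l v_l$, $A^\top u_l=\mu_l u_l$ with $\mu_l=1-\gamma\lambda_l>0$ and bi-orthogonality $u_m^\top v_l=\delta_{ml}$, the reduced linearisation of the flow is triangular in the eigen-mode basis $\{v_p e_q^\top:p\in\mathcal I^c,\ q\in\mathcal I\}$ with diagonal entries $\rho_{pq}$ whose sign, up to strictly positive scalars, is that of $\mu_q-\mu_p=\gamma(\lambda_p-\lambda_q)$. Its spectrum is therefore $\{\rho_{pq}\}$, and the mode $(p,q)=(j,k)$ gives $\rho_{jk}>0$ because $\lambda_j>\lambda_k$: a strictly expanding transverse direction that rotates $\Span(\Phi^*)$ towards $v_j$ and away from $v_k$, so $\Phi^*$ is unstable.

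The delicate point is the linearisation through both the matrix inverse $(\Phi^\top A^\top\Phi)^{-1}$ and the projector $\Pi_\Phi$, made harder by the non-normality of $P^\pi$: the $v_i$ are not orthogonal, so no single orthonormal diagonalisation is available and Gram factors such as $\Phi^{*\top}\Phi^*$ intrude, meaning the natural transverse directions must be measured through the left eigenvectors $u_p$ rather than by Euclidean projection. The core work is to verify that, once everything is expressed in the bi-orthogonal system $\{v_i\},\{u_i\}$, these Gram factors contribute only positive scalars and that the transverse block is indeed triangular with the stated diagonal, so that each growth rate has sign exactly $\operatorname{sign}(\lambda_p-\lambda_q)$; establishing this block-triangular structure (and that the $q$-columns decouple appropriately) is the main obstacle, after which the single mode $(j,k)$ certifies instability.
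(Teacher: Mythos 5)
Your overall architecture matches the paper's proof: reduce $F$ under $G=I$, $\Xi=I/S$, identify the critical points as exactly the $P^\pi$-invariant full-rank $\Phi$ via $\Span(A\Phi)=\Span(\Phi)$ (this part of your argument is correct and gives the first claim), observe that at criticality $L\Phi \Wtd$ is the orthogonal projector $P_{\Phi^*}$, perturb along an omitted eigendirection $v_j$, and extract instability from the sign of $\lambda_j-\lambda_k$. However, the step you yourself flag as ``the main obstacle'' is resolved incorrectly as proposed: the transverse linearisation is \emph{not} triangular in the basis $\{v_p e_q^\top\}$ with diagonal entries $\rho_{pq}$, and its spectrum is not $\{\rho_{pq}\}$. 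What is actually true (and what the paper's computation yields, once your raw mode $v_j e_c^\top$ is replaced by $\Delta = P^\perp_{\Phi^*} v_j u^\top$, so that $\Delta^\top \Phi^* = 0$ and the gauge directions are removed exactly rather than through your chart) is that each slice $\{P^\perp_{\Phi^*} v_p u^\top : u \in \R^d\}$, $p \in \mathcal{I}^c$, is invariant under the Jacobian, with the linearised $-F$ flow acting on it by right multiplication by
\begin{align*}
\gamma\,(\lambda_p I - \Lambda)\,(I-\gamma\Lambda)^{-1}(\Phi^{*\top}\Phi^*)^{-1}(I-\gamma\Lambda)^{-1}.
\end{align*}
The Gram factor $(\Phi^{*\top}\Phi^*)^{-1}$ couples all the $q$-columns within a slice; it is diagonal only when the $v_i$ are orthogonal (e.g.\ $P^\pi$ symmetric), which is precisely the case the theorem is not restricted to. Consequently your ``single mode $(j,k)$'' is not an eigenvector, and the claimed decoupling of the $q$-columns fails for non-normal $P^\pi$.

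The gap is fixable, and the fix is exactly where the paper does its remaining work: since the eigenvalues of $AB$ and $BA$ coincide, the slice matrix is similar to the symmetric matrix $(\Phi^{*\top}\Phi^*)^{-1/2}(I-\gamma\Lambda)^{-1}\gamma(\lambda_p I - \Lambda)(I-\gamma\Lambda)^{-1}(\Phi^{*\top}\Phi^*)^{-1/2}$, which is a congruence transform of the diagonal matrix $\gamma(\lambda_p I-\Lambda)$ (up to positive diagonal scaling). By Sylvester's law of inertia — or, as the paper does it, by the explicit bound $\lambda_{\max}(BAB) \geq \lambda_{\max}(A)/\lambda_{\max}(B^{-2})$ — this matrix has a strictly positive eigenvalue as soon as some $q \in \mathcal{I}$ satisfies $\lambda_q < \lambda_p$; taking $p=j \leq d$ omitted and $q = k > d$ included, the eigengap $\Re(\lambda_d) > \Re(\lambda_{d+1})$ supplies this, and choosing $u$ to be a corresponding left eigenvector makes $\Delta = P^\perp_{\Phi^*} v_j u^\top$ an exact eigenvector of the full Jacobian of $-F$ with positive eigenvalue, certifying instability without any chart reduction. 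So your sign bookkeeping and the choice of indices $(j,k)$ are right and the skeleton coincides with the paper's, but the asserted block-triangular structure must be abandoned in favour of this inertia/congruence argument.
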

\begin{figure}[t!]
  \centering
  \includegraphics[width=0.15\textwidth]{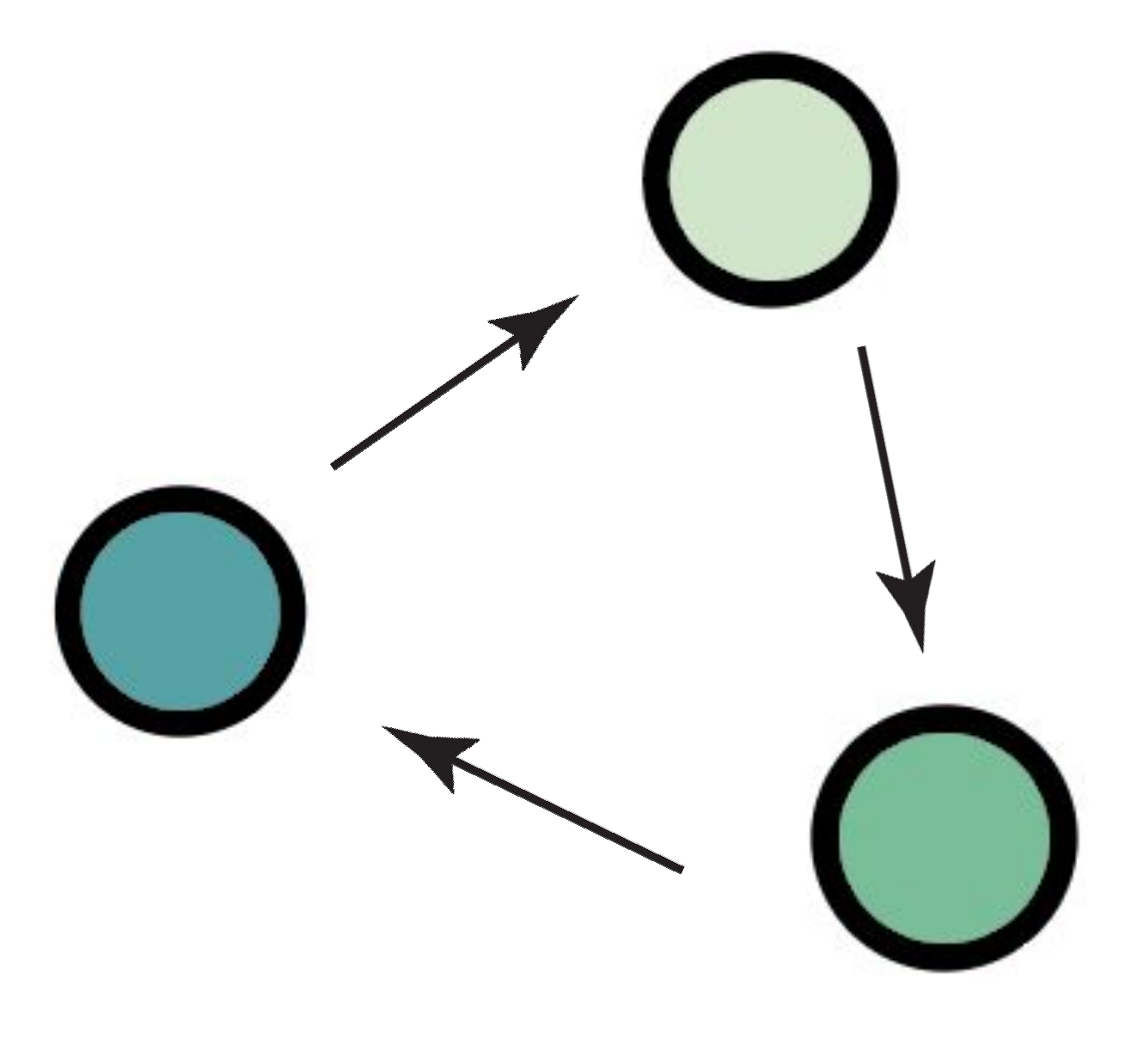}
  \includegraphics[width=0.3\textwidth]{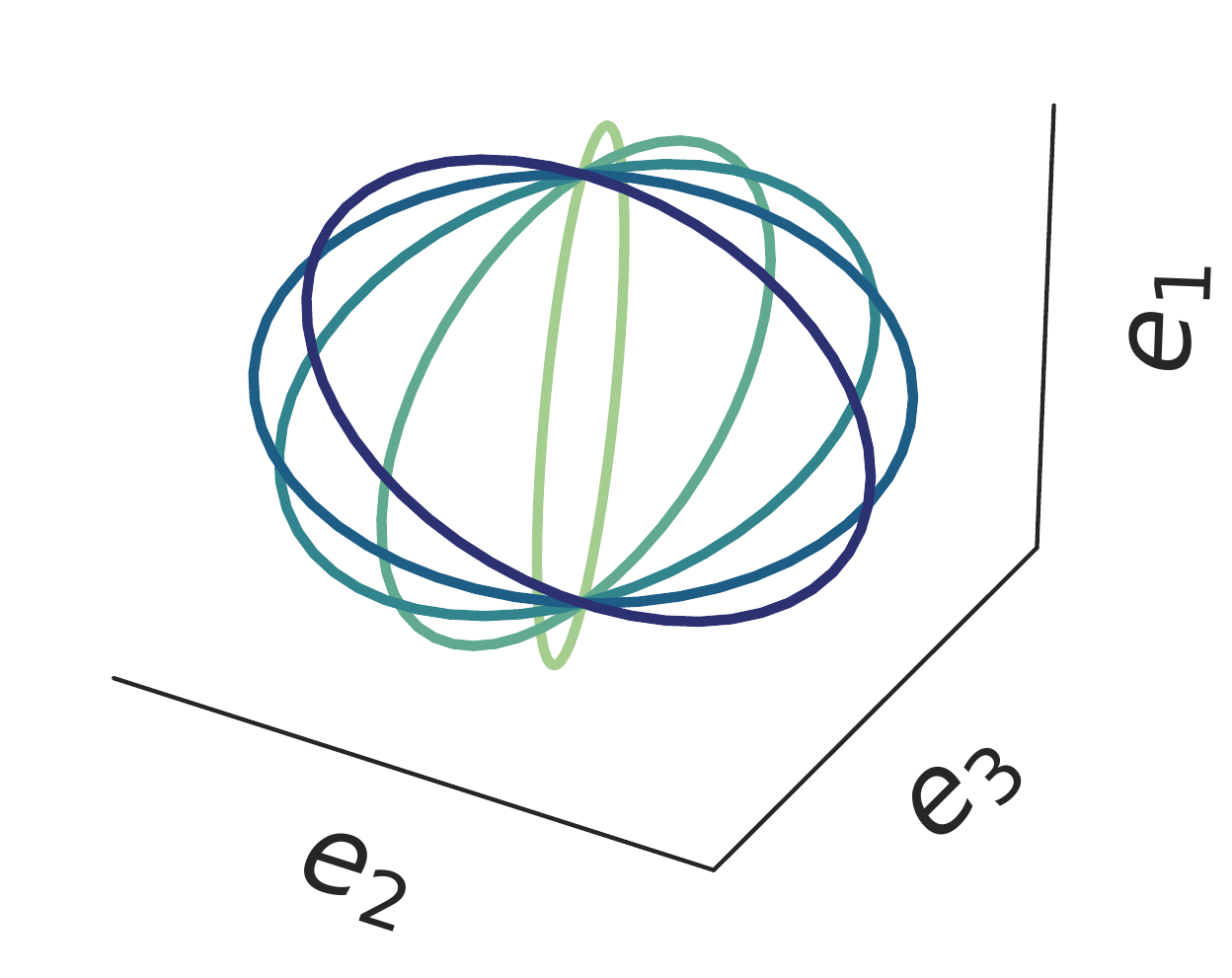}
  \caption{\looseness=-1 \textbf{Left}: A simple $3$-state MDP. \textbf{Right}: Five subspaces, each represented by a circle, spanned by $\Phi$ during the last training steps of gradient descent on $\cLtd$ for $d=2$.}
    \label{fig:3stateMDP}
  \end{figure} 

\looseness=-1 The result above takes a step toward establishing that the TD algorithm converges towards a real top-$d$ invariant subspace or diverges with probability 1. While real diagonalisable transition matrices always induce real invariant subspaces, complex eigenvalues do not guarantee their existence and in such a case, where there is no top-$d$
real invariant subspace, the representation learning algorithm \textit{does not converge}.
As an illustration, consider the three-state MDP depicted in \cref{fig:3stateMDP}, left, whose transition matrix is complex diagonalisable and given by 
\begin{align*}
    P^\pi = \begin{bmatrix} 0& 1 & 0 \\ 0 & 0 & 1\\ 1 & 0 & 0 \end{bmatrix} 
\end{align*}
\looseness=-1 Its eigenvalues are $\lambda_1=1$ associated to the real eigenvector $e_1$ and the complex conjugate pair $(\lambda_2, \overline{\lambda_2}) = (e^{2\pi i/3}, e^{-2\pi i/3})$, associated to the pair of real eigenvectors $(e_2, e_3)$. Hence, the real invariant subspaces of $P^\pi$ are $\{0\} ,\Span(e_1), \Span(e_2, e_3), \Span(e_1, e_2, e_3)$. Note that there is no 2-dimensional real invariant subspace containing the top eigenvector $e_1$. Consequently, the 2-dimensional representation learnt by gradient descent on the TD learning rule with $G=I$ does not converge and rotates in the higher dimensional subspace $\Span(e_1, e_2, e_3)$ (see \cref{fig:3stateMDP}, right).
\begin{table*}[!]
\begin{center}
\begin{small}
\begin{sc}
  \begin{tabular}{rlrl}
      \toprule
\textbf{Main Algorithm} & $l_1${-ball Optimal Representation}  & \textbf{Representation Loss} & Learnt Representation \\
    \midrule
 Batch MC & SVD $\left(\left(I-\gamma P^\pi\right)^{-1}\right)$& MC & SVD $\left(\left(I-\gamma P^\pi\right)^{-1}G\right)$\\
Residual & SVD $\left(\left(I-\gamma P^\pi\right)^{-1}\right) \Sigma_d$ & Residual & $\left(I-\gamma P^\pi\right)^{-1}$ SVD $\left(G \right)$  \\
TD& $\Phi^*_{\mathrm{TD}}$ &TD&  Inv $\left((I - \gamma P^\pi)^{-1}GG^\T \Xi \right)$\\
        \bottomrule
  \end{tabular}
  \end{sc}
\end{small}
\end{center}
  \caption{Different types of representation loss and their induced representations. The supervised targets $\Psi \in \R^{S \times T}$ are $(I - \gamma P^\pi)^{-1}G$. \textsc{SVD}($M$) denotes the top-$d$ left singular vectors of $M$, \textsc{Inv}($M$) the top-$d$ invariant subspace of $M$ and $\Sigma_d \in \R^{d \times d}$ the diagonal matrix with the top-$d$ singular values of $(I - \gamma P^\pi)^{-1}$ on its diagonal.}
  \label{tbl:loss-representation}
\end{table*}

To understand the importance of the stop-gradient in TD learning, it useful to study the representations arising from the minimization of the following loss function
\begin{align*}
\mathcal{L}_{\mathrm{aux}}^{\mathrm{res}}(\Phi, W)=  \|\Xi^{\frac{1}{2}}\left(\Phi W - (G +\gamma P^\pi \Phi W)\right)\|_F^2,
\end{align*}
which corresponds to residual gradient algorithms \citep{baird1995residual}.  While it has been remarked on before that the weights minimizing $\cLres(\Phi, W)$ for a fixed $\Phi$ differ from $\Wtd$ \citep[see \cref{lemma:optres};][]{lagoudakis2003least, scherrer2010should}, this objective function also has a different optimal representation

\begin{restatable}[Residual representations]{proposition}{residualrepresentation}
Let $d \in  \{1, ..., S\}$ and $F_d$ be the top $d$ left singular vectors of $G$ with respect to the inner product $\langle x, y \rangle_\Xi= y^\T \Xi x,$ for all $x, y \in \R^{S}$.
All representations spanning $(I - \gamma P^\pi)^{-1}F_d$ are global minimizers of $\mathcal{L}_{\mathrm{aux}}^{\mathrm{res}}$ and can be recovered by stochastic gradient descent.
\end{restatable}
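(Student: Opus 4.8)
The plan is to reduce the residual objective to the supervised (Monte Carlo) objective of \cref{prop:SL} through a fixed invertible linear change of variables, and then read off the minimizers directly. First I would rewrite the loss: since $\Phi W - (G + \gamma P^\pi \Phi W) = (I - \gamma P^\pi)\Phi W - G$, and since $M := I - \gamma P^\pi$ is invertible (because $\gamma < 1$ and the stochastic matrix $P^\pi$ has spectral radius $1$, so $\gamma P^\pi$ has spectral radius strictly below $1$), the residual loss becomes $\cLres(\Phi, W) = \|\Xi^{1/2}(M\Phi W - G)\|_F^2$.

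Next I would introduce the reparametrization $\tilde\Phi := M\Phi$. Because $M$ is invertible, $\Phi \mapsto M\Phi$ is a linear bijection that sends full-rank matrices to full-rank matrices and satisfies $\Span(\Phi) = M^{-1}\Span(\tilde\Phi)$. In these variables the loss reads $\|\Xi^{1/2}(\tilde\Phi W - G)\|_F^2$, which is exactly the supervised loss $\cLsup$ of \cref{prop:SL} with the target $\Psi^\pi$ replaced by the cumulant matrix $G$. Applying \cref{prop:SL} verbatim (with $G$ in place of $\Psi^\pi$, and $\rank(G) \geq d$) shows that the global minimizers in $\tilde\Phi$ are precisely those spanning the top-$d$ left singular vectors $F_d$ of $G$ with respect to $\langle \cdot, \cdot\rangle_\Xi$. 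Transforming back through $\Phi = M^{-1}\tilde\Phi$, the global minimizers in $\Phi$ span $M^{-1}F_d = (I - \gamma P^\pi)^{-1}F_d$, which is the desired characterization.

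For the claim that these representations are recovered by stochastic gradient descent, I would argue that the reparametrization conjugates the two optimization problems up to an invertible preconditioner. By the chain rule $\nabla_\Phi \cLres = M^\T \nabla_{\tilde\Phi}\cLsup|_{\tilde\Phi = M\Phi}$, so in the $\tilde\Phi$ coordinates the descent dynamics become a preconditioned gradient flow $\dot{\tilde\Phi} = -M M^\T \nabla_{\tilde\Phi}\cLsup$ with symmetric positive-definite preconditioner $MM^\T$. Since $MM^\T$ is invertible, the critical points coincide exactly with those of the Monte Carlo flow, and $\cLsup$ remains a Lyapunov function whose strict decrease along trajectories is unaffected by the preconditioner; hence the convergence guarantee of \cref{prop:SL} transfers to $\cLres$.

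The main obstacle is precisely this last step: one must confirm that precomposing with the fixed invertible map $M$ does not create spurious critical points or alter the stability classification of the existing ones. This is exactly what the positive-definiteness and invertibility of $MM^\T$ buys us---the stationary set is preserved, the saddle points of $\cLsup$ remain non-attracting under the preconditioned flow, and the only attracting set is the global-minimizer manifold, so the escaping-saddle argument underlying \cref{prop:SL} carries over unchanged. A secondary point of care is to interpret ``top-$d$ left singular vectors with respect to $\langle x, y\rangle_\Xi = y^\T \Xi x$'' as the generalized ($\Xi$-weighted) SVD consistently with \cref{prop:SL}, i.e.\ via the standard SVD of $\Xi^{1/2}G$.
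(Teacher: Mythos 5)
Your proof is correct and takes essentially the same route as the paper: both rewrite the residual loss as $\|\Xi^{1/2}((I-\gamma P^\pi)\Phi W - G)\|_F^2$ and characterize the minimizers via the $\Xi$-weighted best rank-$d$ approximation of $G$ --- you by invoking \cref{prop:SL} after the change of variables $\tilde\Phi = (I-\gamma P^\pi)\Phi$, the paper by redoing the projection/Eckart--Young computation directly on the transformed loss. Your preconditioned-flow justification of the SGD claim (the SPD preconditioner $MM^\T$ preserves critical points and strict saddles, e.g.\ by Sylvester's inertia argument) is, if anything, more careful than the paper's, which simply cites the escaping-saddle literature for the transformed bilinear problem.
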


While TD and Monte Carlo representations are in general different, in the particular case of symmetric transition matrices and orthogonal cumulant matrices, they are the same.
\begin{restatable}[Symmetric transition matrices]{corollary}{symmetrictransitions}
If a cumulant matrix $G \in \R^{S \times T}$ (with $T \geq S$) has unit-norm, orthogonal columns (e.g. $G=I$), the representations learnt from the supervised objective $\cLsup$ and the TD update rule $\mathcal{L}_{\mathrm{aux}}^{\mathrm{TD}}$ are the same for symmetric transition matrices $P^\pi$ under a uniform state distribution $\xi$.
\label{corollary:symmetric}
\end{restatable}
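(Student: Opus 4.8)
The plan is to show that under these hypotheses both learning rules identify the \emph{same} subspace, namely the span of the top-$d$ eigenvectors of $P^\pi$, by computing each representation explicitly and then matching them. The two facts I will lean on throughout are that $P^\pi$ symmetric makes every matrix in sight symmetric with real eigenvalues, and that $\Xi = I/S$ is a scalar multiple of the identity.

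First I would treat the TD side via \cref{criticalpoints} and \cref{thm:TDrepresentation}. Since $G$ has $T \geq S$ unit-norm orthogonal columns in $\R^{S}$, necessarily $T = S$ and $G$ is a square orthogonal matrix, so $GG^\T = I$; combined with $\Xi = I/S$ this collapses the critical matrix $(I - \gamma P^\pi)^{-1}GG^\T\Xi$ to $\tfrac{1}{S}(I - \gamma P^\pi)^{-1}$. Because $P^\pi$ is symmetric, this is a positive scalar multiple of a symmetric matrix, hence real diagonalisable, so \cref{thm:TDrepresentation} applies and the learnt representation spans its top-$d$ invariant subspace. For a symmetric matrix this subspace is exactly the span of the top-$d$ eigenvectors, and scaling by $1/S > 0$ changes neither eigenvectors nor their ordering, so the TD representation spans the top-$d$ eigenvectors of $(I - \gamma P^\pi)^{-1}$.

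Next I would treat the Monte Carlo side via \cref{prop:SL}: the minimizers span the top-$d$ $\Xi$-weighted left singular vectors of $\Psi^\pi = (I - \gamma P^\pi)^{-1}G$. I would reduce the weighted SVD to a standard one through the change of variables $\tilde\Phi = \Xi^{1/2}\Phi$; since $\Xi^{1/2} = \tfrac{1}{\sqrt S} I$, the weighted inner product is proportional to the standard one, so the weighted left singular vectors coincide with the ordinary left singular vectors of $\Psi^\pi$, i.e.\ the top-$d$ eigenvectors of $\Psi^\pi(\Psi^\pi)^\T$. Using $GG^\T = I$ and the symmetry of $(I-\gamma P^\pi)^{-1}$ then gives $\Psi^\pi(\Psi^\pi)^\T = (I - \gamma P^\pi)^{-2}$.

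Finally I would reconcile the two. The eigenvalues of $(I - \gamma P^\pi)^{-1}$ are $1/(1 - \gamma\mu_i)$, where the $\mu_i$ are the real eigenvalues of $P^\pi$; since $\gamma \in [0,1)$ and $\mu_i \in [-1,1]$ these are strictly positive, and $(I-\gamma P^\pi)^{-2}$ shares the same eigenvectors with eigenvalues $1/(1-\gamma\mu_i)^2$. Because the base eigenvalues are positive, squaring preserves their order, so the top-$d$ eigenvectors of $(I - \gamma P^\pi)^{-2}$ coincide with those of $(I - \gamma P^\pi)^{-1}$, matching the TD representation; both equal the span of the eigenvectors of $P^\pi$ for its $d$ largest eigenvalues. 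The main thing to handle carefully is the bookkeeping of eigenvalue orderings through inversion and squaring—monotonicity of $\mu \mapsto 1/(1-\gamma\mu)$ and positivity are exactly what force the two top-$d$ subspaces to align—together with checking that the eigengap condition needed for the top-$d$ subspace to be well defined transfers consistently between $P^\pi$, $(I-\gamma P^\pi)^{-1}$, and its square.
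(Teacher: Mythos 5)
Your proposal is correct and follows essentially the same route as the paper's proof: both sides are reduced via \cref{prop:SL} and \cref{thm:TDrepresentation} to subspaces of $(I-\gamma P^\pi)^{-1}$ (using $GG^\T = I$ to collapse the TD critical matrix and to discard $G$ from the left singular vectors of $(I-\gamma P^\pi)^{-1}G$), and then matched by observing that the eigenvalues $1/(1-\gamma\mu_i)$ are positive since the spectrum of $P^\pi$ lies in $[-1,1]$, so for the symmetric matrix $(I-\gamma P^\pi)^{-1}$ the top-$d$ singular vectors and top-$d$ eigenvectors coincide. Your detour through $\Psi^\pi(\Psi^\pi)^\T = (I-\gamma P^\pi)^{-2}$ and monotonicity of squaring is just a rephrasing of the paper's ``singular values equal eigenvalues'' step, and your explicit remarks that $T \geq S$ with orthogonal columns forces $T=S$ and that the $1/S$ scaling is immaterial are harmless refinements of the same argument.
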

This is because eigenvectors and singular vectors are identical in that setting and the eigenvalues of the successor representation are all positive.

\section{Representations for Policy Evaluation}
\label{sec:what}
With the results from the previous section, the question that naturally arises is which approach results in better representations. To provide an answer, we consider a two-stage procedure. First, we learn a representation $\Phi$ by predicting the values of $T$ auxiliary cumulants simultaneously, using one of the learning rules described in \cref{sec:which}. Then, we retain this representation and perform policy evaluation. If the value function is estimated on-policy, it converges towards the LSTD solution \citep{tsitsiklis1996analysis}
\begin{align*}
    \Vtd=\Phi \wtd
\end{align*}
where $\wtd =  \left ( \Phi^\T \Xi (I - \gamma P^\pi) \Phi \right)^{-1}  \Phi^\T \Xi r_\pi$.
We are interested in whether this value function results in low approximation error on average over random reward functions.

\begin{definition}[$l_1${-ball optimal} representation for TD]
We say that a representation $\Phi^*_{\mathrm{TD}}$ is $l_1$\textit{-ball optimal} for TD learning when it minimizes the error
\begin{align}
{\E}_{r_\pi} [  \|\Phi \wtd - V^\pi \|^2_{\xi} ]
\label{eq:tdapproxerror}
\end{align}
where the expectation is over the reward functions $r_\pi$ sampled uniformly over the $l_1$-ball $\{r_\pi \in \R^S \cbar  \|r_\pi \|_1 \leq 1\}$.
\end{definition}
 This set of rewards models an unknown reward function.
Here $\Phi^*_{\mathrm{TD}}$ depends on the transition dynamics of the environment but not on the reward function.

\begin{restatable}{lemma}{optreptd}
\label{lemma:optreptd}
A representation $\Phi^*_{\mathrm{TD}}$ is $l_1${-ball optimal} for TD learning iff it is a solution of the following optimization problem. 
\begin{align*}
    \begin{array}{r}
\Phi^*_{\mathrm{TD}} \in \argmin _{\Phi}\left\|\Xi^{1/2}(\Phi W_{\Phi, I}^{\mathrm{TD}} -(I - \gamma P^\pi)^{-1})\right\|^2_{F}.
\end{array}
\end{align*}
\end{restatable}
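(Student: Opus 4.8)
The plan is to reduce both the $l_1$-ball error and the Frobenius objective to the same quantity, up to a strictly positive multiplicative constant, via the oblique-projection view of LSTD. First I would introduce the LSTD matrix
\begin{align*}
    M_\Phi := \Phi \left( \Phi^\T \Xi (I - \gamma P^\pi) \Phi \right)^{-1} \Phi^\T \Xi,
\end{align*}
which is well-defined by \cref{assump:distribution}. Two identities then follow immediately from the definitions: the TD value estimate satisfies $\Phi \wtd = M_\Phi \rpi$, and, because the lemma fixes $G = I$, the TD weight matrix gives $\Phi W_{\Phi, I}^{\mathrm{TD}} = M_\Phi$ as a full $S \times S$ matrix. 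Writing $V^\pi = (I - \gamma P^\pi)^{-1}\rpi$ and setting $A := M_\Phi - (I - \gamma P^\pi)^{-1}$, the integrand of the $l_1$-ball error becomes $\|\Phi \wtd - V^\pi\|_\xi^2 = \|A\rpi\|_\xi^2 = \rpi^\T A^\T \Xi A \, \rpi$.

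Second, I would push the expectation through the trace. Using $\rpi^\T A^\T \Xi A \rpi = \Tr\!\left( A^\T \Xi A \, \rpi \rpi^\T \right)$ and linearity,
\begin{align*}
    \E_{\rpi}\!\left[ \|\Phi \wtd - V^\pi\|_\xi^2 \right] = \Tr\!\left( A^\T \Xi A \, \E_{\rpi}[\rpi \rpi^\T] \right).
\end{align*}
Everything now hinges on the second-moment matrix $\E_{\rpi}[\rpi \rpi^\T]$ of the uniform distribution on the $l_1$-ball.

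Third---and this is the one step that needs a genuine argument rather than bookkeeping---I would show $\E_{\rpi}[\rpi \rpi^\T] = c\, I$ for some constant $c > 0$. This follows from the symmetries of the $l_1$-ball: it is invariant under any coordinate sign flip, so flipping only coordinate $i$ negates $r_i r_j$ for $j \neq i$ while preserving the law of $\rpi$, forcing the off-diagonal entries $\E[r_i r_j]$ to vanish; and it is invariant under coordinate permutations, forcing all diagonal entries $\E[r_i^2]$ to share a common value $c$. Positivity of $c = \E[r_1^2]$ is immediate since $\rpi$ is non-degenerate. Substituting gives $\E_{\rpi}[\|\Phi \wtd - V^\pi\|_\xi^2] = c\,\Tr(A^\T \Xi A) = c\, \|\Xi^{1/2} A\|_F^2 = c\,\|\Xi^{1/2}(\Phi W_{\Phi, I}^{\mathrm{TD}} - (I - \gamma P^\pi)^{-1})\|_F^2$. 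Because $c > 0$ is independent of $\Phi$, the two objectives have identical sets of minimizers over $\Phi$, which establishes the claimed equivalence in both directions. The main point requiring actual thought is the symmetry computation of the moment matrix; the identification $\Phi W_{\Phi, I}^{\mathrm{TD}} = M_\Phi$ and the trace manipulation are routine linear algebra once $M_\Phi$ is introduced.
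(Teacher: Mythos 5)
Your proposal is correct and follows essentially the same route as the paper's proof: both express the TD value estimate as a fixed linear operator applied to $r_\pi$ (your $M_\Phi = \Phi W_{\Phi,I}^{\mathrm{TD}}$ is exactly the oblique projection $P_{L^\T \Xi \Phi}\,(I-\gamma P^\pi)^{-1}$ the paper uses), push the expectation through a trace, and invoke isotropy of the uniform distribution on the $l_1$-ball to get $\E[r_\pi r_\pi^\T] \propto I$, yielding proportional objectives. The only difference is cosmetic: you justify $\E[r_\pi r_\pi^\T] = cI$ explicitly via sign-flip and permutation symmetries, a detail the paper simply asserts.
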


When $P^\pi$ is symmetric and $\Xi=I / |\sspace|$, the minimum is achieved by both the top-d left singular vectors and top-$d$ invariant subspace of the SR. However, as the misalignment between the top-$d$ left and top-$d$ right singular vectors of $(I - \gamma P^\pi)$ increases, the top-$d$ invariant subspace results in lower error compared to the top-$d$ singular vectors (see \cref{fig:approxerrors}); note that here, none of them achieves $\Phi^*_{\mathrm{TD}}$ and hence $G=I$ is not $l_1$-ball optimal for TD learning.
  \begin{figure}[ht]
    \centering
    \includegraphics[width=0.22\textwidth]{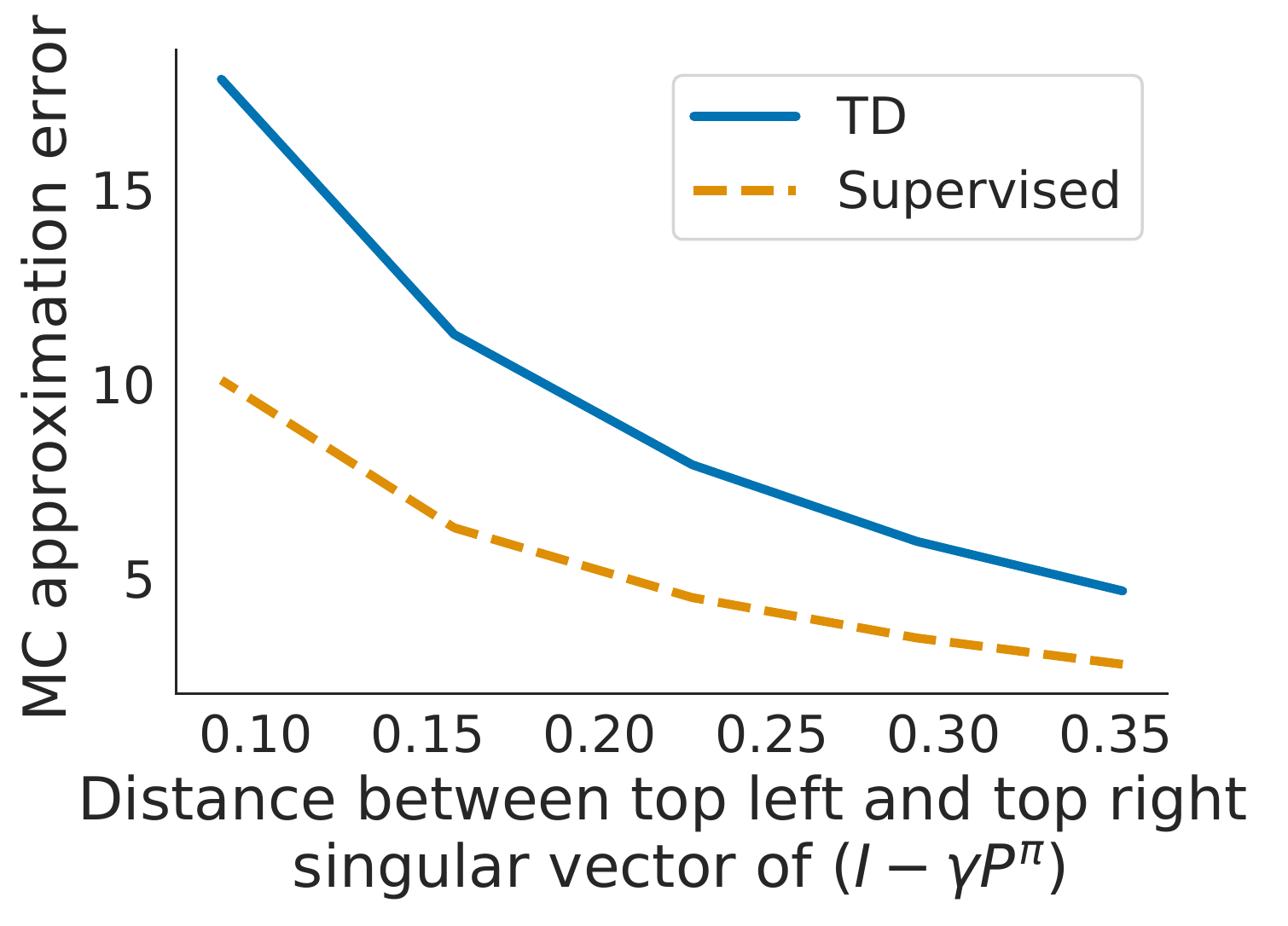}
  \includegraphics[width=0.22\textwidth]{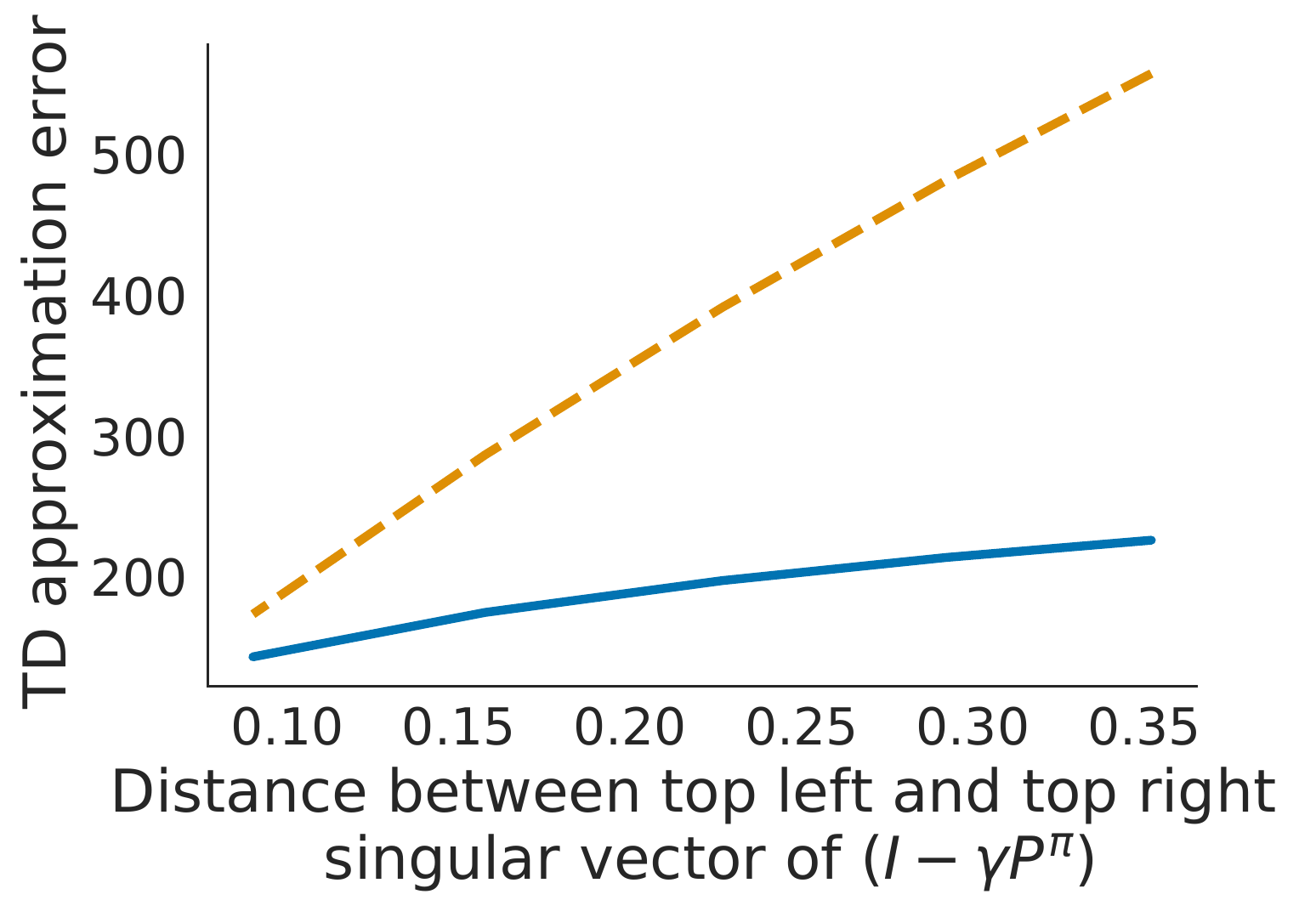}
  \caption{MC (\textbf{left}) and TD (\textbf{right}) approximation errors as a function of the misalignment of the top left and right singular vector of the SR induced by greedifying the policy. Trained with $\cLsup$, $\cLtd$, $G=I$, $d=1$ on a $4$-state room.}
  \label{fig:approxerrors}
  \end{figure}
  \begin{figure*}[h!]
    \centering
        \includegraphics[width=0.8\textwidth]{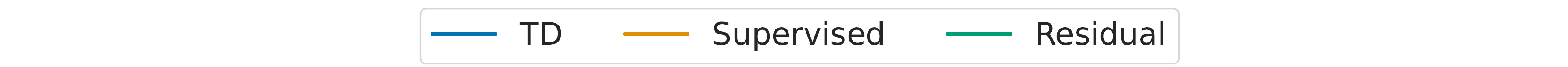}
    \hfill%
\includegraphics[width=0.33\textwidth]{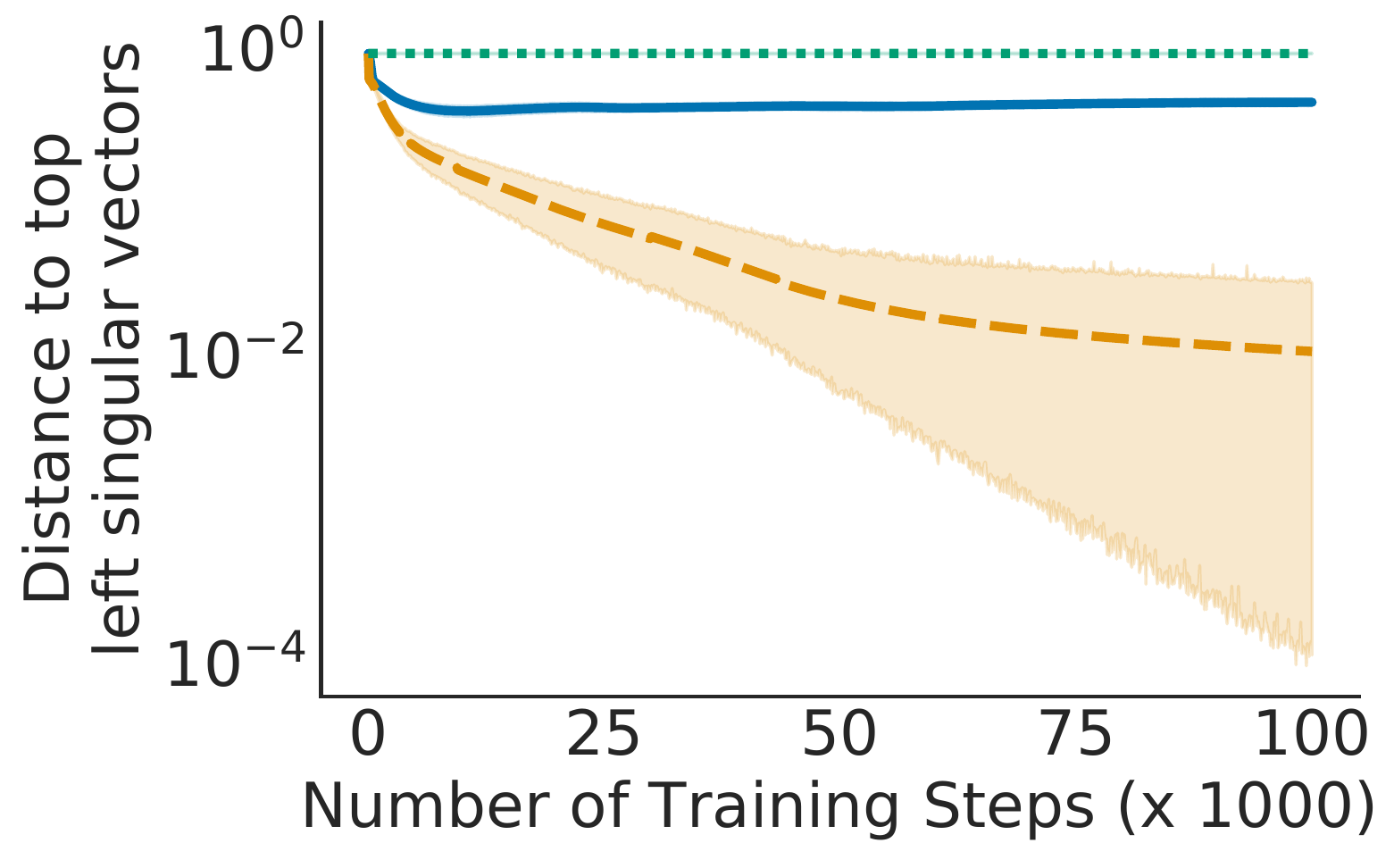}
    \hfill%
  \includegraphics[width=0.33\textwidth]{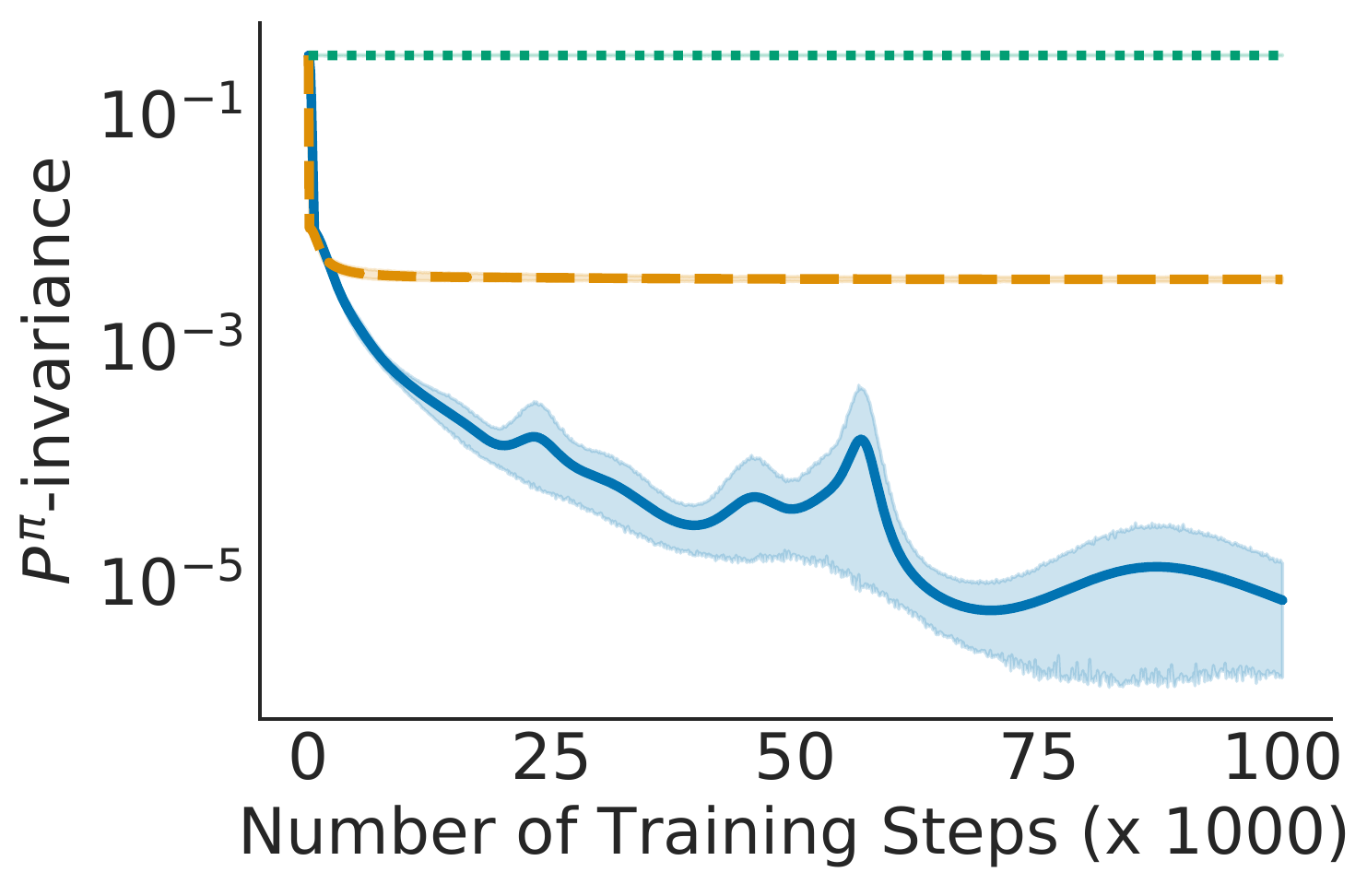}
    \includegraphics[width=0.33\textwidth]{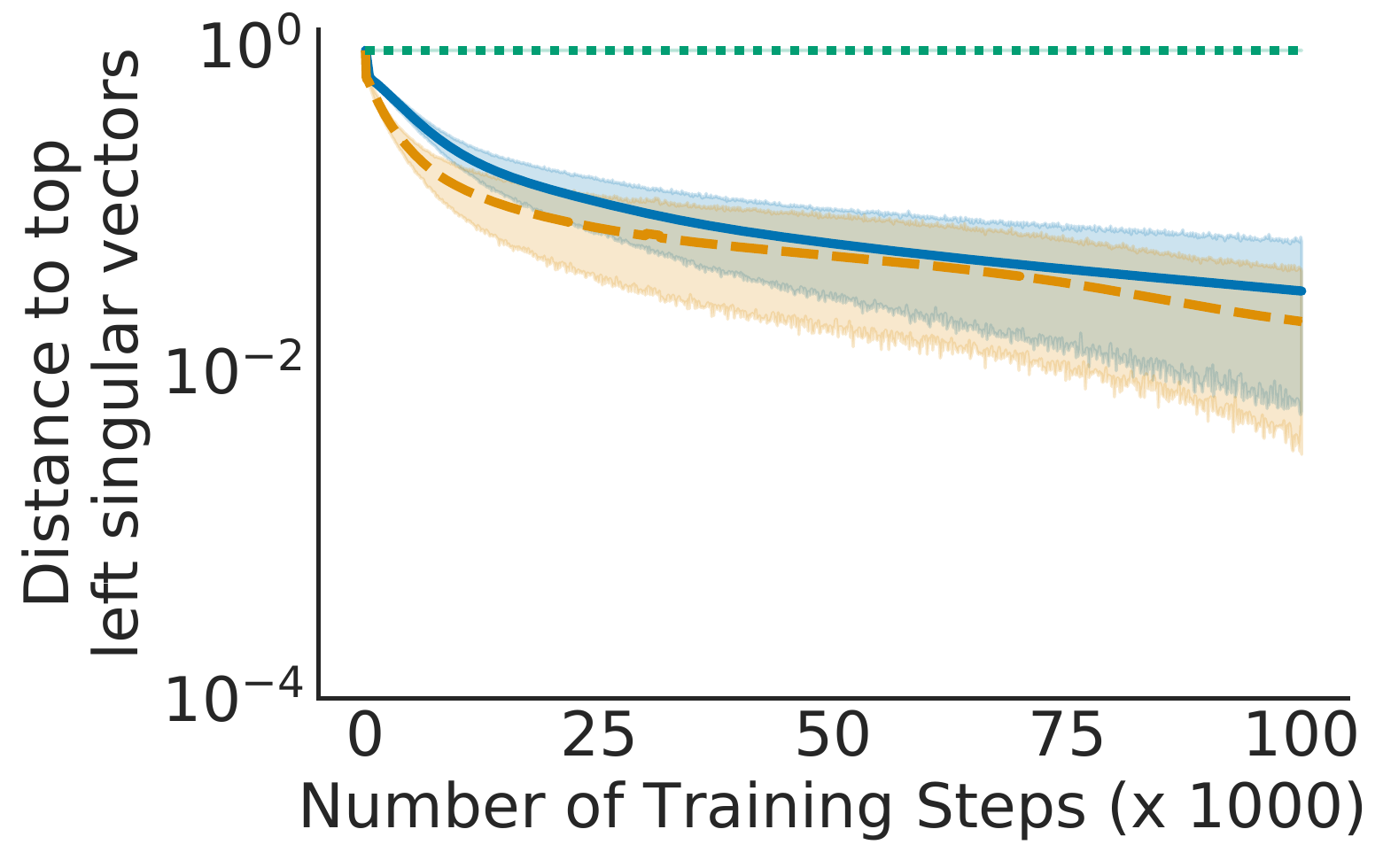}
    \hfill%
    \hfill%
    \vspace{-0.3cm}
    \caption{\looseness=-1 Subspace distance between $\Phi$ and the top-$d$ left singular vectors of the SR on the \textbf{left} (resp. and a top-$d$ $P^\pi$-invariant subspace in the \textbf{middle}  over the course of training $\cLtd, \cLsup$ and $\cLres$ for $10^5$ steps, averaged over $30$ seeds ($d=3$). MDPs with real diagonalisable (\textbf{left, middle}) and symmetric (\textbf{right}) transition matrices are randomly generated. Shaded areas represent 95$\%$ confidence intervals.}
    \label{fig:representation-convergence}
    \vspace{-0.35cm}
\end{figure*}
As a comparison, we study which representations are $l_1${-ball optimal}  for linear batch Monte Carlo policy evaluation. In that setting, we are given a dataset consisting of states and their associated value, which can be estimated by the realisation of the random return \citep{bellemare17distributional, sutton18reinforcement}, and the weights are learnt by least square regression.  As above, we want the features minimizing the resuting approximation error averaged over a set of possible reward functions.
\begin{definition}[$l_1${-ball optimal} representation for MC]
We say that a representation $\Phi^*_{\mathrm{MC}}$ is $l_1$\textit{-ball optimal} for batch Monte Carlo when it minimizes the error
\begin{align}
{\E}_{\|r_\pi\|_1 \leq 1} [ \|\Phi \wmc - V^\pi \|^2_{\xi} ]
\label{eq:mcapproxerror}
\end{align}
where $ \Vsup=\Phi \wmc$ is the value function learnt at convergence and $\wmc=(\Phi^{\top} \Xi \Phi)^{-1} \Phi^{\top} \Xi V^\pi$.
\end{definition}
Unlike TD, a representation $\Phi^*_{\mathrm{MC}}$ is can be learnt by training $\cLsup$ with $G=I$. 
\begin{restatable}{lemma}{mcoptimal}
A representation $\Phi^*_{\mathrm{MC}}$ is $l_1${-ball optimal}  for batch Monte Carlo policy evaluation if its column space spans the top-$d$ left singular vectors (with respect to the inner product $\langle x, y \rangle_\Xi$) of $(I - \gamma P^\pi)^{-1}$.
\label{lemma:mcoptimal}
\end{restatable}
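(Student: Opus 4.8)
The plan is to recognize the converged batch Monte Carlo predictor as a $\Xi$-orthogonal projection, reduce the averaged error to a weighted low-rank approximation problem, and then invoke Eckart--Young. First I would observe that $\Vsup = \Phi\wmc = \Phi(\Phi^\top\Xi\Phi)^{-1}\Phi^\top\Xi V^\pi =: \Pi_\Phi V^\pi$, so that $\Pi_\Phi$ is the $\Xi$-orthogonal projection onto $\Span(\Phi)$ and the pointwise error $\|\Phi\wmc - V^\pi\|_\xi^2 = \|(I-\Pi_\Phi)V^\pi\|_\xi^2$ depends on $\Phi$ only through its column space. Substituting $V^\pi = (I-\gamma P^\pi)^{-1}r_\pi$ and abbreviating $A := (I-\gamma P^\pi)^{-1}$, the objective \eqref{eq:mcapproxerror} becomes $\E_{r_\pi}\big[r_\pi^\top A^\top (I-\Pi_\Phi)^\top \Xi (I-\Pi_\Phi) A\, r_\pi\big]$.

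Next I would eliminate the reward average. Since the uniform law on the $l_1$-ball is invariant under coordinate sign flips and permutations, the sign symmetry forces all off-diagonal entries of $\E[r_\pi r_\pi^\top]$ to vanish while the permutation symmetry equalizes the diagonal, so $\E[r_\pi r_\pi^\top] = c\,I$ for some constant $c>0$. Using $\E[x^\top M x] = \Tr(M\,\E[xx^\top])$, the averaged error collapses, up to the positive factor $c$, to $\Tr\big(A^\top(I-\Pi_\Phi)^\top\Xi(I-\Pi_\Phi)A\big) = \|\Xi^{1/2}(I-\Pi_\Phi)A\|_F^2$. Minimizing the $l_1$-ball error is therefore equivalent to minimizing this Frobenius residual over rank-$d$ column spaces.

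The core step is a whitening change of variables that converts the $\Xi$-geometry into the standard one. Writing $\tilde\Phi := \Xi^{1/2}\Phi$, a direct computation gives $\Xi^{1/2}\Pi_\Phi = \tilde\Pi\,\Xi^{1/2}$, where $\tilde\Pi := \tilde\Phi(\tilde\Phi^\top\tilde\Phi)^{-1}\tilde\Phi^\top$ is the ordinary orthogonal projection onto $\Span(\tilde\Phi)$. Hence $\|\Xi^{1/2}(I-\Pi_\Phi)A\|_F^2 = \|(I-\tilde\Pi)\tilde A\|_F^2$ with $\tilde A := \Xi^{1/2}(I-\gamma P^\pi)^{-1}$, which is a standard best rank-$d$ subspace problem in the untransformed inner product. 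By Eckart--Young, a minimizer is $\Span(\tilde\Phi) = \Span(\tilde u_1,\dots,\tilde u_d)$, the span of the top-$d$ (standard) left singular vectors of $\tilde A$, so this choice attains the minimum and hence is $l_1$-ball optimal.

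Finally I would translate the optimal subspace back through the whitening. If $\tilde A = \sum_i \sigma_i \tilde u_i v_i^\top$ is the SVD of $\tilde A$, then the vectors $u_i := \Xi^{-1/2}\tilde u_i$ satisfy $u_i^\top \Xi u_j = \delta_{ij}$ and $A = \sum_i \sigma_i u_i v_i^\top$, so they are precisely the top-$d$ left singular vectors of $A = (I-\gamma P^\pi)^{-1}$ with respect to $\langle x,y\rangle_\Xi$. The condition $\Span(\Xi^{1/2}\Phi) = \Span(\tilde u_1,\dots,\tilde u_d)$ is equivalent to $\Span(\Phi) = \Span(u_1,\dots,u_d)$, which is the claimed characterization. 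The main obstacle I anticipate is the careful bookkeeping of the two geometries: verifying the identity $\Xi^{1/2}\Pi_\Phi = \tilde\Pi\,\Xi^{1/2}$ and checking that the $\Xi$-orthonormal left singular vectors of $A$ correspond exactly to the ordinary left singular vectors of $\tilde A$. Everything else reduces to the symmetry argument for $\E[r_\pi r_\pi^\top]$ and a textbook application of Eckart--Young.
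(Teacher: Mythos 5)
Your proof is correct and takes essentially the same route as the paper's: both reduce the averaged error, via the trace identity and isotropy of the reward distribution, to the whitened Frobenius residual $\|P^{\perp}_{\Xi^{1/2}\Phi}\,\Xi^{1/2}(I-\gamma P^\pi)^{-1}\|_F^2$, apply Eckart--Young, and identify the optimal subspace with the top-$d$ left singular vectors of the SR under the $\Xi$-weighted SVD (your $u_i=\Xi^{-1/2}\tilde u_i$ with $u_i^\top \Xi u_j=\delta_{ij}$ is exactly the paper's factorization $F\Sigma B^\top$ with $F^\top \Xi F=I$). The only difference is presentational: you make explicit the sign-flip/permutation symmetry yielding $\E[r_\pi r_\pi^\top]=cI$ and verify the whitening identity $\Xi^{1/2}\Pi_\Phi=\tilde\Pi\,\Xi^{1/2}$, steps the paper leaves implicit.
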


We summarize in \cref{tbl:loss-representation} our representation learning results mentioned throughout \cref{sec:which} and \cref{sec:what}. For completeness, we also include $l_1$-ball optimal representations for residual algorithms. Proofs can be found in \cref{app:what}.

\subsection{TD and Monte Carlo Need Different Cumulants}
\label{sec:which_cumulants}

Having characterized which features common auxiliary tasks capture and what representations are desirable to support training the main value function, we now show that MC policy evaluation and TD learning need different cumulants.
In large environments, we are interested in cumulant matrices encoding a small number of tasks $T \ll S$. 
\begin{restatable}{lemma}{optcompressedMCcumulants}
Denote $B_T$ the top-$T$ right singular vectors of the SR and $\cO(T, S)$ the set of orthogonal matrices in $\R^{T \times S}$. Training auxiliary tasks in a MC way with any G from the set $\{G \in \R^{S \times T} | \exists M \in \cO(T, S), G = B_T M \}$ results in an $l_1$-ball optimal representation for batch Monte Carlo.
\end{restatable}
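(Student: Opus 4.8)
The plan is to reduce the statement to two facts already proved. By \cref{prop:SL}, minimizing $\cLsup$ with a cumulant matrix $G$ yields a representation whose column space is the span of the top-$d$ left singular vectors, in the $\langle\cdot,\cdot\rangle_\Xi$ inner product, of the target matrix $\Psi^\pi=(I-\gamma P^\pi)^{-1}G$; and by \cref{lemma:mcoptimal}, a representation is $l_1$-ball optimal for batch Monte Carlo exactly when its column space equals the top-$d$ left singular vectors (in the same inner product) of the SR $(I-\gamma P^\pi)^{-1}$. It therefore suffices to show that, for $G=B_T M$ with $M$ orthogonal and $d\le T$, the top-$d$ left singular vectors of $(I-\gamma P^\pi)^{-1}G$ coincide with those of $(I-\gamma P^\pi)^{-1}$.

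First I would fix the $\Xi$-weighted SVD $(I-\gamma P^\pi)^{-1}=U\Sigma V^\T$ with $U^\T\Xi U=I$, $V^\T V=I$, and $\Sigma=\diag(\sigma_1,\dots,\sigma_S)$ ordered by $\sigma_1\ge\cdots\ge\sigma_S$, so that $B_T=V_T$ is the matrix of the first $T$ columns of $V$ and (for $G\in\R^{S\times T}$) $M\in\R^{T\times T}$. The central computation is
\begin{equation*}
(I-\gamma P^\pi)^{-1}G=U\Sigma V^\T V_T M = U_T\Sigma_T M,
\end{equation*}
where orthonormality of the right singular vectors gives $V^\T V_T=\begin{bmatrix}I_T\\0\end{bmatrix}$, collapsing the product to the leading block; here $U_T$ collects the top-$T$ left singular vectors and $\Sigma_T=\diag(\sigma_1,\dots,\sigma_T)$. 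This is exactly where the hypothesis $G=B_TM$ does its work: choosing the top-$T$ right singular vectors annihilates the spectral tail and retains only the leading left singular structure.

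Next I would show that right-multiplication by the orthogonal $M$ leaves the left singular vectors unchanged. Writing $\Xi^{1/2}(I-\gamma P^\pi)^{-1}G=(\Xi^{1/2}U_T)\Sigma_T M$, the columns of $\Xi^{1/2}U_T$ are orthonormal in the standard inner product and $MM^\T=I$, so the Gram matrix $(\Xi^{1/2}U_T\Sigma_T M)(\Xi^{1/2}U_T\Sigma_T M)^\T=\Xi^{1/2}U_T\Sigma_T^2 U_T^\T\Xi^{1/2}$ is diagonalized by $\Xi^{1/2}U_T$ with eigenvalues $\sigma_1^2,\dots,\sigma_T^2$. Undoing the $\Xi^{1/2}$ substitution shows the $\Xi$-left singular vectors of $(I-\gamma P^\pi)^{-1}G$ are precisely the columns of $U_T$, with their order preserved. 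Taking the leading $d\le T$ of them recovers $u_1,\dots,u_d$, the top-$d$ left singular vectors of the SR. Since the SR is invertible all $\sigma_i>0$, so $\rank(\Psi^\pi)=T\ge d$ and the hypothesis of \cref{prop:SL} holds; combining the three observations closes the argument.

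The computations are routine, and the only genuinely delicate point is bookkeeping the generalized $\langle\cdot,\cdot\rangle_\Xi$-weighted SVD consistently: the left singular vectors are measured in the $\Xi$ inner product (as in both \cref{prop:SL} and \cref{lemma:mcoptimal}), whereas the right singular vectors and the orthogonal factor $M$ live in the standard inner product, and one must check that the two orthogonalities compose correctly. I would dispatch this once through the $\Xi^{1/2}$ change of variables above and reuse it throughout.
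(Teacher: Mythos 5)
Your proposal is correct and follows essentially the same route as the paper: reduce the statement to \cref{prop:SL} and \cref{lemma:mcoptimal}, then argue that $G = B_T M$ with $M$ orthogonal leaves the top-$d$ left singular subspace of $(I-\gamma P^\pi)^{-1}G$ identical to that of the SR. In fact, your computation $(I-\gamma P^\pi)^{-1}B_T M = U_T \Sigma_T M$, combined with the $\Xi^{1/2}$ change of variables showing the orthogonal right factor preserves left singular vectors and singular values, explicitly verifies the central step that the paper's proof merely asserts (``the set $\cG(B_T)$ satisfies the requirement''), so your write-up is, if anything, more complete than the original.
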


 We showed in \cref{sec:which} that training auxiliary tasks by TD does not always converge when the transition matrix has complex eigenvalues. Maybe surprisingly, we find that this is not problematic when learning the main value function by TD. Indeed, the rotation of its own weights balances the rotation of the underlying representation.
\begin{restatable}{lemma}{rotatingrep}
Let $\{\Phi_\omega\}$ be the set of rotating representations from \cref{fig:3stateMDP} learnt by TD learning with $G=I$ and $d=2$. All these representations are equally good for learning the main value function by TD learning, that is $\forall \omega \in [0, 1],$
\begin{align*}
\mathbb{E}_{\|r\|_2^2<1}\left\|\Phi_\omega w_{\Phi_\omega}^{\mathrm{TD}} -V^\pi\right\|_F^2
\end{align*}
is constant and independent of $\omega$.
\end{restatable}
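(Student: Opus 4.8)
The plan is to reduce the averaged error to a Frobenius norm that is manifestly invariant under a rotational symmetry of the problem. Writing $A := (I-\gamma P^\pi)^{-1}$ for the successor representation, we have $V^\pi = A r$ and, substituting $r = (I-\gamma P^\pi)V^\pi$ into $w_{\Phi_\omega}^{\mathrm{TD}}$, the LSTD prediction becomes an oblique projection $\Phi_\omega w_{\Phi_\omega}^{\mathrm{TD}} = \Pi_\omega V^\pi$ with
\begin{align*}
\Pi_\omega = \Phi_\omega\left(\Phi_\omega^\T \Xi(I-\gamma P^\pi)\Phi_\omega\right)^{-1}\Phi_\omega^\T\Xi(I-\gamma P^\pi),
\end{align*}
exactly as in the proof of \cref{criticalpoints}. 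Hence the error vector is $(\Pi_\omega - I)A r$. Since $r$ is drawn uniformly from the Euclidean unit ball it is isotropic, with $\E[r r^\T] = \frac{1}{S+2}I$, so
\begin{align*}
\E_{\|r\|_2^2<1}\norm{\Phi_\omega w_{\Phi_\omega}^{\mathrm{TD}} - V^\pi}_F^2 = \tfrac{1}{S+2}\Tr\!\left(A^\T(\Pi_\omega-I)^\T\Xi(\Pi_\omega - I)A\right).
\end{align*}
With the uniform weighting $\Xi = \tfrac{1}{S}I$ this is a constant multiple of $\norm{(\Pi_\omega - I)A}_F^2$ (and the unweighted norm of the statement differs only by this scalar), so it remains to show this quantity does not depend on $\omega$.

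Next I would exhibit the symmetry generating the rotation. Because $P^\pi$ is normal, it admits an orthonormal real block-diagonalisation $P^\pi = Q\,\diag(1, R_{2\pi/3})\,Q^\T$ with $Q = [e_1 \mid e_2 \mid e_3]$ orthogonal and $R_\theta$ the planar rotation by angle $\theta$ acting on $\Span(e_2, e_3)$. For each angle $\phi$ set $U_\phi := Q\,\diag(1, R_\phi)\,Q^\T$. Each $U_\phi$ is orthogonal, and since planar rotations commute it commutes with $P^\pi$, hence with $I - \gamma P^\pi$ and with $A$. The orbit $\{U_\phi\,\Span(\Phi_0)\}_\phi$ is precisely the family of rotating two-dimensional subspaces inside $\Span(e_1, e_2, e_3)$ depicted in \cref{fig:3stateMDP}. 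I would justify this identification via equivariance of the learning dynamics: using $G = I$, $\Xi = \tfrac1S I$, orthogonality of $U_\phi$, and $U_\phi P^\pi = P^\pi U_\phi$, a direct computation gives $W_{U_\phi\Phi, I}^{\mathrm{TD}} = W_{\Phi, I}^{\mathrm{TD}} U_\phi^\T$ and therefore $F(U_\phi \Phi) = U_\phi F(\Phi)$, so the vector field of \cref{eq:tdode} is $U_\phi$-equivariant and the limit cycle is an orbit of $\{U_\phi\}$; parameterising by $\phi = 2\pi\omega$ yields $\Span(\Phi_\omega) = U_\phi\,\Span(\Phi_0)$.

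Finally I would conclude by invariance of the Frobenius norm. The same computation shows that the oblique projection transforms by conjugation, $\Pi_\omega = U_\phi \Pi_0 U_\phi^\T$ (this uses only $\Span(\Phi_\omega)=U_\phi\Span(\Phi_0)$, orthogonality of $U_\phi$, $\Xi=\tfrac1S I$, and $U_\phi P^\pi=P^\pi U_\phi$). Combining with $U_\phi A = A U_\phi$,
\begin{align*}
(\Pi_\omega - I)A = U_\phi(\Pi_0 - I)U_\phi^\T A = U_\phi (\Pi_0 - I)A\, U_\phi^\T,
\end{align*}
and since the Frobenius norm is invariant under left and right multiplication by orthogonal matrices, $\norm{(\Pi_\omega-I)A}_F = \norm{(\Pi_0-I)A}_F$ for every $\omega$. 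The averaged error is therefore constant in $\omega$.

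The main obstacle is the middle step: rigorously identifying the rotating family $\{\Phi_\omega\}$ with a single orbit of $\{U_\phi\}$ rather than merely showing that orbits are preserved by the flow. The equivariance identity $F(U_\phi\Phi)=U_\phi F(\Phi)$ is a routine, if tedious, manipulation, but turning it into a statement about the limiting cycle requires care — one must argue that the non-convergent trajectory singled out by \cref{thm:TDrepresentation} relaxes onto exactly this $SO(2)$-orbit, for which the absence of any two-dimensional real invariant subspace containing $e_1$ and the transitivity of $\{U_\phi\}$ on the relevant subspaces are the key facts.
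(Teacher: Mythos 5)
Your proof is correct, but it takes a genuinely different route from the paper's. The paper argues by direct computation in the eigenbasis: it writes $\Phi_\omega = [e_1,\, \omega e_2 + (1-\omega)e_3]$, exploits the block structure $L e_1 = (1-\gamma)e_1$, $L[e_2,e_3] = [e_2,e_3](I-\gamma\Lambda)$ to invert $\Phi_\omega^\T L \Phi_\omega$ explicitly, expands the resulting operator $\Phi_\omega(\Phi_\omega^\T L\Phi_\omega)^{-1}\Phi_\omega^\T$ coordinate by coordinate, and then asserts that the Frobenius error is independent of $\omega$ --- notably, the final cancellation (the expanded operator itself visibly depends on $\omega$ through the rank-one term $uu^\T/\norm{u}^2$) is stated rather than derived. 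Your symmetry argument supplies exactly that missing cancellation in a clean, computation-free way: the one-parameter orthogonal group $U_\phi = Q\,\diag(1,R_\phi)\,Q^\T$ commutes with $P^\pi$, hence with $L$ and $A=L^{-1}$; the oblique LSTD projection conjugates as $\Pi_\omega = U_\phi \Pi_0 U_\phi^\T$; and orthogonal invariance of the Frobenius norm finishes the job. Your steps (1), (2) and (4) are airtight (the equivariance identities $W^{\mathrm{TD}}_{U_\phi\Phi,I} = W^{\mathrm{TD}}_{\Phi,I}U_\phi^\T$ and $F(U_\phi\Phi)=U_\phi F(\Phi)$ check out, using $\Xi \propto I$, $G=I$ and $U_\phi^\T L U_\phi = L$). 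The one caveat you flag --- rigorously identifying the limit cycle of the flow with a single $U_\phi$-orbit --- is real but moot under the paper's reading of the lemma: the paper takes the rotating family as given by the parameterization $\Span(e_1,\,\omega e_2+(1-\omega)e_3)$, and any two subspaces of the form $\Span(e_1, v)$ with $0 \neq v \in \Span(e_2,e_3)$ are related by some $U_\phi$, since $\{U_\phi\}$ fixes $e_1$ and acts transitively on lines in the invariant plane; so the heavy dynamical-systems argument you sketch is unnecessary, and with that simplification your proof is actually more complete than the paper's. (Note also that the paper itself does not establish the exact form of the limit cycle in \cref{thm:TDrepresentation}, so both arguments share this modeling assumption on equal footing.)
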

Although $G=I$ does not always lead to $\Phi^*_{\mathrm{TD}}$ when training $\cLtd$,
by analogy with the MC setting, we assume that $G=I$ leads to overall desirable representations.
Assuming $\Xi= I / |\sspace|$, this means we would like the subspace spanned by top-$d$ invariant subspaces of $(I - \gamma P^\pi)^{-1}$ to be the same as the subspace spanned by the top $d$ invariant subspaces of $(I - \gamma P^\pi)^{-1} G G^\top$. 
\begin{restatable}{lemma}{optcompressedTDcumulants}
The set of cumulant matrices $G \in \R^{S \times T}$ that preserve the top-$T$ invariant subspaces of the successor representation by TD learning are the top-$T$ orthogonal invariant subspaces of $(I - \gamma P^\pi)^{-1}$, that is satisfying $G^\top G = I$ by orthogonality and $(I - \gamma P^\pi)^{-1}G \subseteq G$ by the invariance property.
\end{restatable}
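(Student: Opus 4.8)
The plan is to reduce the statement, via the critical-representation result, to a spectral matching condition between $A := (I-\gamma P^\pi)^{-1}$ (the successor representation) and the compressed operator $A G G^\top$. By \cref{criticalpoints} together with \cref{thm:TDrepresentation}, specialised to $\Xi = I/S$, the representation that TD learning produces with cumulant matrix $G$ spans the top-$d$ invariant subspace of $(I-\gamma P^\pi)^{-1} G G^\top \Xi = \tfrac{1}{S} A G G^\top$, and the positive scalar $\tfrac{1}{S}$ is irrelevant to invariant subspaces and to the ordering of eigenvalues by real part. Taking $d = T$, ``preserving the top-$T$ invariant subspace of the successor representation'' therefore means exactly that the top-$T$ invariant subspace of $A G G^\top$ coincides with the top-$T$ invariant subspace $\mathcal{V}^\ast$ of $A$ itself, i.e.\ the case $G = I$.

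First I would record the elementary spectral fact that every eigenvalue of $A$ has strictly positive real part: writing an eigenvalue of the stochastic matrix $P^\pi$ as $\mu$ with $\abs{\mu} \le 1$, the corresponding eigenvalue of $A$ is $1/(1-\gamma\mu)$, and $1-\gamma\mu$ lies in the open right half-plane since $\Re(1-\gamma\mu) = 1-\gamma\Re(\mu) \ge 1-\gamma > 0$, a region preserved under reciprocation. Hence the $T$ eigenvalues of largest real part of any rank-$T$ compression retaining $T$ eigenvalues of $A$ will strictly dominate the zero eigenvalues created by multiplying with the rank-$T$ matrix $G G^\top$. Next I would show that the matching forces $\Span(G) = \mathcal{V}^\ast$: because $A G G^\top$ has rank at most $T$, its top-$T$ invariant subspace is the span of the eigenvectors for its nonzero eigenvalues, which lies in the range of $A G G^\top$, namely $A\,\Span(G)$; if this $T$-dimensional span is to equal $\mathcal{V}^\ast$ then, $A$ being invertible and $\mathcal{V}^\ast$ being $A$-invariant, $\Span(G) = A^{-1}\mathcal{V}^\ast = \mathcal{V}^\ast$. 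This yields the invariance condition $\Span(A G)\subseteq \Span(G)$.

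Then I would analyse the effect of the Gram structure of $G$. Writing $G = B M$ with $B$ an orthonormal basis of $\mathcal{V}^\ast$ and $M \in \R^{T\times T}$ invertible, the restriction of $A G G^\top$ to $\mathcal{V}^\ast$ acts, in the basis $B$, as $\tilde A M M^\top$, where $\tilde A := B^\top A B$ represents $A|_{\mathcal{V}^\ast}$; its nonzero eigenvalues are precisely those of $\tilde A M M^\top$. Requiring that $A G G^\top$ reproduce the top block of $A$ forces $M M^\top = I$, i.e.\ $G^\top G = I$; then $G G^\top = P_{\mathcal{V}^\ast}$ is the orthogonal projector, so $A G G^\top = A P_{\mathcal{V}^\ast}$ has range $\mathcal{V}^\ast$, vanishes on $(\mathcal{V}^\ast)^\perp$, and its nonzero eigenvalues are exactly those of $A|_{\mathcal{V}^\ast}$, namely the $T$ largest-real-part eigenvalues of $A$. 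By the positivity fact these dominate the zeros, so the top-$T$ invariant subspace of $A G G^\top$ equals $\mathcal{V}^\ast$. This gives sufficiency and identifies the solutions as the orthonormal bases of the top-$T$ invariant subspace of $A$.

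The hard part will be the ``only if'' step for orthonormality. The span condition $\Span(G)=\mathcal{V}^\ast$ is genuinely necessary, but a non-orthonormal $G$ spanning $\mathcal{V}^\ast$ still leaves $\mathcal{V}^\ast$ invariant, and for benign $M$ the eigenvalues of $\tilde A M M^\top$ may all remain in the right half-plane, so subspace preservation alone does not strictly force $G^\top G = I$. I would close this gap by taking ``preserve'' to mean that $A G G^\top$ reproduces the \emph{nonzero spectrum} of $A$ on $\mathcal{V}^\ast$, not merely its span, which singles out $M M^\top = I$; equivalently, one asks for preservation uniformly over admissible $P^\pi$, where the non-normality of $A|_{\mathcal{V}^\ast}$ can push eigenvalues of $\tilde A M M^\top$ off the right half-plane unless $M$ is orthogonal. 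Making this strengthening explicit at the outset renders the condition $G^\top G = I$ both necessary and sufficient, completing the characterisation.
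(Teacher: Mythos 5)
Your proposal is genuinely different from, and more ambitious than, the paper's own proof, which establishes only the sufficiency direction by a bare algebraic verification with no dynamics and no spectral argument: writing $L := I - \gamma P^\pi$, the paper takes $G \in O(S,T)$ spanning the top-$T$ invariant subspace of $L^{-1}$, so that $L^{-1}G = G J_G$ for some block-diagonal $J_G$, and checks that for any $\Phi$ spanning an invariant subspace of $L^{-1}$ (implicitly with $\Span(\Phi) \subseteq \Span(G)$, so that $GG^\top \Phi = \Phi$),
\begin{align*}
(L^{-1} G G^\top)\Phi = L^{-1}\Phi = \Phi J_\Phi ,
\end{align*}
i.e.\ \emph{every} invariant subspace of the SR inside the top-$T$ block stays invariant for the compressed operator $L^{-1}GG^\top$. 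Note the plural ``subspaces'' in the statement: the paper quantifies over all such $\Phi$ with $d \leq T$, whereas you fix $d = T$ and ask only that the single top-$T$ space be reproduced; this weaker reading is precisely why you (correctly) find that $G^\top G = I$ is not forced and must strengthen ``preserve'' to spectrum reproduction. Under the paper's plural reading, your own computation ($G = BM$, restriction acting as $\tilde A M M^\top$ in the basis $B$) does that work more naturally: a non-orthogonal $M$ generically destroys the sub-invariant subspaces of $\tilde A$ even while leaving $\mathcal{V}^\ast$ itself invariant. Your sufficiency argument is sound and in fact buys more than the paper's: by using $GG^\top = P_{\mathcal{V}^\ast}$ and the positive real parts of the SR's eigenvalues to place the nonzero spectrum of $A GG^\top$ strictly above its kernel, you verify the eigenvalue-gap and transversality conditions in the paper's definition of a top-$d$ invariant subspace, which the paper's two-line check silently ignores. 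Two cautions: first, your reduction leans on \cref{thm:TDrepresentation}, which is stated for $G = I$ under real diagonalisability, so the identification of the learnt representation with the top invariant subspace of $L^{-1}GG^\top\Xi$ is really an appeal to \cref{criticalpoints} plus the informal extension recorded in \cref{tbl:loss-representation}; second, your necessity-of-span step identifies the top-$T$ invariant subspace of $A GG^\top$ with the span of eigenvectors of nonzero eigenvalues, which presupposes those eigenvalues have positive real part — exactly what can fail for arbitrary $G$ — so you need a short extra argument (e.g.\ that otherwise $\mathcal{V}^\ast$ would meet the kernel eigendirections, violating condition (ii) of the definition) to avoid circularity. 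The necessity gap you flag is real, but it is equally unproved in the paper, whose ``are'' likewise only earns an ``if''.
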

Unlike the MC case, a desirable cumulant matrix should encode the exact same information as the representation being learnt and the choice of a parametrization here matters.

\subsection{A Deeper Analysis of Random Cumulants}
\label{sec:boundrandomcumulants}
\looseness=-1 We now study random cumulants which have mainly been used in the literature \citep{dabney2021value, lyle2021effect, farebrother2022protovalue} as a heuristic to learn representations. We aim to explain their recent
achievements as a pre-training technique \citep{farebrother2022protovalue} and their effectiveness in sparse reward environments \citep{lyle2021effect}.

\begin{restatable}[MC Error bound]{proposition}{MCerrorbound}
\label{prop:mc-errorbound}
Let $G \in \R^{S \times T}$ be a sample from a standard gaussian distribution and assume $d \leq T$.
Let $F_d$ be the top-$d$ left singular vectors of the successor representation $(I - \gamma P^\pi)^{-1}$ and $\hat{F}_d$ be the top left singular vectors of $(I - \gamma P^\pi)^{-1}G$. Denote $\sigma_1 \geq \sigma_2 \geq ...\geq \sigma_S $ the singular values of the SR and $\mathrm{dist}(F_d, \hat{F}_d)$ the $\sin \theta$ distance between the subspaces spanned by $F_d$ and $\hat{F}_d$. Denoting $p=T-d$, we have
\begin{align*}
     \E   [\mathrm{dist}(F_d , \hat{F}_d)] & \leq \sqrt{\frac{d}{p-1}}\frac{\sigma_{d+1}}{\sigma_d} + \frac{e \sqrt{T}}{p}\left(\sum_{j=d+1}^{n} \frac{\sigma_j^2}{\sigma_d^2} \right)^{\frac{1}{2}}
\end{align*}
\end{restatable}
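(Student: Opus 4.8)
The plan is to recognize this statement as the expected-error guarantee for a \emph{randomized range finder}: the matrix $(I-\gamma P^\pi)^{-1}G$ is exactly the sketch obtained by multiplying the successor representation $M := (I-\gamma P^\pi)^{-1}$ by a Gaussian test matrix $G$, and $\hat F_d$ is its dominant $d$-dimensional left singular subspace. I would therefore follow the two-stage template of the randomized SVD analysis of Halko, Martinsson and Tropp: a purely deterministic bound on the subspace angle in terms of $G$, followed by an expectation over the Gaussian randomness. Write the SVD $M = U\Sigma V^\top$ and partition $U=[U_1\,|\,U_2]$, $V=[V_1\,|\,V_2]$, $\Sigma=\diag(\Sigma_1,\Sigma_2)$ with $\Sigma_1=\diag(\sigma_1,\dots,\sigma_d)$, so that $F_d=\Span(U_1)$. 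Setting $\Omega_1:=V_1^\top G\in\R^{d\times T}$ and $\Omega_2:=V_2^\top G\in\R^{(S-d)\times T}$, orthogonality of $V$ together with rotational invariance of the Gaussian makes $\Omega_1$ and $\Omega_2$ \emph{independent} standard Gaussian matrices, and $Y:=MG = U_1\Sigma_1\Omega_1 + U_2\Sigma_2\Omega_2$.

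For the deterministic step, note that $\Omega_1$ has full row rank almost surely (since $T\ge d$), so $\Omega_1\Omega_1^\dagger=I_d$. I would form the $S\times d$ witness matrix $Z:=Y\Omega_1^\dagger = U_1\Sigma_1 + U_2\,\Sigma_2\Omega_2\Omega_1^\dagger$, whose columns lie in $\Span(Y)$. Writing $Z=U_1 C_1 + U_2 C_2$ with $C_1=\Sigma_1$ invertible and $C_2=\Sigma_2\Omega_2\Omega_1^\dagger$, the tangents of the principal angles between $\Span(Z)$ and $\Span(U_1)$ are the singular values of $C_2C_1^{-1}$, so that
\[
\mathrm{dist}(F_d,\hat F_d)\;\le\;\opnorm{\Sigma_2\Omega_2\Omega_1^\dagger\Sigma_1^{-1}}\;\le\;\frac{1}{\sigma_d}\,\opnorm{\Sigma_2\Omega_2\Omega_1^\dagger},
\]
using $\sin\theta\le\tan\theta$ and $\opnorm{\Sigma_1^{-1}}=\sigma_d^{-1}$. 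The (slightly delicate) point that the true dominant singular subspace $\hat F_d$ of $Y$ is no farther from $F_d$ than the explicit witness $\Span(Z)$ must be handled as in the randomized SVD literature.

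It remains to take expectations, and here the independence of the two blocks is essential. Conditioning on $\Omega_1$ and viewing $\Omega_2$ as the only source of randomness, I would apply the Gaussian product-norm inequality $\E\opnorm{S\,\Omega_2\,T}\le \opnorm{S}\,\norm{T}_F + \norm{S}_F\,\opnorm{T}$ with $S=\Sigma_2$ and $T=\Omega_1^\dagger$. Since $\opnorm{\Sigma_2}=\sigma_{d+1}$ and $\norm{\Sigma_2}_F=(\sum_{j>d}\sigma_j^2)^{1/2}$, this yields
\[
\E_{\Omega_2}\!\left[\mathrm{dist}(F_d,\hat F_d)\right]\le \frac{\sigma_{d+1}}{\sigma_d}\,\norm{\Omega_1^\dagger}_F + \frac{1}{\sigma_d}\Big(\sum_{j>d}\sigma_j^2\Big)^{1/2}\opnorm{\Omega_1^\dagger}.
\]
Taking the outer expectation over $\Omega_1$ (a $d\times(d+p)$ Gaussian, $p\ge2$) then reduces to two standard facts: Jensen combined with the inverse-Wishart trace identity gives $\E\norm{\Omega_1^\dagger}_F\le(\E\norm{\Omega_1^\dagger}_F^2)^{1/2}=\sqrt{d/(p-1)}$, while the expected spectral norm of a Gaussian pseudoinverse obeys $\E\opnorm{\Omega_1^\dagger}\le e\sqrt{T}/p$. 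Substituting these produces the two advertised terms.

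I expect the deterministic angle bound to be the main obstacle: converting the $\sin\theta$ distance into the clean algebraic quantity $\opnorm{\Sigma_2\Omega_2\Omega_1^\dagger\Sigma_1^{-1}}$ relies on the observation that $Z=Y\Omega_1^\dagger$ cancels the $\Omega_1$ factor and exposes an invertible in-subspace component, and one must argue that passing from this witness subspace to the genuine top-$d$ left singular subspace of $Y$ does not increase the distance. By comparison, the expectation step is bookkeeping once the two Gaussian matrix lemmas (the product-norm inequality and the pseudoinverse-norm bounds, both valid for $p\ge2$) are in hand; the role of the oversampling $p=T-d$ is precisely to keep these inverse-moment quantities finite and small.
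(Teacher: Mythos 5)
Your overall template is the same as the paper's: both arguments instantiate the Halko--Martinsson--Tropp randomized range-finder analysis with $A=(I-\gamma P^\pi)^{-1}$ and sketch $Y=AG$. Your expectation step --- conditioning on $\Omega_1$, the Gaussian product-norm inequality, $\E\norm{\Omega_1^\dagger}_F\leq\sqrt{d/(p-1)}$ and $\E\opnorm{\Omega_1^\dagger}\leq e\sqrt{T}/p$ --- is precisely the interior of HMT Theorem 10.6, which the paper invokes by citation, and your factor $\opnorm{\Sigma_1^{-1}}=\sigma_d^{-1}$ plays exactly the role of the paper's truncation lemma $\opnorm{(I-P_Y)P_{A_d}}\leq\sigma_d^{-1}\opnorm{(I-P_Y)A_d}$. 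So the probabilistic half of your plan is sound and matches the paper in substance.

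The genuine gap is the step you flag and defer: the claim that the true top-$d$ left singular subspace $\hat{F}_d$ of $Y$ is no farther from $F_d$ than the explicit witness $\Span(Z)$ with $Z=Y\Omega_1^\dagger$. As a deterministic statement this is \emph{false}, and the randomized-SVD literature does not hand it to you. Take $S=3$, $d=1$, $A=\diag(1,\,1-\epsilon,\,0)$ and a test matrix $G$ with rows $(\delta,0)$, $(0,1)$, $(0,0)$ for small $\delta$: then $\Omega_1=(\delta,0)$, $Y=[\delta e_1,\,(1-\epsilon)e_2]$, and $Z=Y\Omega_1^\dagger=e_1$, so $\mathrm{dist}(F_1,\Span(Z))=0$, while the top singular vector of $Y$ is $e_2$, giving $\mathrm{dist}(F_1,\hat{F}_1)=1$. (This does not contradict the proposition itself, since $\sigma_2/\sigma_1=1-\epsilon$ makes the stated bound vacuous; it shows only that your intermediate monotonicity claim fails.) The paper takes a witness-free reduction at exactly this point: using $\Span(\hat{F}_d)\subseteq\Span(L^{-1}G)$ and projector manipulations, it bounds $\mathrm{dist}(F_d,\hat{F}_d)$ by $\opnorm{(I-P_{L^{-1}G})P_{(L^{-1})_d}}$ --- how much of the top-$d$ subspace of $A$ leaks outside the \emph{entire} sketch range --- and only then applies the deterministic bound $\opnorm{(I-P_Y)A_d}\leq\opnorm{\Sigma_2\Omega_2\Omega_1^\dagger}$ together with Theorem 10.6; it never compares $\hat{F}_d$ to a particular $d$-dimensional subspace of $\Span(Y)$. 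To repair your argument you should either adopt that projector chain or invoke a genuine gap-dependent perturbation bound (Wedin / $\sin\Theta$, or the subspace-iteration analyses that treat truncated sketches) relating $\hat{F}_d$ to $F_d$ through the residual $(I-P_Y)A_d$; the shortcut you assume does not hold.
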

\begin{proof}
A proof can be found in \cref{app:cumulants} and follows arguments from random matrix theory.
\end{proof}
This bound fundamentally depends on the ratio of the singular values $\sigma_{d+1} / \sigma_d$ of the successor representation. As the oversampling parameter $(T-d)$ grows, the right hand side tends towards $0$. In particular, for the right hand side to be less than $\epsilon$, we need the oversampling parameter to satisfy $(T - d) \geq 1 / \epsilon^2$.
We investigate to which extent this result holds empirically for the TD objective in \cref{sec:which_cumulants}.

\section{Empirical Analysis}
\label{sec:experiments_bootstrap}
In this section, we illustrate empirically the correctness of our theoretical characterizations from \cref{sec:which} and compare the goodness of different cumulants on the four room \citep{sutton18reinforcement} and Mountain car \citep{Moore90efficientmemory-based} domains. Let
$P_\Phi = \Phi (\Phi^\top \Phi)^\dagger \Phi^\top$.
Here, any distance between two subspaces $\Phi$ and $\Phi^*$ is measured using the normalized subspace distance, \footnote{It is equivalent to the $\sin \theta$ distance up to some constant} \citep{tang2019exponentially} defined by
\begin{align*}
 \mathrm{dist}(\Phi, \Phi^*) =  1-\frac{1}{d} \cdot \operatorname{Tr}\left(P_{\Phi^*} P_\Phi\right) \in[0,1].
\end{align*}
\subsection{Synthetic Matrices}
\label{sec:syntheticexp}
\looseness=-1 To begin, we train the TD, supervised and residual update rules from \cref{sec:which} up to convergence knowing the exact transition matrices $P^\pi$. In \cref{fig:representation-convergence} left and middle, we randomly sample $30$ real diagonalisable matrices $P^\pi \in \R^{50 \times 50}$ to prevent any convergence issue from the TD update rule. In \cref{fig:representation-convergence} right, we generate symmetric transition matrices $P^\pi \in \R^{50 \times 50}$.
To illustrate the theory, we run gradient descent on each learning rule by expressing the weights implicitly as a function of the features (see \cref{eq:tdode} for TD for instance). \cref{fig:representation-convergence}, left, middle show that these auxiliary task algorithms learn different representations and successfully recover our theoretical characterizations (\cref{prop:SL}, \cref{thm:TDrepresentation}) from \cref{tbl:loss-representation}, right. \cref{fig:representation-convergence} right illustrates that the supervised and TD rules converge to the same representation for symmetric $P^\pi$, as predicted by our theory (\cref{corollary:symmetric}).

\subsection{Efficacy of Random Cumulants}
\label{sec:randomcumulant}
\begin{figure*}[t]
  \centering
          \includegraphics[width=0.8\textwidth]{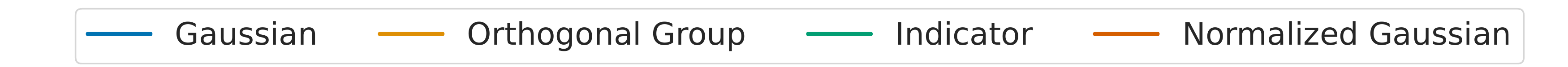}
    \includegraphics[width=0.4\textwidth]{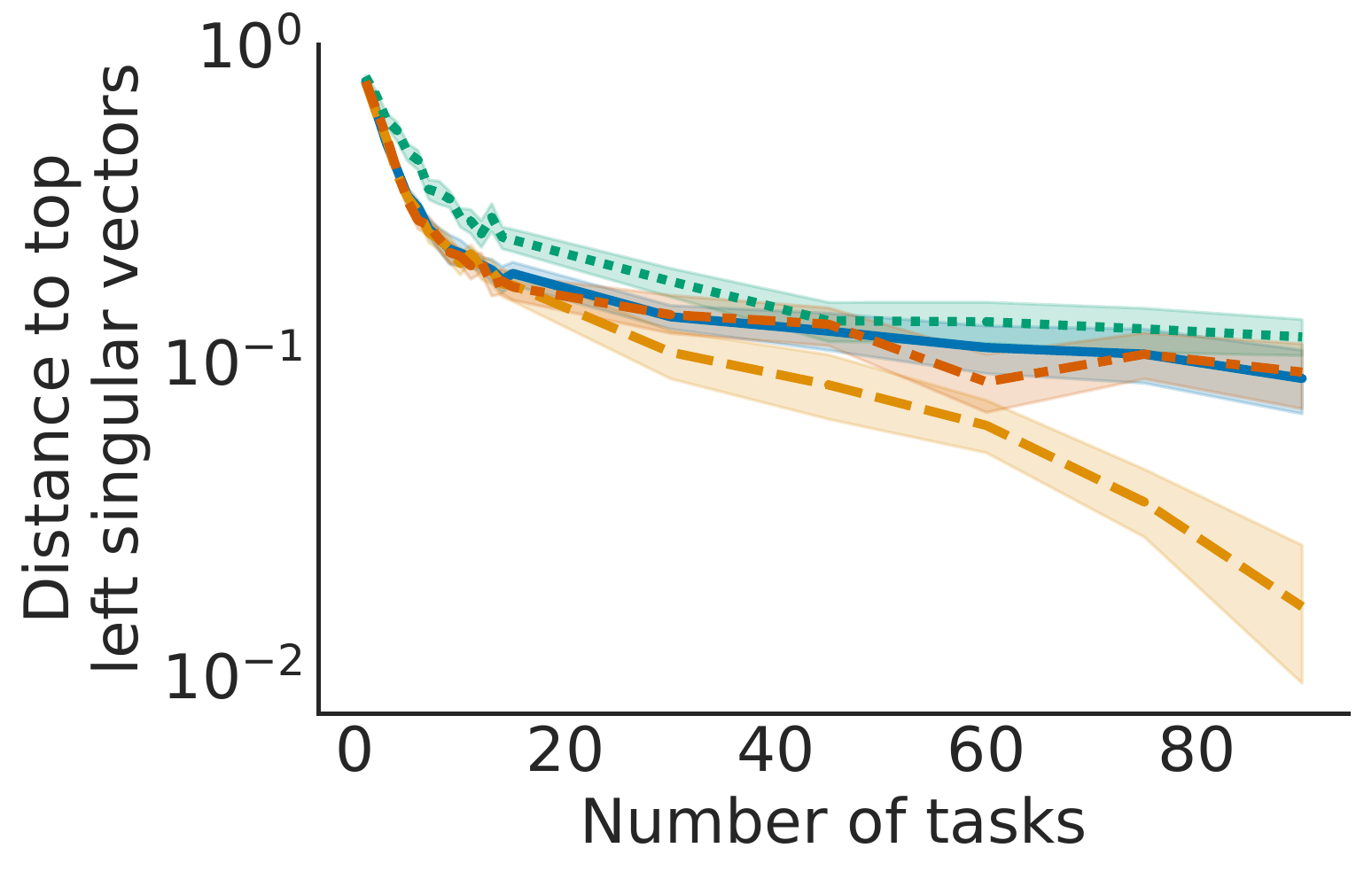}
  \includegraphics[width=0.4\textwidth]{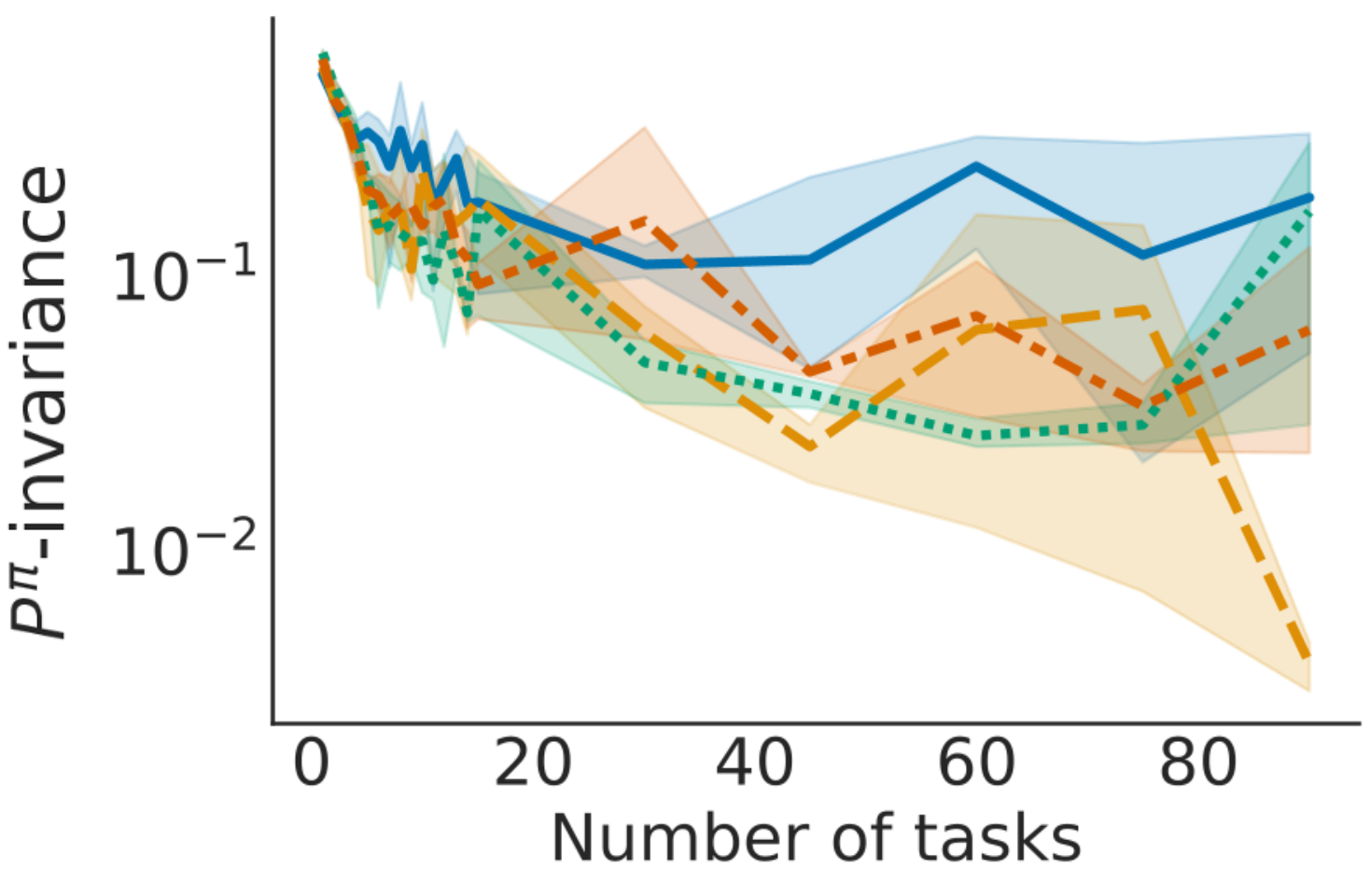}
  \caption{\looseness=-1 Subspace distance after $5 \times 10^5$ training steps and averaged over $30$ seeds ($d=5$) between $\Phi$ learnt with $\cLsup$ and the top left singular vectors of the SR (\textbf{left}) and between $\Phi$ learnt with $\cLtd$ and the top invariant subspaces of the SR (\textbf{right}) for different random cumulants, on the four-room domain. Shaded areas represent estimates of 95$\%$ confidence intervals}
    \vspace{-1.em}
    \label{fig:4roomrandomcum}
\end{figure*}
Following our theoretical analysis from \cref{sec:randomcumulant}, 
 our aim is to illustrate the goodness of random cumulants at recovering the left singular vectors of the successor representation on the four room domain \citep{sutton99between} and to investigate to which extent an analogous result holds empirically for the TD rule. We investigate the importance of three properties of a distribution: isotropy, norm and orthogonality of the columns.  We consider random cumulants from different distributions: a standard Gaussian $\rN(0, I)$, a Gaussian distribution which columns are normalized to be unit-norm, the $O(N)$ Haar distribution and random indicators functions. \cref{fig:4roomrandomcum}, left shows that the the indicator distribution which is not isotropic performs worse overall for the supervised objective and when the number of tasks is large enough, orthogonality between the columns of the cumulant matrix leads to better accuracy. In comparison, \cref{fig:4roomrandomcum}, right studies the goodness of random cumulants at recovering the top-$d$ invariant subspaces of the SR and depicts a different picture. Here, the Gaussian distribution achieves the highest error irrespective of the number of tasks sampled while the normalized Gaussian achieves lower error suggesting the norm of the columns matter for TD training. The indicator distribution performs well for many number of sampled tasks indicating that the isotropy of the distribution is not as important for TD as it is for supervised training. Finally, the orthogonal cumulants achieve the lowest error when the number of tasks is large enough, showing this is an important property for both kinds of training.

\subsection{Offline Pre-training}
\label{sec:pretraining}

\begin{figure*}[h]
  \centering
  \includegraphics[width=0.4\textwidth]{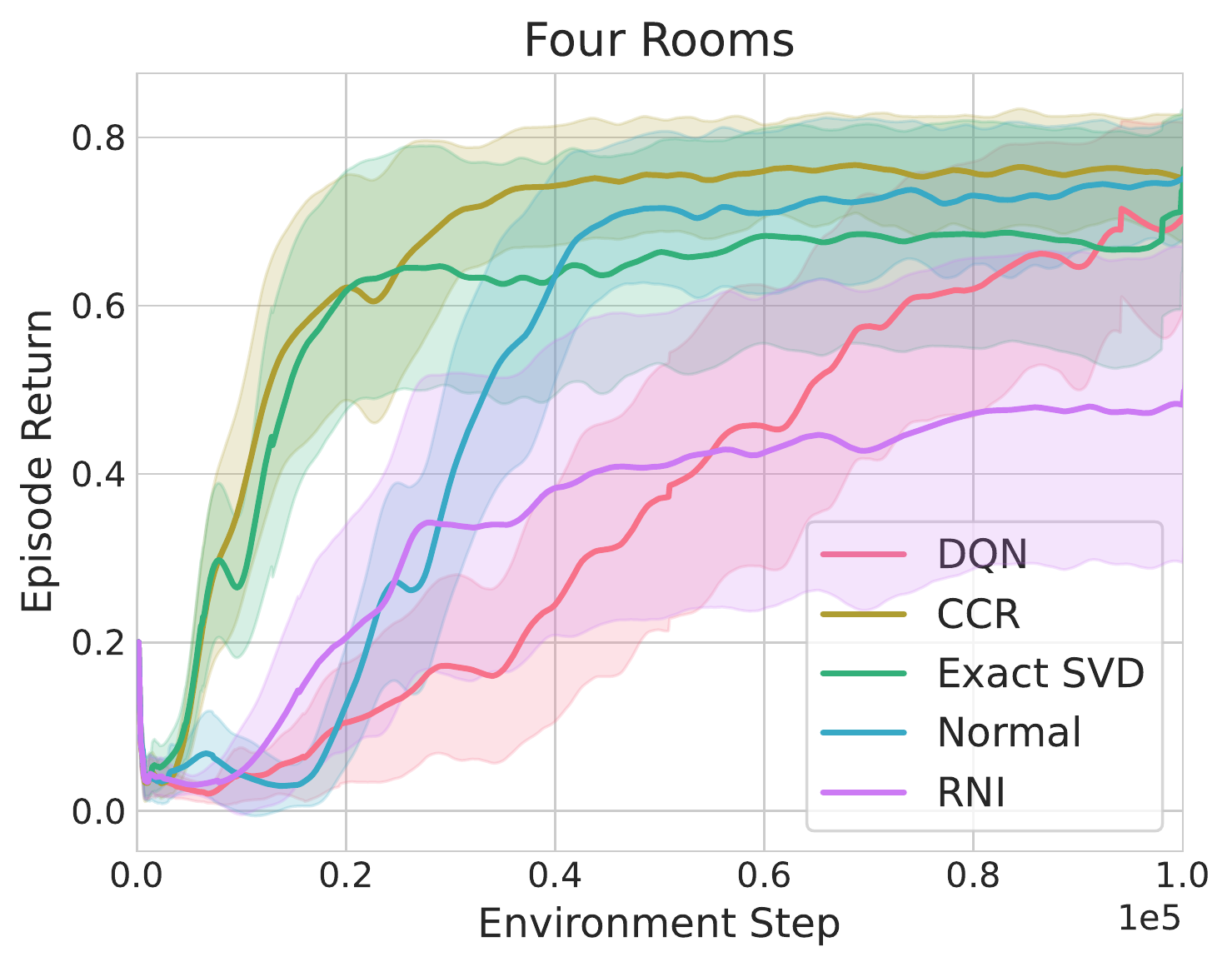}
    \includegraphics[width=0.4\textwidth]{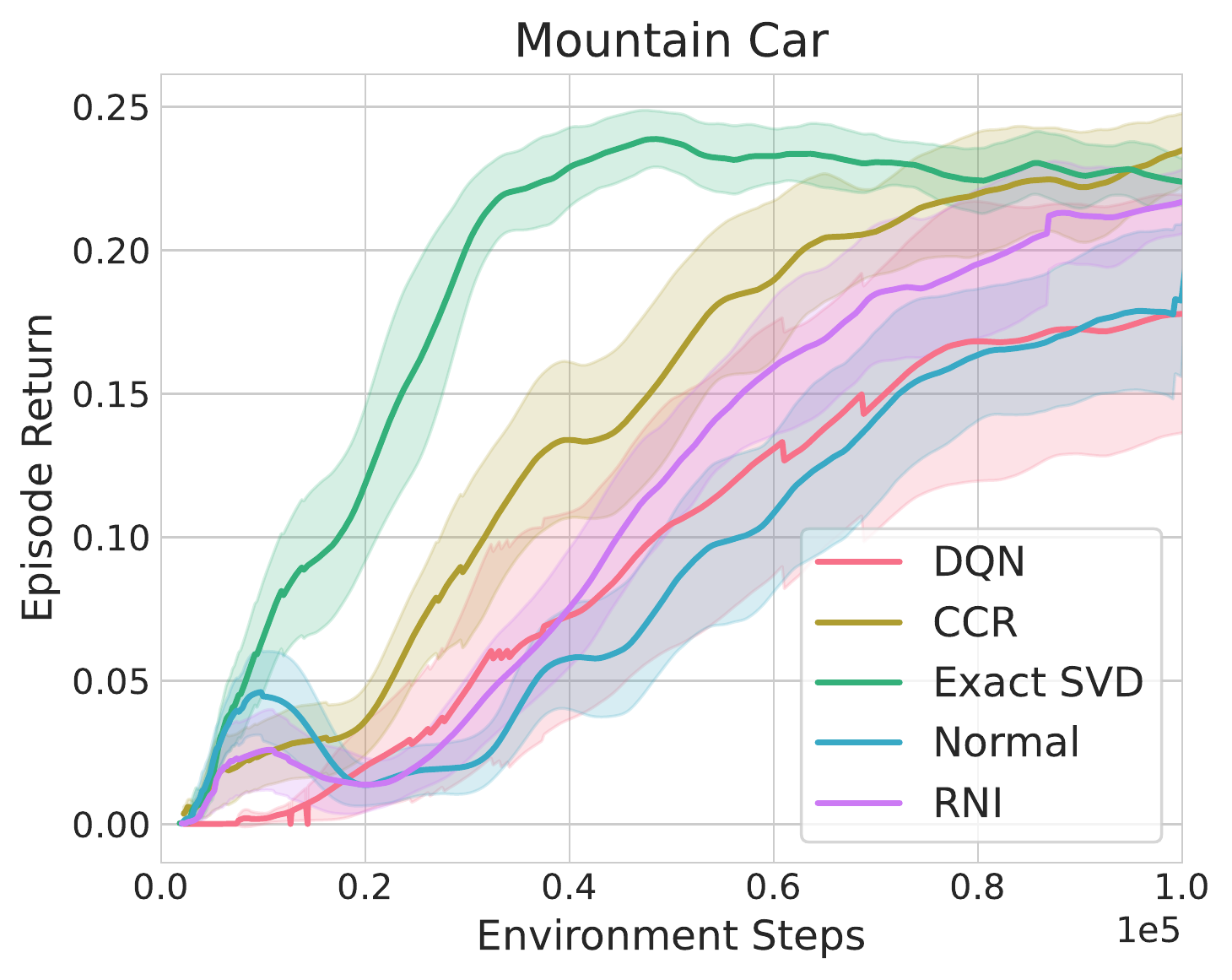}
  \caption{Comparing effects of offline pre-training on the FourRooms (\textbf{left}) and sparse Mountain Car (\textbf{right}) domains for different cumulant generation methods. Results are averages over 30 seeds.}
  \label{fig:fourrooms_returns}
\end{figure*}

\looseness=-1 In this section we follow a similar evaluation protocol as that of \citet{farebrother2022protovalue}, but applied to the four room and Mountain car domains to allow a clear investigation of the various cumulant generation methods and the effects of their corresponding GVFs as a representation pre-training method for reinforcement learning. Details can be found in \cref{app:experiments}.

We consider four cumulant functions. The first two are stationary and are generated before offline pre-training begins. For ExactSVD, we compute the top-k right singular vectors of the successor representation matrix of the uniform random policy.
For Normal, we generate cumulant functions sampled from a standard Normal distribution. 

The second two cumulant functions are learned during offline pre-training using a separate neural network. RNI \citep{farebrother2022protovalue}, learns a set of indicator functions which are trained to be active in a particular percentage of the states ($15\%$ in this experiment). Clustering Contrastive Representations (CCR) learns cumulants by learning a representation of the state using CPC \citep{oord2018representation}, and then performs online clustering of the learned representations with $k$ clusters. The online clustering method we use differs slightly from standard approaches in that we maintain an estimate of the frequency that each cluster center is assigned to a state, $p_i$, and the assigned cluster is identified with $\argmin_i p_i \| \phi(x) - b_i \|$, where $\phi(x)$ is the learned CPC representation and $b_i$ is cumulant $i$'s centroid. Examples of the cumulants produced by these four methods, and their corresponding value functions, are given in \cref{app:experiments}.

\cref{fig:fourrooms_returns} compares the online performance after pre-training, for various cumulant functions, with the online performance of DQN without pre-training. Two take-aways are readily apparent. First, that offline pre-training, speeds up online learning, as expected. Second, that the two best performing methods are both sensitive to the structure of the environment dynamics, directly in the case of ExactSVD and indirectly through the CPC representation for CCR.

\section{Related Work}
\paragraph{Optimal representations.} \citet{bellemare2019geometric} define a notion of optimal representations for batch Monte Carlo optimization based on the worst approximation error of the value function across the set of all possible policies, later relaxed by \citet{dabney2021value}.
Instead, we do not consider the control setting but focus on policy evaluation. \citet{ghosh2020representations} and \citet{lan2022generalization} characterize the stability, approximation and generalization errors of the SR \citep{dayan1993improving} and Schur representations which are $P^\pi$-invariant, a key property to ensure stability. In contrast, we formalize that predicting values functions by TD learning from $G=I$ leads to $P^\pi$-invariant subspaces.

\paragraph{Auxiliary tasks.} \citet{lyle2021effect} analyse the representations learnt by several auxiliary tasks such as random cumulants \citep{osband2018randomized, dabney2021value} assuming real diagonalizability of the transition matrix $P^\pi$ and constant weights $W$. They found that in the limit of an infinity of gaussian random cumulants, the subspace spanned by TD representations converges in distribution towards the left singular vectors of the successor representation. Instead, our theoretical analysis holds for any transition matrix and both the weights $W$ and the features $\Phi$ are updated at each time step.
Recently, \citet{farebrother2022protovalue} rely on a random binary cumulant matrix which sparsity is controlled by means of a quantile regression loss. Finally, other auxiliary tasks regroup self-supervised learning methods \citep{schwarzer2020data, guo2020bootstrap}. \citet{tang2022understanding} demonstrate that these algorithms perform an eigendecompositon of real diagonalisable transition matrix $P^\pi$, under some assumptions, suggesting a close connection to TD auxiliary tasks. Closely related, \citet{touati2021learning, blier2021learning} propose an unsupervised pretraining algorithm to learn representations based on an eigendecomposition of transition matrix $P^\pi$. They demonstrate the usefulness of their approach on discrete and continuous mazes, pixel-based MsPacman and the FetchReach virtual robot arm.

\section{Conclusion}
\label{sec:discussion}
In this paper, we have studied representations learnt by bootstrapping methods and proved their benefit for value-based deep RL agents. Based on an analysis of the TD continuous-time dynamical system, we generalized existing work \citep{lyle2021effect}
and provided evidence that TD representations are actually different from Monte Carlo representations.

Our investigation demonstrated that an identity cumulant matrix provides as much information as the TD and supervised auxiliary algorithms can carry; this work also shows that it is possible to design more compact pseudo-reward functions, though this requires prior knowledge about the transition dynamics. This led us to propose new families of cumulants which also proved useful empirically. 

We assumed in this paper that the TD updates are carried out in tabular way, that is that there is not generalization between states when we update the features. An exciting opportunity for future work is to extend the theoretical results to the case where the representation is parametrized by a neural network. Other avenues for future work include scaling up the representation learning methods here introduced.

\section*{Acknowledgements}
The authors would like to thank Yunhao Tang, Doina Precup and the anonymous reviewers for detailed feedback on this manuscript. We also thank Jesse Farebrother and Joshua Greaves for help with the Proto-Value Networks codebase \citep{farebrother2022protovalue}. Thank you to Matthieu Geist, Bruno Scherrer, Chris Dann, Diana Borsa, Remi Munos, David Abel, Daniel Guo, Bernardo Avila Pires and Yash Chandak for useful discussions too.

We would also like to thank
the Python community \citep{van1995python,oliphant2007python} for developing
tools that enabled this work, including
{NumPy}~\citep{oliphant2006guide,walt2011numpy, harris2020array},
{SciPy}~\citep{jones2001scipy},
{Matplotlib}~\citep{hunter2007matplotlib} and JAX \citep{bradbury2018jax}.

\bibliography{main}
\bibliographystyle{apalike}

\newpage
\appendix
\onecolumn
\section{Additional Empirical Results}
\label{app:experiments}
\subsection{Additional details for \cref{sec:syntheticexp}}
In this experiment, we selected a step size $\alpha=0.08$ for all the algorithms. We also choose a uniform data distribution $\Xi=I/|\sspace|$ and a cumulant matrix $G=I$ for simplicity.

\subsection{Additional details for \cref{sec:randomcumulant}}
In this experiment, we use a step size $\alpha=5e^{-3}$ and train the different learning rules for $500$k steps with $3$ seeds. We consider the transition matrices induced by an epsilon greedy policy on the four room domain \citep{sutton99between} with $\epsilon=0.8$ and train the supervised and TD update rules as described in \cref{sec:syntheticexp}.  We provide additionnal empirical results in \cref{fig:4roomrandomcumapp}.
\begin{figure*}[h!]
  \centering
     \includegraphics[width=0.8\textwidth]{images/legend_updaterules_cumulants.pdf}
    \includegraphics[width=0.33\textwidth]{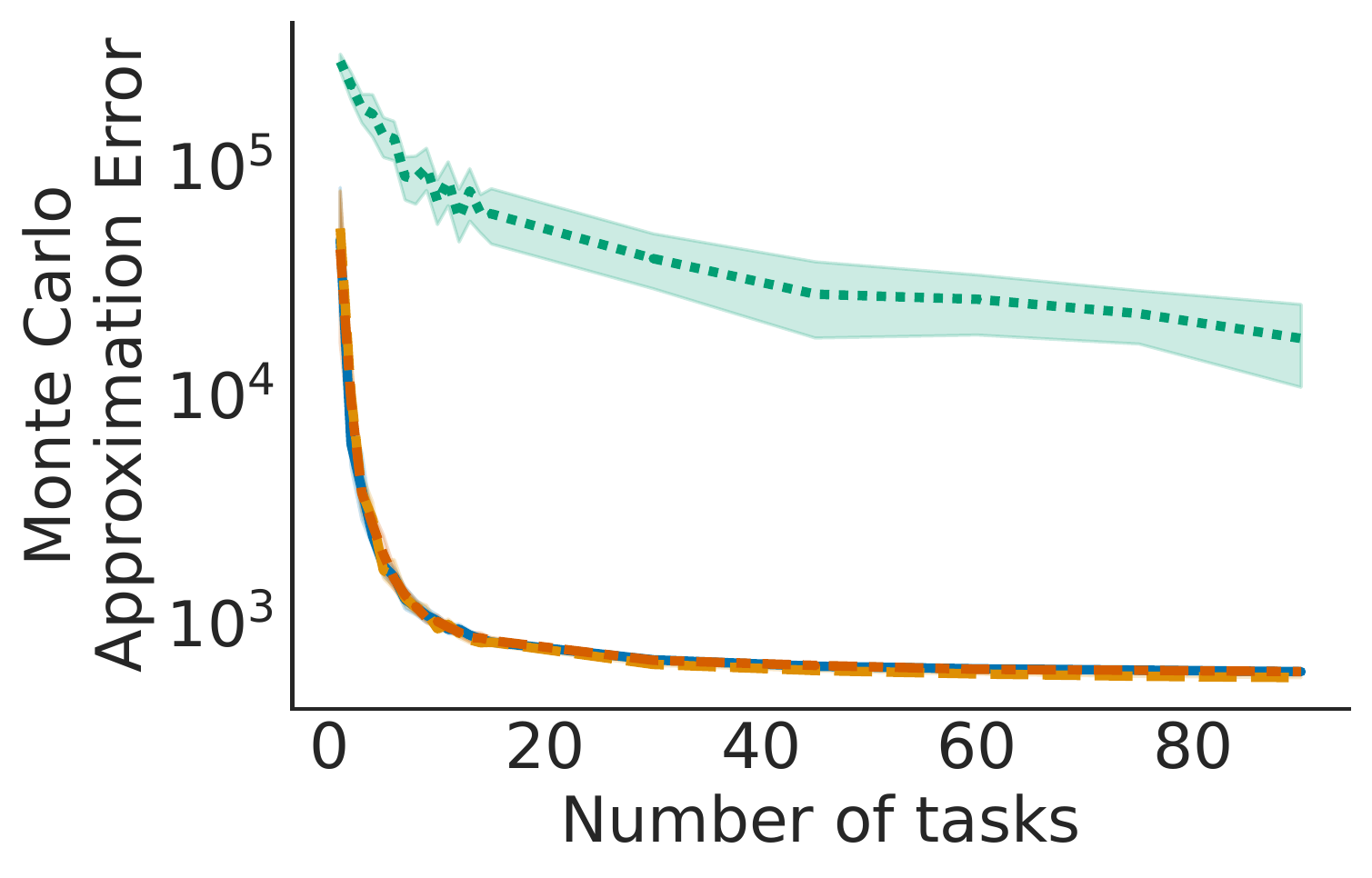}
    \includegraphics[width=0.33\textwidth]{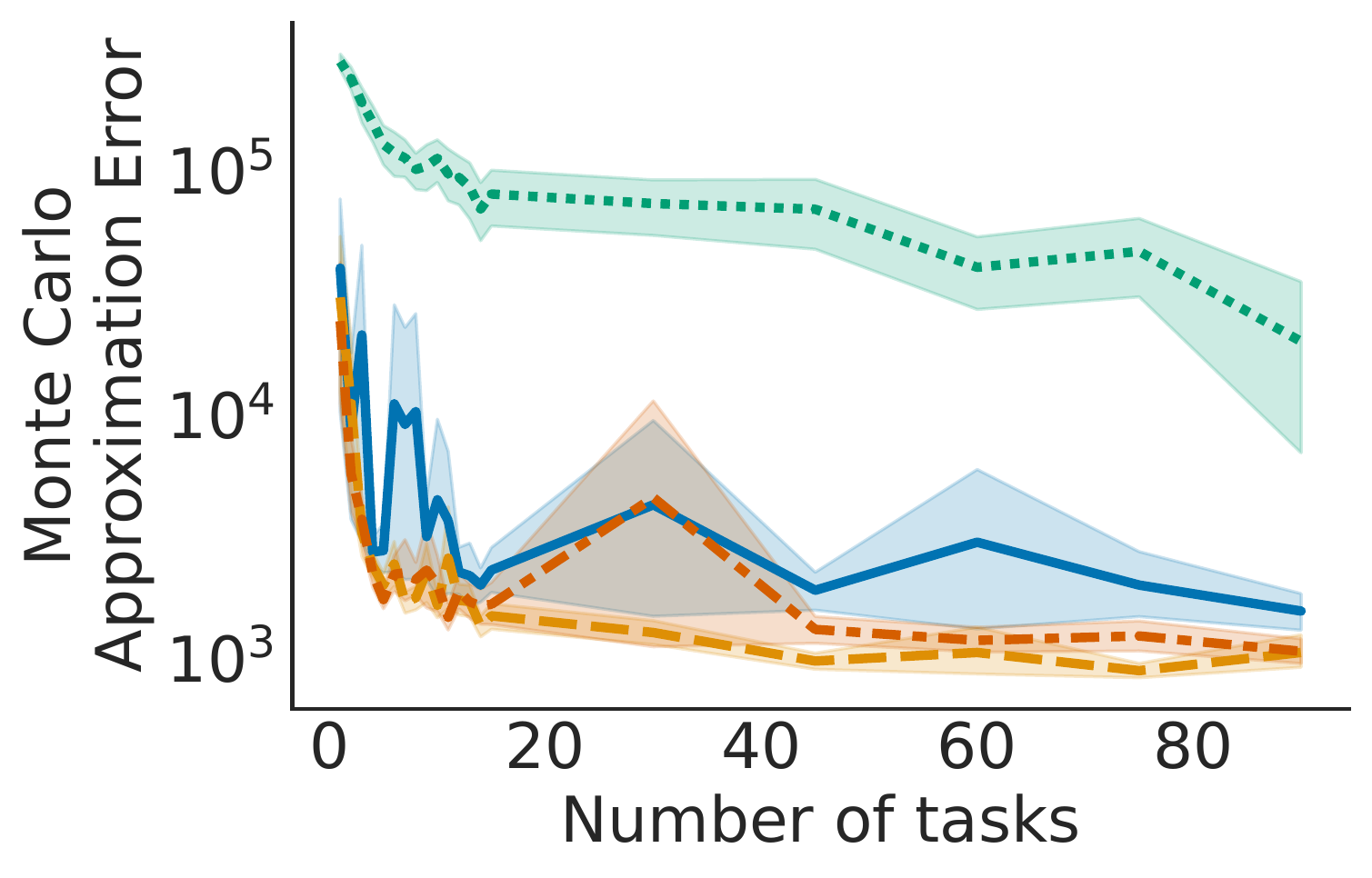}\\
    \includegraphics[width=0.33\textwidth]{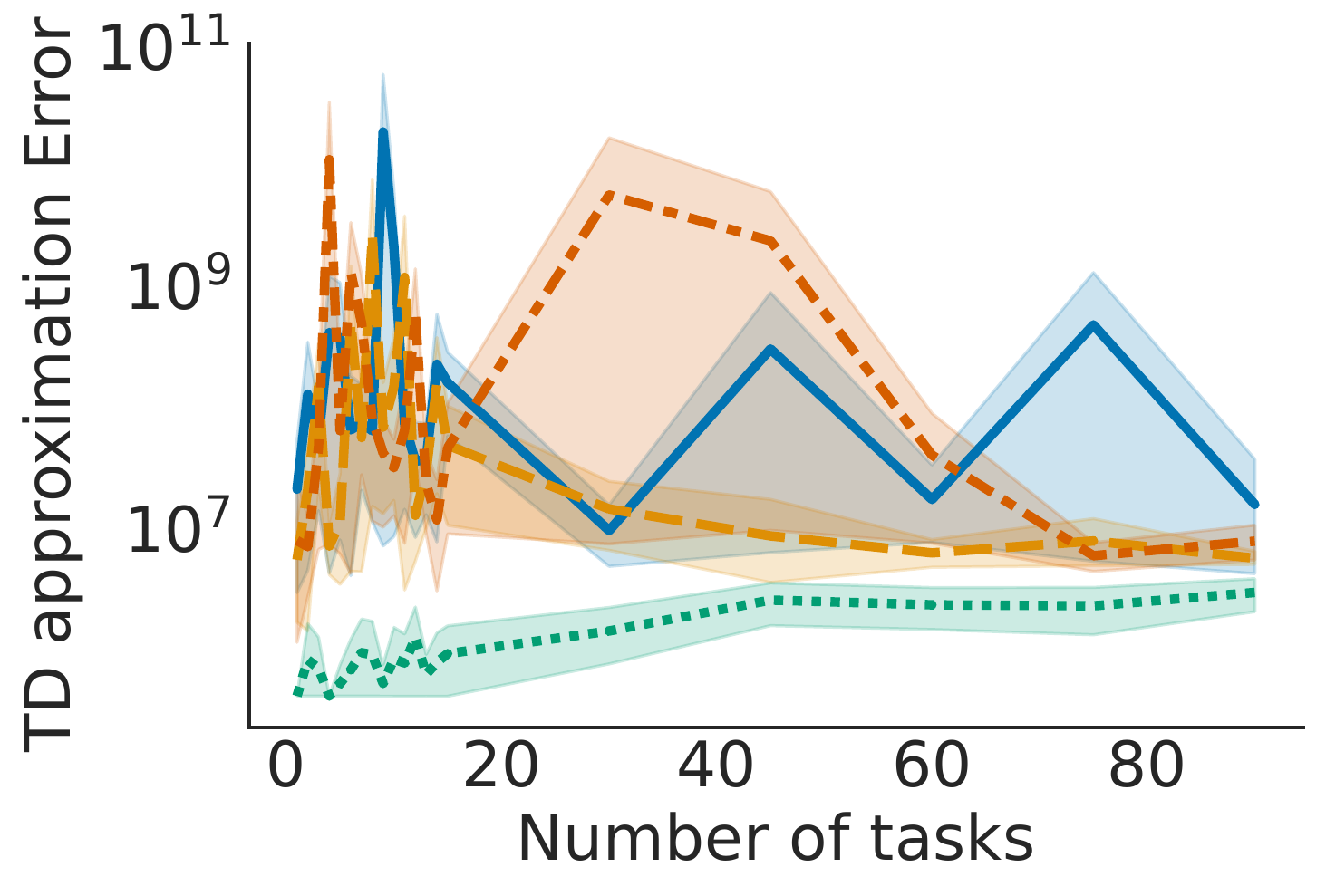}
    \includegraphics[width=0.33\textwidth]{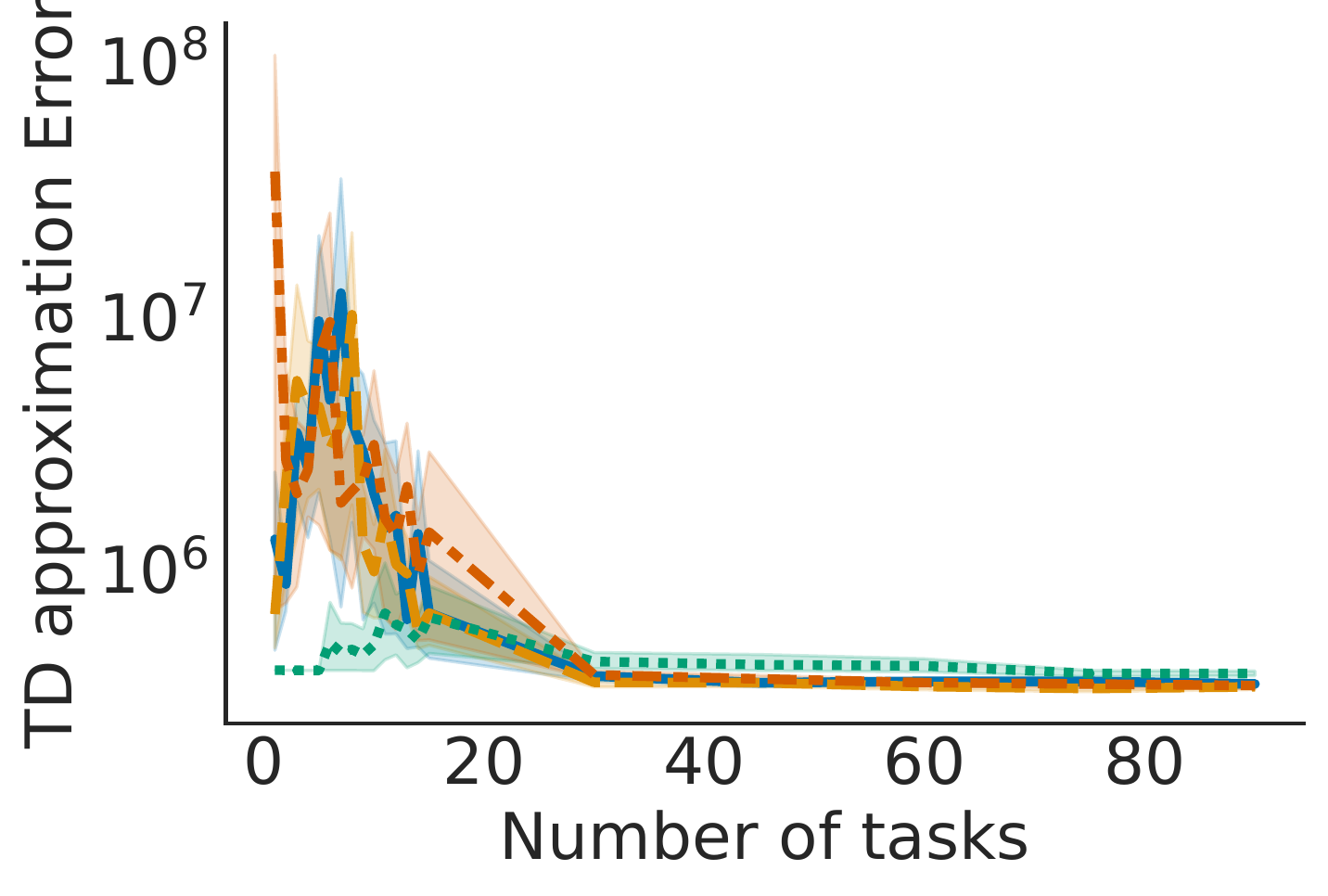}\\   
  \caption{\looseness=-1 Monte Carlo and TD approximation errors after $5. 10^5$ training steps on the learning rules $\cLsup$ (on the \textbf{left} column) and $\cLtd$ (on the \textbf{right} column) in the four-room domain for different distributions of cumulant, averaged over $30$ seeds, for $d=5$. Shaded areas represent estimates of 95$\%$ confidence intervals.}
  \label{fig:sketch}
    \vspace{-1.em}
    \label{fig:4roomrandomcumapp}
\end{figure*}

\subsection{Additional details for \cref{sec:pretraining}}
Four Rooms is a tabular gridworld environment where the agent begins in a room in the top left corner and must navigate to the goal state in the lower right corner. The actions are up, down, left and right and have deterministic effects. The reward function is one upon transitioning into the goal state and zero otherwise.

Mountain Car is a two-dimensional continuous state environment where the agent must move an under-powered car from the bottom of a valley to a goal state at the top of the nearby hill. The agent observes the continuous-valued position and velocity of the car, and controls it with three discrete actions which apply positive, negative, and zero thrust to the car. In this sparse reward version of the domain the reward is one for reaching the goal and zero otherwise. In this domain, we compute the ExactSVD by first discretizing the state space into approximately $2000$ states, and compute an approximate $P^\pi$ by simulating transitions from uniformly random continuous states belonging to each discretized state.

In this evaluation we first pre-train a network representation offline with a large fixed dataset produced from following the uniform random policy. During offline pre-training the agent does not observe the reward, and instead learns action-value functions, GVFs, for each of several cumulant functions. After pre-training, the GVF head is removed and replaced with a single action-value function head. This network is then trained online with DQN on the true environmental reward. Note that we allow gradients to propagate into the network representation during online training.

In Four Rooms all methods use $k=40$ cumulants and in Mountain Car all methods use $k=80$ cumulants.

The inputs to the network were a one-hot encoding of the observation in the four-room domain and the usual position and velocity feature vector in Mountain car. The offline pre-training dataset contains $100000$ and $200000$ transitions for four-room and mountain car respectively. In both cases the dataset is generated and used to fill a (fixed) replay buffer, and then the agent is trained for $400000$ updates (each update using a minibatch of $32$ transitions sampled uniformly from the buffer/dataset). The learning rate for both offline and online training was the same as the standard DQN learning rate ($0.00025$), and similarly for the optimizer epsilon. The network architecture is a simple fully connected MLP with ReLU activations \citep{nair2010rectified} and two hidden layers of size $512$ (first) and $256$ (second), followed by a linear layer to give action-values.

We provide visualizations of the cumulants produced by each method and their corresponding value functions in \cref{fig:fourrooms_cumulants}, \cref{fig:fourrooms_gvfs}, \cref{fig:mountaincar_cumulants}, \cref{fig:mountaincar_gvfs}.

\cref{fig:fourrooms_offline_timeseries} and \cref{fig:mountaincar_offline_timeseries} also show the norm and srank of the representations being learned during offline pretraining.

\begin{figure}[h!]
  \centering
  \includegraphics[width=\textwidth]{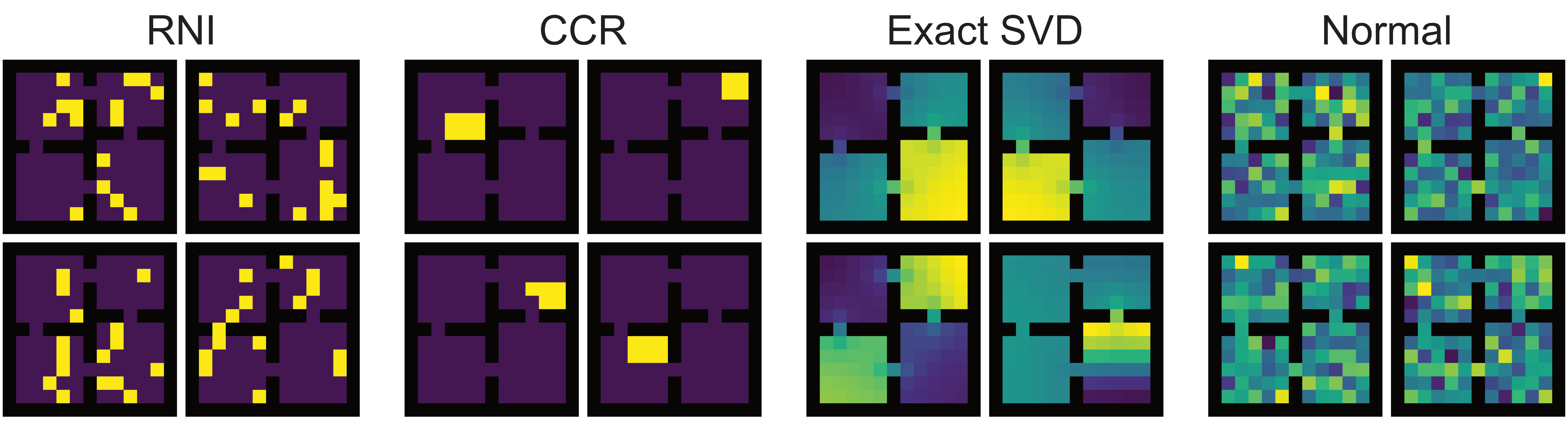}
  \caption{\looseness=-1 Examples of the learned cumulants produced during offline pre-training in FourRooms under the uniform random policy.}
  \label{fig:fourrooms_cumulants}
\end{figure}

\begin{figure}[h!]
  \centering
  \includegraphics[width=\textwidth]{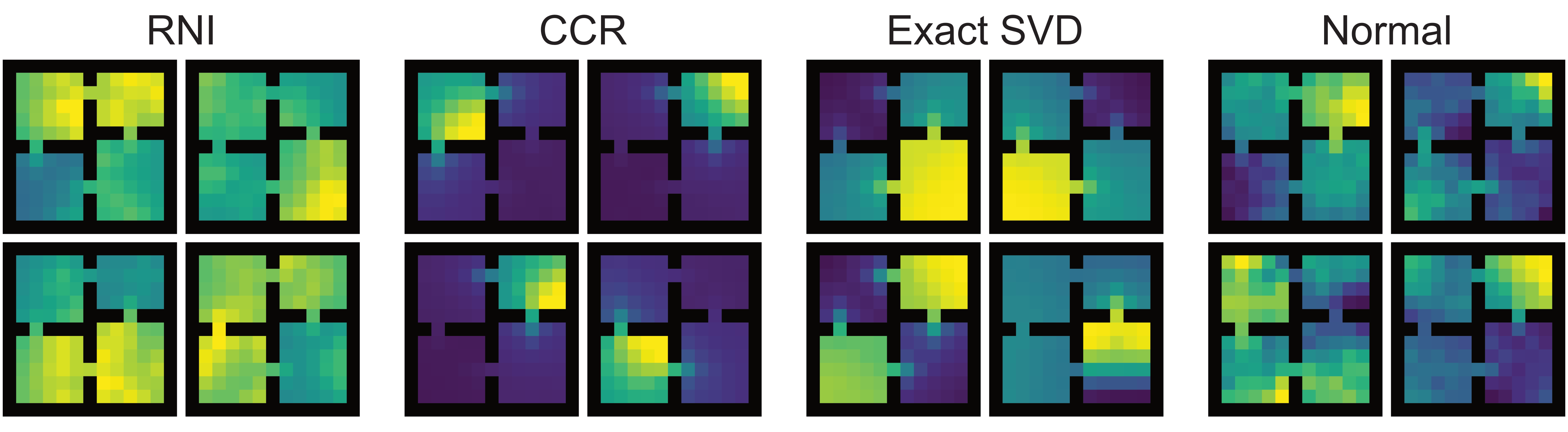}
  \caption{\looseness=-1 Examples of the learned (general) value functions produced during offline pre-training in FourRooms under the uniform random policy. In each case, the GVFs shown correspond to the value functions learned for the cumulants in Figure~\ref{fig:fourrooms_cumulants}.}
  \label{fig:fourrooms_gvfs}
\end{figure}

\begin{figure}[t]
  \centering
  \includegraphics[width=0.48\textwidth]{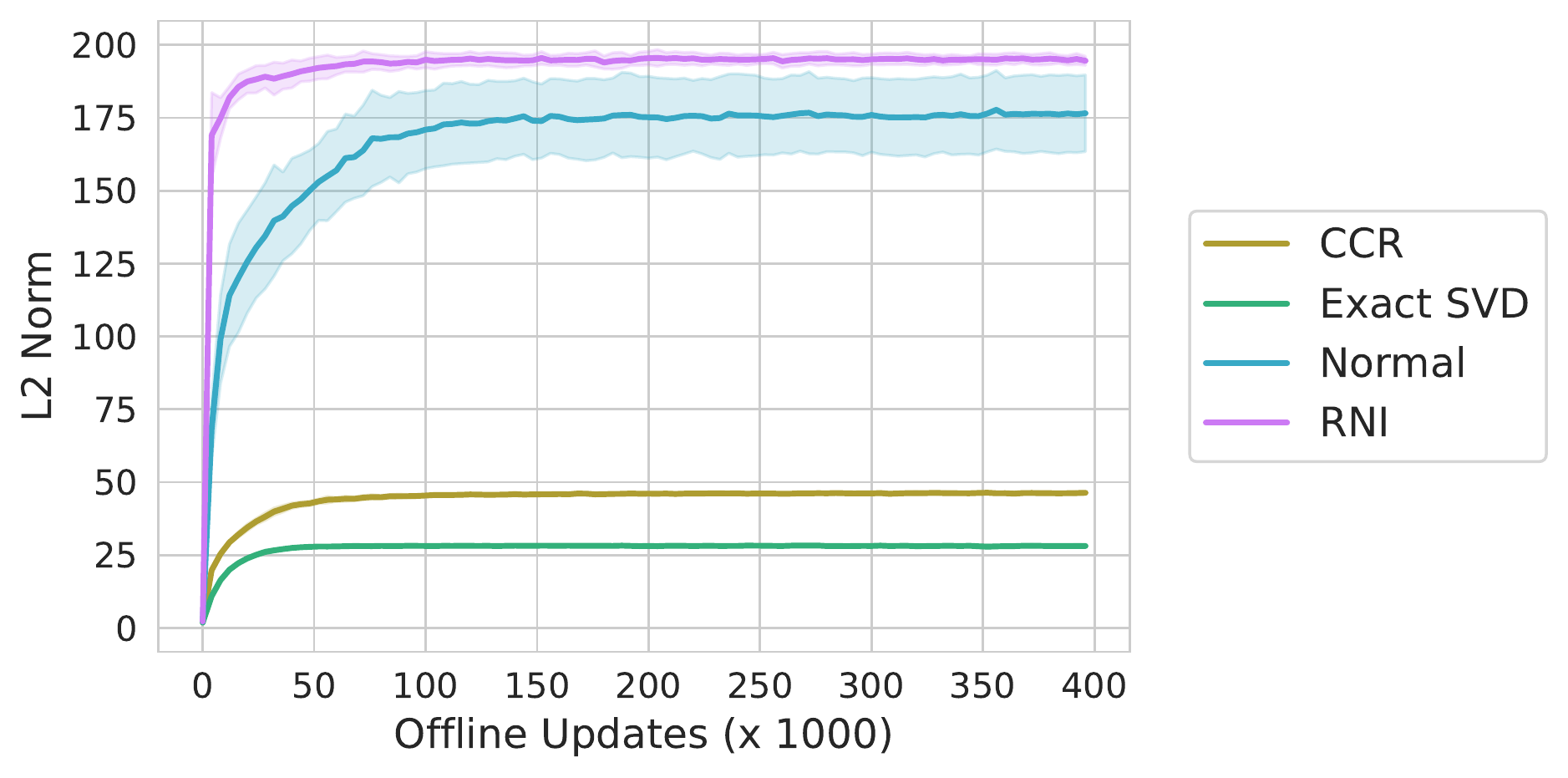}
  \includegraphics[width=0.48\textwidth]{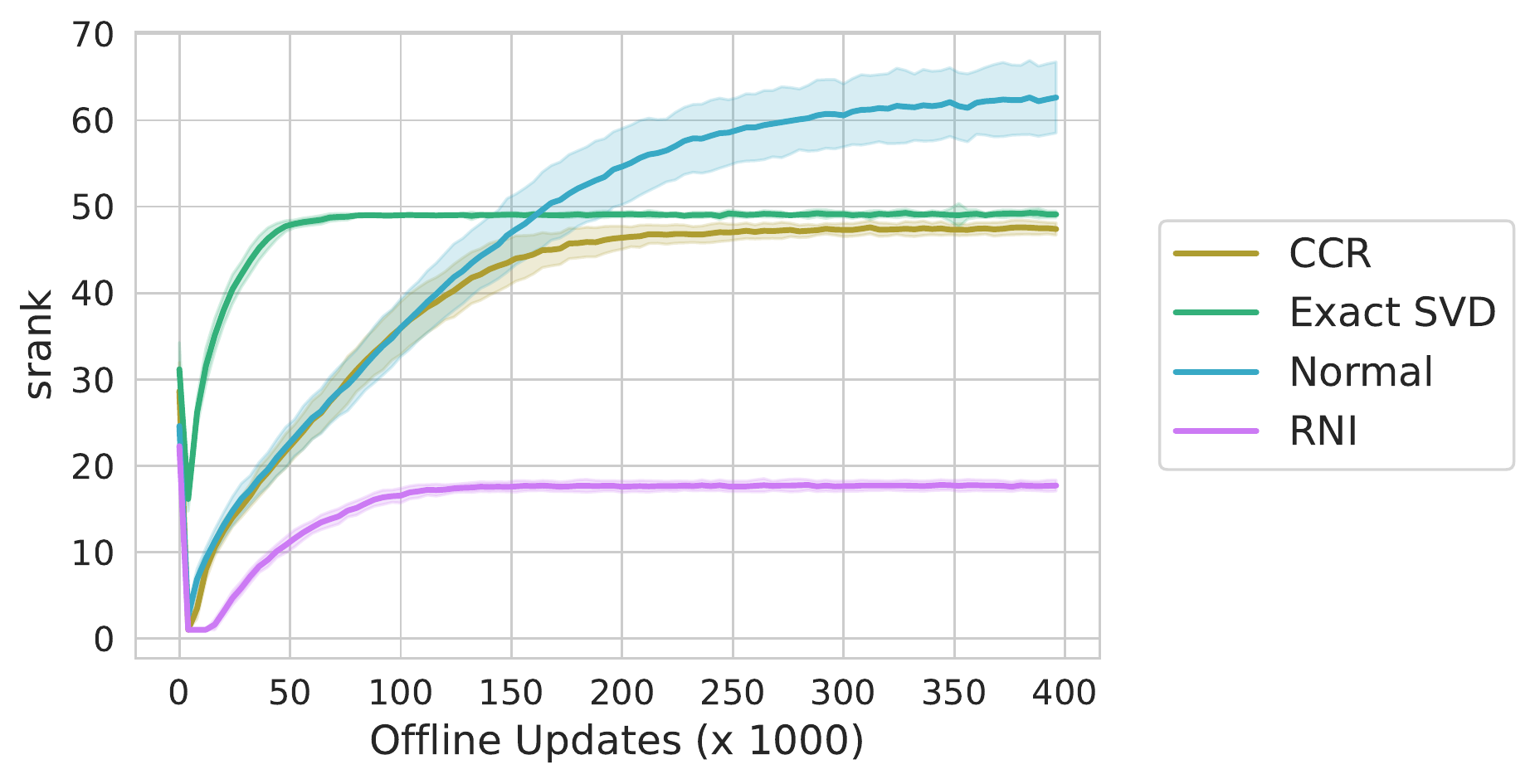}
  \caption{For the GVF-based representations during offline training in Four Rooms, their (\textbf{left}) L2 norm and (\textbf{right}) srank.}
  \label{fig:fourrooms_offline_timeseries}
\end{figure}

\begin{figure}[h!]
  \centering
  \includegraphics[width=\textwidth]{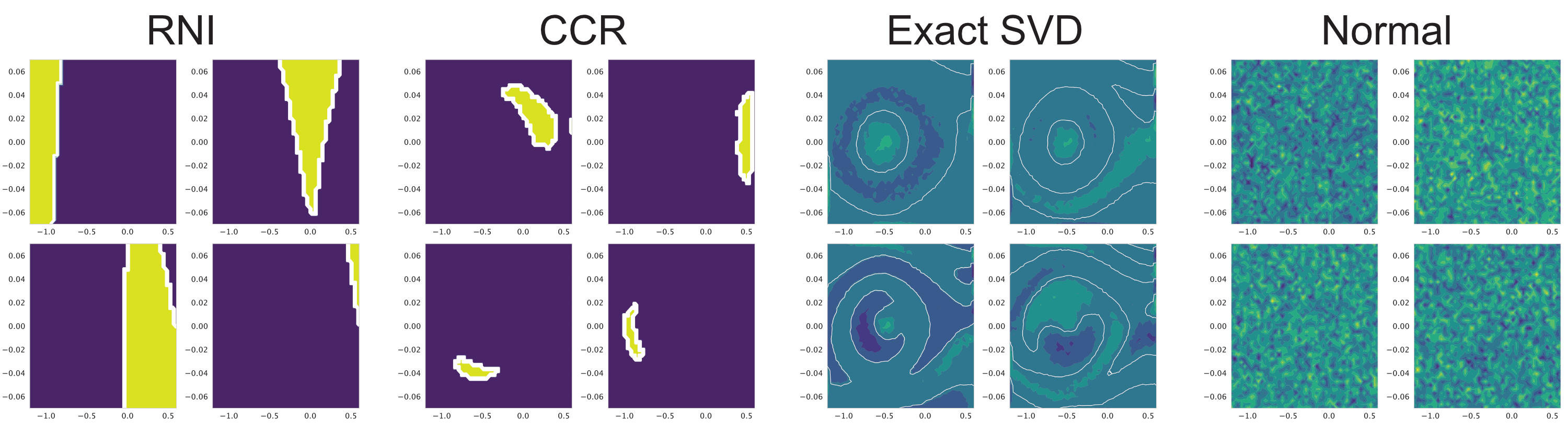}
  \caption{Examples of the learned cumulants produced during offline pre-training in sparse Mountain Car under the uniform random policy.}
  \label{fig:mountaincar_cumulants}
\end{figure}

\begin{figure}[h!]
  \centering
  \includegraphics[width=\textwidth]{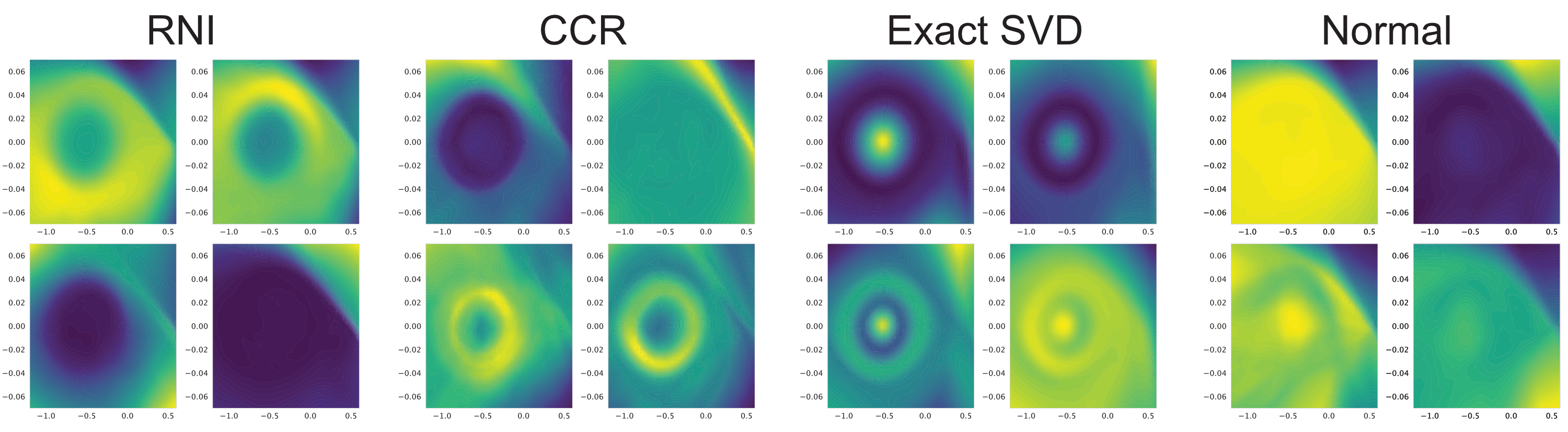}
  \caption{Examples of the learned (general) value functions produced during offline pre-training in sparse Mountain Car under the uniform random policy. In each case, the GVFs shown correspond to the value functions learned for the cumulants in Figure~\ref{fig:mountaincar_cumulants}.}
  \label{fig:mountaincar_gvfs}
\end{figure}

\begin{figure}[t]
  \centering
  \includegraphics[width=0.48\textwidth]{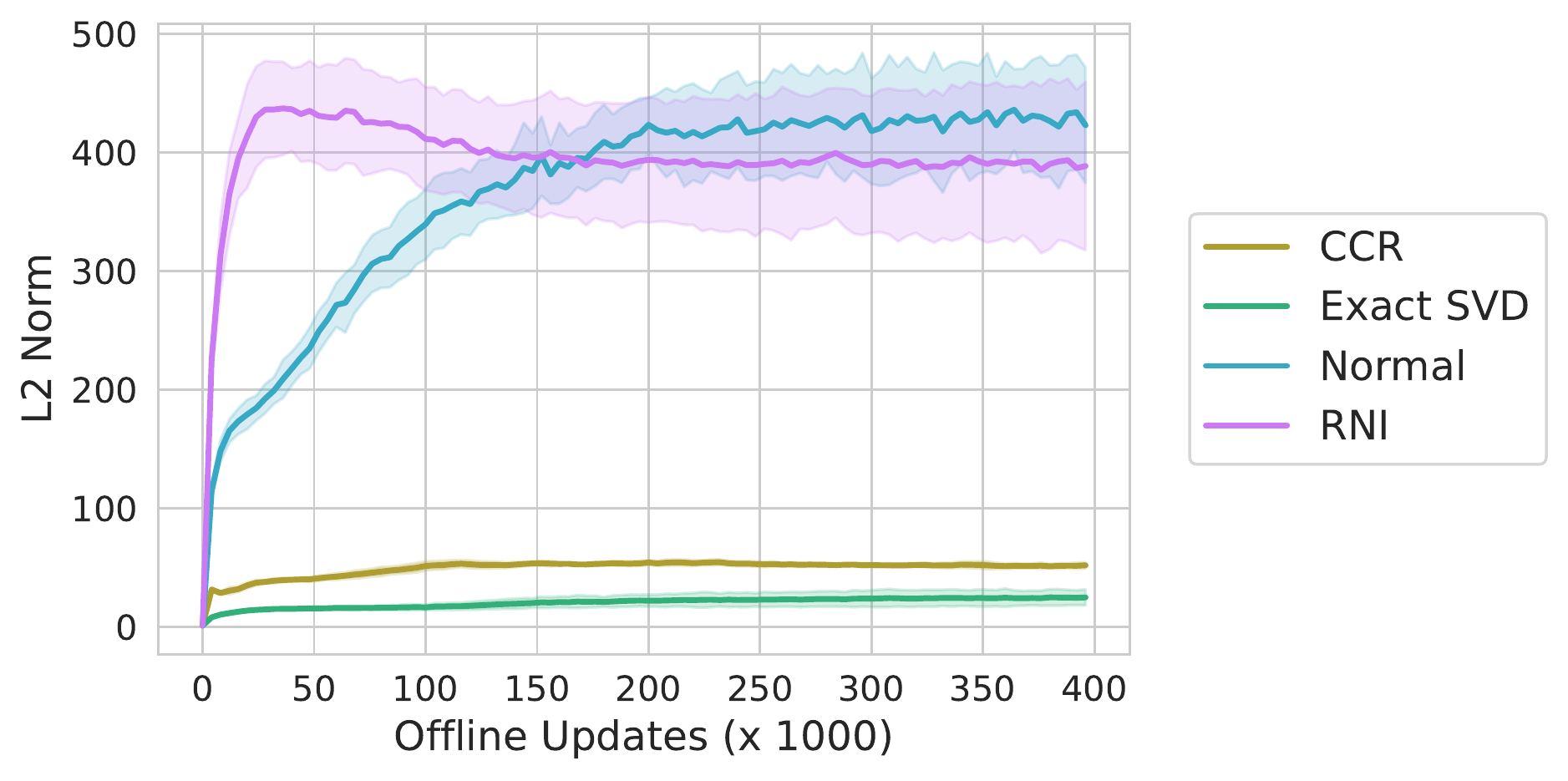}
  \includegraphics[width=0.48\textwidth]{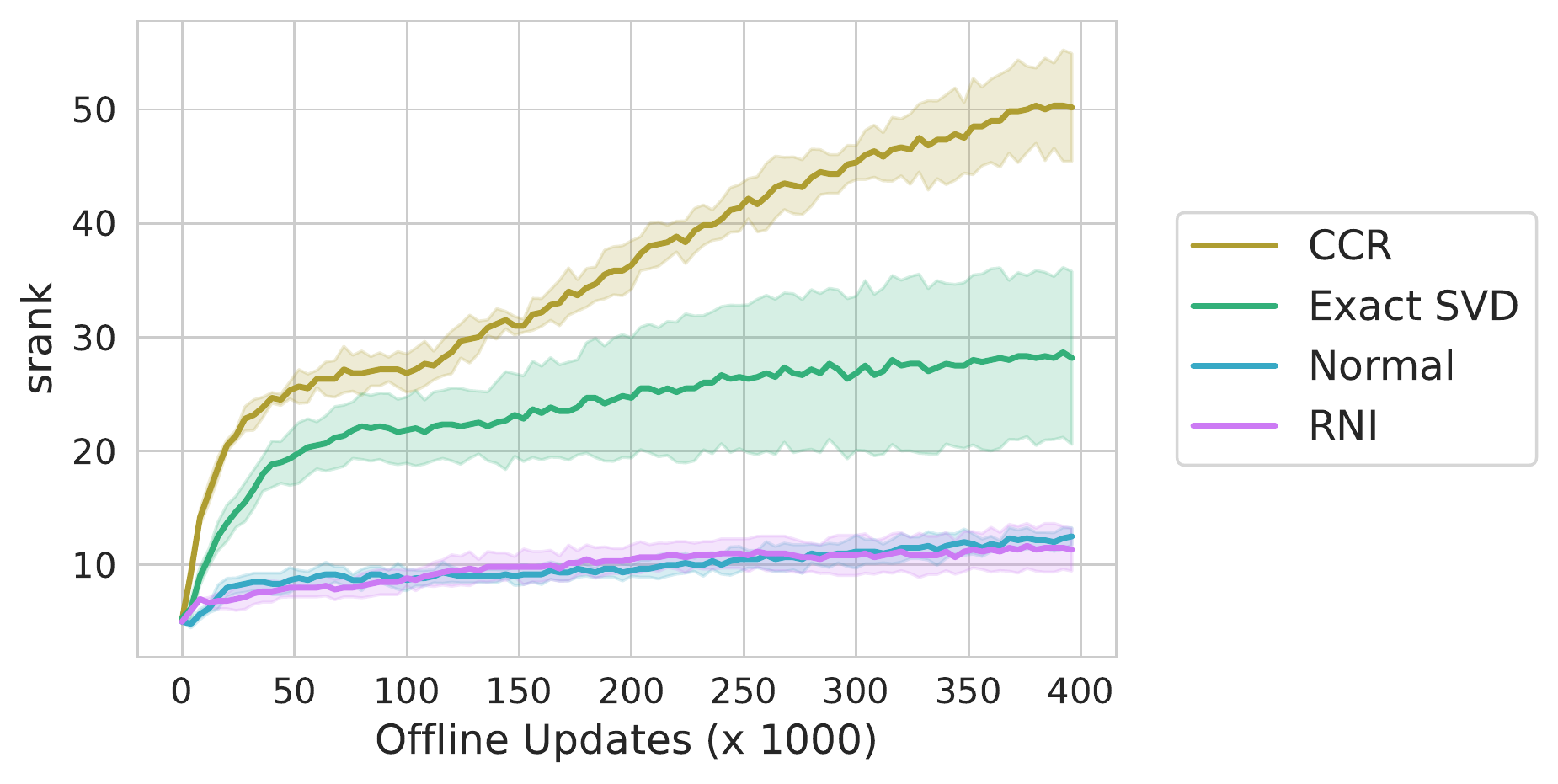}
  \caption{For the GVF-based representations during offline training in sparse Mountain Car, their (\textbf{left}) L2 norm and (\textbf{right}) srank.}
  \label{fig:mountaincar_offline_timeseries}
\end{figure}

\begin{figure}[t]
  \centering
  \includegraphics[width=0.48\textwidth]{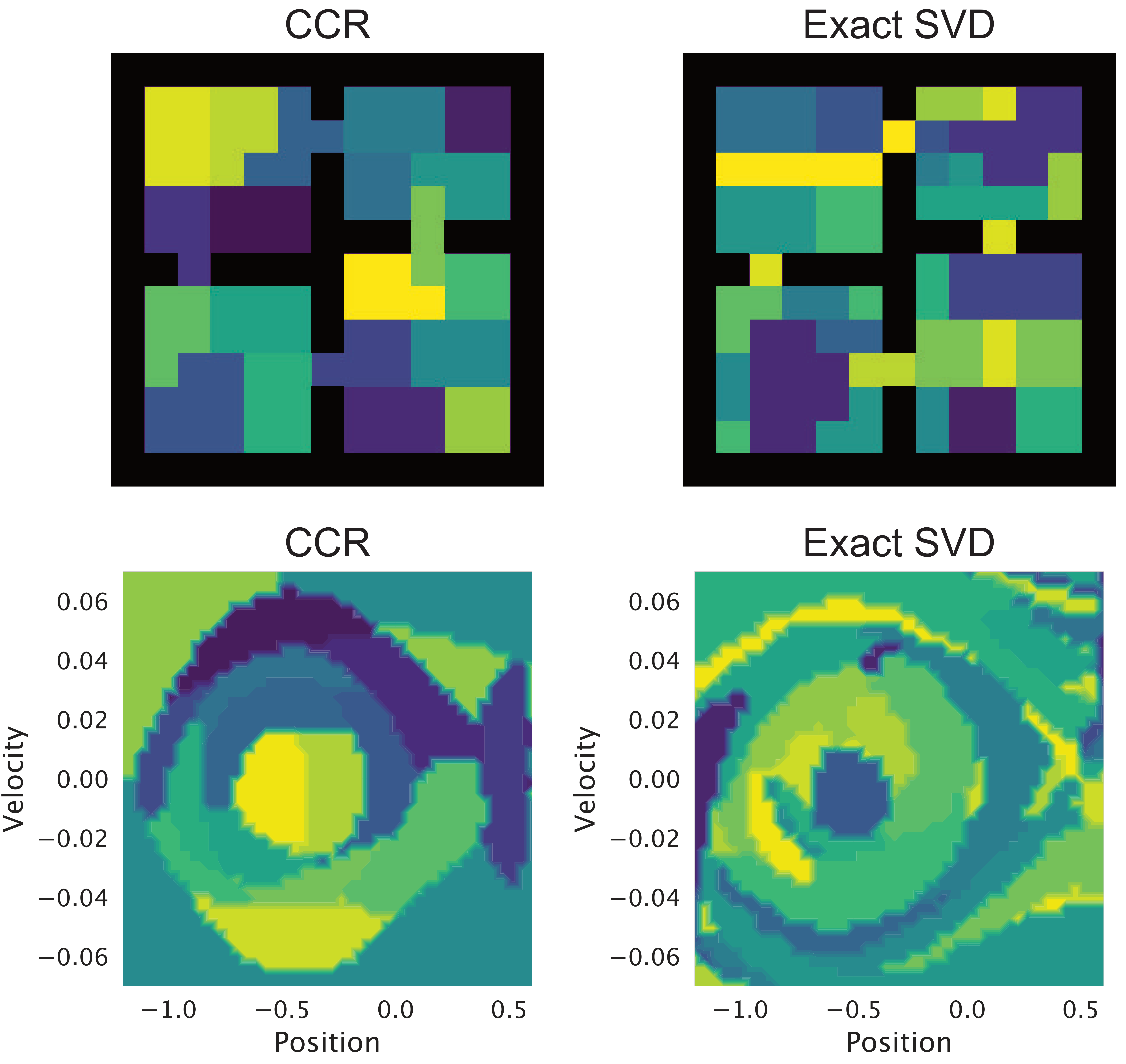}
  \caption{As CCR is effectively a clustering of states, we might ask in which states each cluster is active. We show an example of the maximally active cluster index for CCR (\textbf{left}) and Exact SVD (\textbf{right}) in the Four Rooms (\textbf{top}) and sparse Mountain Car (\textbf{bottom}) domains. These examples are for $20$ cumulants setting.}
  \label{fig:cumulant_clusters}
\end{figure}

\clearpage
\section{Proofs for \cref{sec:background}}
\label{app:background}
\SL*
\begin{proof}
Let $F_d$ denote the top $d$ left singular vectors of $\Psi$.
\begin{align*}
   \argmin_{\Phi \in \R^{S \times d}} \min_{W \in \R^{d \times T}} \|\Xi^{1/2} (\Phi W - \Psi) \|_F^2
    &= \argmin_{\Phi \in \R^{S \times d}}\| P^\perp_{\Xi^{1/2} \Phi} \Xi^{1/2} \Psi \|^2_F\\
    &= \{\Phi \in \R^{S \times d} \mid \exists M \in GL_d(\R), \Phi = F_d M\}
\end{align*}

The problem on the left side is a bilinear optimization problem in the variables $(\Phi, W)$.
Despite the non-convexity of this problem, it is now well known that these types of optimization problems can be efficiently solved
using (noisy) gradient descent efficiently, i.e., with number of iterations scaling at most polynomially in all problem specific parameters \citep{ge2017no, jin2017escape}. 

\end{proof}

\section{Proofs for \cref{sec:which}}
\label{app:which}
Throughout the appendix, we will use the notation $L := I - \gamma P^\pi$.

The beginning of this section is dedicated to proving the main result of \cref{sec:which}, \cref{thm:TDrepresentation}. Before that, we introduce the following necessary lemma.

\begin{lemma}
Let $\Phi \in \R^{S \times d}$ and $\Psi \in \R^{S \times T}$.
Let $P_\Phi$ be a (possibly oblique) projection onto $\Span(\Phi)$. We have 
\begin{align*}
    P_\Phi \Psi = \Psi \Longleftrightarrow \Span(\Psi) \subseteq \Span(\Phi)
\end{align*}
\label{lemma:obliqueprojection}
\end{lemma}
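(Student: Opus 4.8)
The plan is to prove the two implications separately, relying only on the two defining properties of a (possibly oblique) projection $P_\Phi$ onto $\Span(\Phi)$: that its range equals $\Span(\Phi)$, and that it is idempotent, $P_\Phi^2 = P_\Phi$. The first thing I would establish is the key auxiliary fact that idempotence forces $P_\Phi$ to act as the identity on its own range: if $x \in \Span(\Phi) = \mathrm{Range}(P_\Phi)$, then $x = P_\Phi y$ for some $y$, whence $P_\Phi x = P_\Phi^2 y = P_\Phi y = x$. This observation does essentially all of the work for both directions.

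For the reverse implication ($\Leftarrow$), I would assume $\Span(\Psi) \subseteq \Span(\Phi)$ and argue columnwise. Each column $\psi_i$ of $\Psi$ lies in $\Span(\Psi) \subseteq \Span(\Phi)$, so by the identity-on-range property above we get $P_\Phi \psi_i = \psi_i$ for every $i$. Collecting the columns back into matrix form yields $P_\Phi \Psi = \Psi$.

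For the forward implication ($\Rightarrow$), I would assume $P_\Phi \Psi = \Psi$ and simply read off, column by column, that $\psi_i = P_\Phi \psi_i \in \mathrm{Range}(P_\Phi) = \Span(\Phi)$. Since every column of $\Psi$ lies in $\Span(\Phi)$, we conclude $\Span(\Psi) \subseteq \Span(\Phi)$, as required.

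I do not anticipate a genuine obstacle here, since this is a standard characterization of projections; the only point that needs care is that $P_\Phi$ is assumed to be an oblique, not necessarily orthogonal, projection. Consequently I must avoid invoking any self-adjointness, symmetry, or norm-minimizing property of $P_\Phi$, and rely solely on idempotence together with the description of its range. Both properties used above are available for an arbitrary oblique projection, so the argument goes through unchanged.
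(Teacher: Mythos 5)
Your proof is correct. The logical skeleton is the same as the paper's: both directions are handled columnwise, and both hinge on the fact that $P_\Phi$ acts as the identity on $\Span(\Phi)$. The only real difference is how that fact is obtained. The paper works with an explicit parametrization, writing $P_\Phi = \Phi (X^\top \Phi)^{-1} X^\top$ with $X^\top \Phi$ invertible, so that the identity-on-range property falls out by direct computation ($P_\Phi \Phi y = \Phi (X^\top\Phi)^{-1} X^\top \Phi y = \Phi y$), and the forward direction reads off $\Psi = \Phi(Q\Psi)$ from the factored form. You instead work axiomatically from the two defining properties of a projection, idempotence and $\mathrm{Range}(P_\Phi) = \Span(\Phi)$, deriving identity-on-range from $P_\Phi^2 = P_\Phi$. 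Your version is marginally more general and cleaner: it applies to any idempotent operator with the stated range, without assuming $\Phi$ has full column rank or that $P_\Phi$ admits the factored form (though for full-rank $\Phi$ the two formulations are equivalent, since every projection with range $\Span(\Phi)$ can be written as $\Phi(X^\top\Phi)^{-1}X^\top$). You were also right to flag, as the key point of care, that no self-adjointness may be invoked; neither proof uses it.
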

\begin{proof}
$P_\Phi$ can be written as $P_\Phi = \Phi (X^\top \Phi)^{-1} X^\top$ where $\Phi, X \in \R^{S \times d}$ and $X^\top \Phi \in \R^{d \times d}$ is invertible. Write $P_\Phi = \Phi Q$ with $Q= (X^\top \Phi)^{-1} X^\top$.\\

$(\implies)$ Suppose $\Psi \in \R^{S \times T}$ such that $P_\Phi \Psi = \Psi$. Then, $\Psi = \Phi (Q \Psi)$. Let $\omega \in \R^T$. $\Psi \omega = \Phi (Q \Psi) \omega$ so $\Psi \omega \in \Span(\Phi)$ Hence $\Span(\Psi) \subseteq \Span(\Phi)$.  \\
$(\impliedby)$ Suppose $\Span(\Psi) \subseteq \Span(\Phi)$. Denote $(e_t)$ the standard basis.
We have $P_\Phi \Psi = (\sum_t P_\Phi (\Psi e_t) e_t^\top) $. Note that $\Psi e_t \in \Span(\Psi) \subseteq \Span(\Phi)$. Hence, there exists $y_t \in \R^d$ such that $ \Psi e_t= \Phi y_t$.
Now, $P_\Phi \Psi = (\sum_t P_\Phi (\Phi y_t) e_t^\top) = (\sum_t  \Phi (X^\top \Phi)^{-1} X^\top \Phi y_t e_t^\top) = (\sum_t  \Phi y_t e_t^\top) = (\sum_t \Psi e_t e_t^\top) = \Psi.$
\end{proof}

\criticalpoints*
\begin{proof}
Start with these equations.
\begin{align*}
\text{For a fixed } \Phi, \nabla_W \|(\Xi)^{\frac{1}{2}}(\Phi W - G-\gamma P^\pi \SG[\Phi W])\|_F^2 &= 2 \Phi^\T \Xi (\Phi W -  G-\gamma P^\pi \Phi W)\\
\text{For a fixed } W, \nabla_\Phi \|(\Xi)^{\frac{1}{2}} (\Phi W - G-\gamma P^\pi \SG[\Phi W])\|_F^2 &= 2 \Xi (\Phi W -  G-\gamma P^\pi \Phi W) {W}^\T
\end{align*}

 By \cref{assump:distribution}, $\Phi^\T \Xi L \Phi$ is invertible for all full rank representations $\Phi$. Hence, for a fixed full rank $\Phi$,
\begin{align*}
     \nabla_W \|(\Xi)^{\frac{1}{2}} (\Phi W - G-\gamma P^\pi \SG[\Phi W])\|_F^2 = 0 \Longleftrightarrow W^*_\Phi = \left ( \Phi^\T \Xi L \Phi \right)^{-1}  \Phi^\T \Xi G\\
\end{align*}
Using the second fixed-point equation:
\begin{align*}
    0 &= (L \Phi W - G)W^\T \Longleftrightarrow L \Phi WW^\T = G W^\T.
\end{align*}
Now plugging in the expression for $W^*_\Phi$, 
\begin{align*}
    L \Phi \left ( \Phi^\T D_\pi  L \Phi \right)^{-1}  \Phi^\T D_\pi G \left(\left ( \Phi^\T D_\pi L \Phi \right)^{-1}  \Phi^\T D_\pi G \right)^\T = G \left(\left ( \Phi^\T D_\pi L \Phi \right)^{-1}  \Phi^\T D_\pi G\right)^\T \\
 \Leftrightarrow     L \Phi \left ( \Phi^\T D_\pi L \Phi \right)^{-1}  \Phi^\T D_\pi G  G^\T D_\pi \Phi \left ( \Phi^\T D_\pi L \Phi \right)^{-\T} = G G^\T D_\pi \Phi \left ( \Phi^\T D_\pi L \Phi \right)^{-\T} \\   
  \Leftrightarrow  \Phi \left ( \Phi^\T D_\pi L \Phi \right)^{-1}  \Phi^\T D_\pi G  G^\T D_\pi \Phi = L^{-1} G G^\T D_\pi \Phi  \\   
  \Leftrightarrow \Pi_{L^\T D_\pi \Phi} L^{-1} G  G^\T D_\pi \Phi = L^{-1} G G^\T D_\pi \Phi  \\   
\end{align*}
where $\Pi_X=\Phi (X^\T \Phi)^{-1} X^\T$ is the oblique projection onto $\Span(\Phi)$ orthogonally to $\Span(X)$.
This is equivalent to $\Pi^\perp_{L^\T D_\pi \Phi} L^{-1} G  G^\T D_\pi \Phi =0$, which is equivalent to saying that $\Span(\Phi)$ must be an \textit{invariant subspace} of $L^{-1}GG^\T D_\pi$ by \cref{lemma:obliqueprojection}.

In other words, we have shown that all non-degenerate full-rank $\Phi$ which are critical points span invariant subspaces of $L^{-1}GG^\T D_\pi$.
\end{proof}
\TDrepresentationGI*
\begin{proof}
Let $G=I$ and $\Xi = I / |S|$.
By \cref{criticalpoints}, all full rank representations which are critical points of $\mathcal{L}_{\mathrm{aux}}^{\mathrm{TD}}$ span real invariant subspaces of $(I - \gamma P^\pi)^{-1}$.

Let $\Phi$ be a representation spanning an invariant subspace of $(I - \gamma P^\pi)^{-1}$. By definition, $\Span((I - \gamma P^\pi)^{-1} \Phi) \subseteq \Span(\Phi).$ Because $(I - \gamma P^\pi)$ is invertible, we have $\mathrm{dim}((I - \gamma P^\pi)^{-1} \Phi) = \mathrm{dim}(\Phi)$. Hence, we actually have $\Span((I - \gamma P^\pi)^{-1} \Phi) = \Span(\Phi).$ 
There exists $w_1, w_2 \in \R^{d}$ such that $ \Phi w_1 = (I - \gamma P^\pi)^{-1} \Phi w_2 $ so $(I - \gamma P^\pi)  \Phi w_1 = \Phi w_2 $. It follows that $\Phi \frac{(w_1 - w_2)}{\gamma} = P^\pi \Phi w_1$. Hence, $P^\pi \Phi w_1 \in \Span(\Phi)$ and
$\Span(P^\pi \Phi) \subseteq \Span(\Phi)$. We conclude that $\Phi$ spans an invariant subspace of $P^\pi$.
\end{proof}

\TDrepresentation*
\begin{proof}
Consider this objective:
\begin{align*}
    \mathcal{L}(\Phi) = \frac{1}{2}\|(\Xi^{\frac{1}{2}})(\Phi \Wtd - G-\gamma P^\pi \SG[\Phi \Wtd])\|_F^2,
\end{align*}
and $ \Wtd = \left ( \Phi^\T \Xi L \Phi \right)^{-1}  \Phi^\T  \Xi  G$ and define $L := I - \gamma P^\pi$.
Observe that:
\begin{align*}
\text{For a fixed } W, \nabla_\Phi \|\Phi W - G-\gamma P^\pi \SG[\Phi W]\|_F^2 &= 2  \Xi  (L \Phi W - G)(W)^\T
\end{align*}
So now we consider the continuous time dynamics:

\begin{align}
    \frac{d}{dt} \Phi = - \nabla_\Phi \mathcal{L}(\Phi) := - F(\Phi),
\end{align}
where:
\begin{align*}
    F(\Phi) := \Xi(L \Phi \Wtd - G)(\Wtd)^\T  = \Xi L(\Pi_{L^\top \Xi \Phi} - I)L^{-1} G G^\T \Xi \Phi (\Phi^\T \Xi L \Phi)^{-\T} 
\end{align*}
Consider the case $G=I$ and $\Xi=I / |\sspace|$. 

The proof strategy consists in constructing an eigenvector $\Delta \in \R^{S \times d}$ of $\partial_\Phi F(\Phi)$ as a function of $\Phi, L, G$ such that
$\partial_\Phi F(\Phi)[\Delta] = -\lambda \Delta$
for some $\Re(\lambda) > 0$. For every non top-$d$ invariant subspace, we prove that the Jacobian of the dynamics $-F$ has a positive real part eigenvalue.

\begin{align*}
    \partial_\Phi F(\Phi)[\Delta] &=  (L \Delta W^*_\Phi + L \Phi (dW_\Phi^*) ) (W^*_\Phi)^\T + (L\Phi W_\Phi^* - I) (d W_\Phi^*)^\T \\
    \partial_\Phi W^*_\Phi [\Delta] &= -(\Phi^\T L \Phi)^{-1} (\Delta^\T L \Phi + \Phi^\T L \Delta) (\Phi^\T L \Phi)^{-1} \Phi^\T  + (\Phi^\T L \Phi)^{-1} \Delta^\T.
\end{align*}
We also have the identity
\begin{align*}
    W = (\Phi^\T L \Phi)^{-1} \Phi^\T.
\end{align*}
Now, if it is the case that $\Delta^\T \Phi = 0$, then by the $L$-invariance of $\Phi$, we also
have that $\Delta^\T L \Phi = 0$, and hence:
\begin{align*}
    W \Delta = 0, \quad W L^\T \Delta = 0.
\end{align*}
Next, we start with the simplification:
\begin{align*}
     \partial_\Phi W^*_\Phi [\Delta] &= -(\Phi^\T L \Phi)^{-1} \Phi^\T L \Delta (\Phi^\T L \Phi)^{-1} \Phi^\T + (\Phi^\T L \Phi)^{-1} \Delta^\T \\
     &= - W L \Delta W + (\Phi^\T L \Phi)^{-1} \Delta^\T.
\end{align*}
Using the shorthand $W = W^*_\Phi$ and the optimality condition $(L \Phi W - I) W^\T = 0$,
\begin{align*}
    (L \Delta W + L \Phi (\partial W)) W^\T &= L \Delta WW^\T - L \Phi W L \Delta W W^\T \\
    (L \Phi W - I) (\partial W)^\T &= (L \Phi W - I) \Delta (\Phi^\T L \Phi)^{-\T} = - \Delta (\Phi^\T L \Phi)^{-\T}.
\end{align*}
Therefore:
\begin{align*}
    \partial_\Phi F(\Phi)[\Delta] &=L \Delta WW^\T - L \Phi W L \Delta W W^\T - \Delta(\Phi^\T L \Phi)^{-\T}.
\end{align*}
Furthermore, let us write $L \Phi = \Phi N$ for $N \in \R^{d \times d}$ with $N$ invertible;
this holds for some such $N$ since $\Phi$ is $L$-invariant and rank $d$.
With this notation, we see immediately that $L \Phi W = P_\Phi$, since:
\begin{align*}
    \Phi^\T L \Phi = \Phi^\T \Phi N \Longrightarrow W = (\Phi^\T L \Phi)^{-1} \Phi^\T = N^{-1} (\Phi^\T \Phi)^{-1} \Phi^\T,
\end{align*}
and therefore:
\begin{align*}
    L \Phi W = \Phi N W = \Phi (\Phi^\T \Phi)^{-1} \Phi^\T = P_\Phi.
\end{align*}
Hence:
\begin{align*}
     \partial_\Phi F(\Phi)[\Delta] &=P^\perp_\Phi L\Delta WW^\T - \Delta (\Phi^\T L \Phi)^{-\T}.
\end{align*}
This is the starting point of our analysis.
Let $(v_1, \dots, v_S)$ denote the eigenvectors of $P$ corresponding to
$(\lambda_1, \dots, \lambda_S)$.
Let us choose $\Delta = P_\Phi^\perp v_i u^\T$.
The identity $P^\perp_\Phi L \Delta = (1-\gamma \lambda_i) \Delta$ yields that:
\begin{align*}
    \partial_\Phi F(\Phi)[\Delta] = \Delta[ (1-\gamma \lambda_i) I - N ] N^{-1} (\Phi^\T \Phi)^{-1} N^{-\T}.
\end{align*}

Let $(v_1, \dots, v_S)$ denote the eigenvectors of $P$ corresponding to
$(\lambda_1, \dots, \lambda_S)$.
Let the columns of $\Phi$ be $(v_{i_1}, \dots, v_{i_d})$
with corresponding eigenvalues $(\lambda_{i_1}, \dots, \lambda_{i_d})$.
Then, we have the identity:
$$
    L \Phi = \Phi (I - \gamma \Lambda), \quad \Lambda = \mathrm{diag}(\lambda_{i_1}, \dots, \lambda_{i_d}).
$$
Note that:
\begin{align*}
    W &= (I - \gamma \Lambda)^{-1} (\Phi^\T \Phi)^{-1} \Phi^\T, \\
    WW^\T &= (I - \gamma \Lambda)^{-1} (\Phi^\T \Phi)^{-1} (I - \gamma \Lambda)^{-1}, \\
    L\Phi W &= P_\Phi,
\end{align*}
so we have:
\begin{align*}
    \partial_\Phi F(\Phi)[\Delta] = P^\perp_\Phi L \Delta WW^\T - \Delta (\Phi^\T \Phi)^{-1} (I - \gamma \Lambda)^{-1}.
\end{align*}
Next, we will choose
$\Delta = P^\perp_\Phi v_i u^\T$, with the index $i$ and vector $u$ to be specified.
Note that if we want $\Delta \neq 0$ we are constrained to choosing $i \not\in \{i_1, \dots, i_d\}$.

With this choice:
\begin{align*}
    P^\perp_\Phi L \Delta = P^\perp_\Phi [ (1-\gamma\lambda_i) v_i u^\T - L P_\Phi v_i u^\T ] = (1-\gamma \lambda_i) \Delta,
\end{align*}
where the last equality holds since by the $L$-invariance of $\Phi$, we have that $P^\perp_\Phi L P_\Phi = 0$.
Hence:
\begin{align*}
    \partial_\Phi F(\Phi)[\Delta] &= (1-\gamma \lambda_i) \Delta (I-\gamma \Lambda)^{-1} (\Phi^\T \Phi)^{-1} (I -\gamma \Lambda)^{-1} - \Delta (\Phi^\T \Phi)^{-1} (I-\gamma\Lambda)^{-1}.\\
    &= -\gamma \Delta ( \lambda_i I - \Lambda) (I - \gamma \Lambda)^{-1} (\Phi^\T \Phi)^{-1} (I - \gamma \Lambda)^{-1}.
\end{align*}
So, we just need to choose $u$ to be a left eigenvector of
the matrix $( \lambda_i I - \Lambda) (I - \gamma \Lambda)^{-1} (\Phi^\T \Phi)^{-1} (I - \gamma \Lambda)^{-1}$,
associated with the largest eigenvalue. Thus, it remains to show that this matrix has at least one positive eigenvalue.
By using the fact that the eigenvalues of size conforming matrices $AB$ equals
the eigenvalues of $BA$, the eigenvalues of this matrix are:
\begin{align*}
    &\mathrm{eig}(( \lambda_i I - \Lambda) (I - \gamma \Lambda)^{-1} (\Phi^\T \Phi)^{-1} (I - \gamma \Lambda)^{-1}) \\
    &= \mathrm{eig}( (I-\gamma\Lambda)^{-1}(\lambda_i I - \Lambda) (I - \gamma\Lambda)^{-1} (\Phi^\T \Phi)^{-1} ) \\
    &= \mathrm{eig}( (\Phi^\T \Phi)^{-1/2} (I-\gamma\Lambda)^{-1}(\lambda_i I - \Lambda) (I - \gamma\Lambda)^{-1} (\Phi^\T \Phi)^{-1/2} ).
\end{align*}
Now, we need an auxiliary lemma.
\begin{proposition}
Let $A, B$ be symmetric matrices, with $B$ invertible. We have that:
\begin{align*}
    \lambda_{\max}(B A B) \geq \frac{\lambda_{\max}(A)}{\lambda_{\max}(B^{-2})}.
\end{align*}
\end{proposition}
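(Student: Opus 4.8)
The plan is to reduce everything to the variational (Rayleigh quotient) characterization of the largest eigenvalue. Since $A$ and $B$ are symmetric, the product $BAB$ is again symmetric, because $(BAB)^\top = B^\top A^\top B^\top = BAB$. Hence $\lambda_{\max}(BAB)$ admits the representation
\begin{align*}
    \lambda_{\max}(BAB) = \max_{x \neq 0} \frac{x^\top BAB x}{x^\top x}.
\end{align*}

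The key manipulation is the change of variables $y = Bx$, which is a bijection of $\R^S \setminus \{0\}$ onto itself since $B$ is invertible. Writing $x = B^{-1} y$ and using the symmetry of $B$ (so that $B^{-\top} = B^{-1}$ and $B^{-\top} B^{-1} = B^{-2}$), the numerator becomes $x^\top BAB x = y^\top A y$ while the denominator becomes $x^\top x = y^\top B^{-2} y$. This yields the generalized Rayleigh quotient
\begin{align*}
    \lambda_{\max}(BAB) = \max_{y \neq 0} \frac{y^\top A y}{y^\top B^{-2} y},
\end{align*}
where the denominator is strictly positive for $y \neq 0$ because $B^{-2} = (B^{-1})^\top B^{-1}$ is symmetric positive definite.

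To produce the lower bound I would not solve the maximization but simply evaluate the quotient at a convenient test vector. Taking $v$ to be a unit-norm eigenvector of $A$ associated with $\lambda_{\max}(A)$ gives $v^\top A v = \lambda_{\max}(A)$, while the standard bound $v^\top B^{-2} v \leq \lambda_{\max}(B^{-2}) \|v\|^2 = \lambda_{\max}(B^{-2})$ controls the denominator. Since the maximum dominates any particular evaluation,
\begin{align*}
    \lambda_{\max}(BAB) \geq \frac{v^\top A v}{v^\top B^{-2} v} \geq \frac{\lambda_{\max}(A)}{\lambda_{\max}(B^{-2})},
\end{align*}
which is the claim.

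The step that needs care is the last inequality $\tfrac{\lambda_{\max}(A)}{v^\top B^{-2} v} \geq \tfrac{\lambda_{\max}(A)}{\lambda_{\max}(B^{-2})}$: shrinking the positive denominator increases the fraction only when the numerator is nonnegative, so the argument as written presumes $\lambda_{\max}(A) \geq 0$. This is exactly the regime in which the result is invoked, since the purpose in \cref{thm:TDrepresentation} is to exhibit a strictly positive eigenvalue of the relevant matrix to certify instability, so the corresponding $A = (\lambda_i I - \Lambda)(\dots)$ has a nonnegative top eigenvalue there. I expect this sign bookkeeping, rather than any substantial computation, to be the only genuine subtlety; everything else is the bijective change of variables and a single Rayleigh-quotient evaluation.
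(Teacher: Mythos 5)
Your proof is correct and essentially the same as the paper's: evaluating your generalized Rayleigh quotient $\max_y \, (y^\top A y)/(y^\top B^{-2} y)$ at the top eigenvector $q$ of $A$ is exactly the paper's choice of test vector $v = B^{-1}q/\|B^{-1}q\|$ in the standard Rayleigh quotient for $BAB$, followed by the identical bound $q^\top B^{-2} q \leq \lambda_{\max}(B^{-2})$. The sign caveat you flag is genuine — the inequality as stated can fail when $\lambda_{\max}(A) < 0$ and the paper's proof silently relies on $\lambda_{\max}(A) \geq 0$ as well — but, as you observe, this holds in the only place the proposition is invoked, where $\lambda_i I - \Lambda$ has a positive diagonal entry.
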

\begin{proof}
Let $q$ be the unit eigenvector associated with the largest eigenvalue of $A$.
Since $B A B$ is symmetric, we can use the variational characterization of the largest eigenvalue to show:
\begin{align*}
    \lambda_{\max}(B A B) &= \sup_{\norm{v}=1} v^\T B A B v \\
    &\geq \frac{q^\T A q}{\norm{B^{-1} q}^2} && \text{choosing } v= \frac{B^{-1} q}{\norm{B^{-1} q}} \\
    &= \frac{\lambda_{\max}(A)}{\norm{B^{-1} q}^2} \\
    &\geq \frac{\lambda_{\max}(A)}{\lambda_{\max}(B^{-2})} &&\text{since } \norm{B^{-1}q} \leq \opnorm{B^{-1}}.
\end{align*}
\end{proof}

With this proposition, we can conclude the proof. 
In particular, since we assume that $\Phi$ contains at least one eigenvector 
associated with $\lambda_{d+1}, \dots, \lambda_{S}$, then 
as long as we choose the index $i$ of $v$ in the following set $[d] \cap \{i_1, \dots, i_d\}^c$
(where the set complementation is done w.r.t.\ $[S]$),
then we will satisfy both (a) $\Delta \neq 0$ and (b)
there is at least one diagonal entry in $\lambda_i I - \Lambda$ which is positive
(this is where the strict gap assumption $\lambda_d > \lambda_{d+1}$ enters).
Hence, we have that $\lambda_{\max}((I-\gamma\Lambda)^{-1}(\lambda_i I - \Lambda) (I - \gamma\Lambda)^{-1}) > 0$,
which concludes the proof.

\end{proof}

\residualrepresentation*
\begin{proof}
We can write the loss function to be minimized as 
\begin{align*}
   J(\Phi)&= \min_{W \in \R^{d \times T}} \|\Xi^{1/2}( \Phi W - (G+\gamma P^\pi \Phi W))\|_F^2 \\
   &=\min_{W \in \R^{d \times T}} \|\Xi^{1/2}(\Phi W -\gamma P^\pi \Phi W-G)\|_F^2 \\
    &=\min_{W \in \R^{d \times T}} \|\Xi^{1/2}((I -\gamma P^\pi)\Phi W-G)\|_F^2
\end{align*}
Now, 
\begin{align*}
    \argmin_{\Phi \in \R^{S \times d}} \min_{W \in \R^{d \times T}} \|\Xi^{1/2}((I -\gamma P^\pi)\Phi W-G)\|_F^2 &= \argmin_{\Phi \in \R^{S \times d}} \|P^\perp_{\Xi^{1/2}(I - \gamma P^\pi)\Phi} \Xi^{1/2} G \|_F^2 \\
    &= \{\Phi \in \R^{S \times d} \mid \Phi = (I - \gamma P^\pi)^{-1}F_d M, M \in GL_d(\R) \}
\end{align*}
This set of representations can be recovered by stochastic gradient descent efficiently, i.e., with number of SGD iterations scaling at most polynomially in all problem specific parameters \citep{ge2017no, jin2017escape} in the context of SGD.
\end{proof}

\symmetrictransitions*
\begin{proof}
Assume that $P^\pi$ is symmetric so that $L$ and $L^{-1}$ are also symmetric.

By \cref{prop:SL}, running SGD on the supervised objective $\cLsup$ using $\Psi= L^{-1}G$ as targets results in a representation spanning the top-$d$ left singular vectors of $L^{-1}G$ which are the same as the top-$d$ left singular vectors of $L^{-1}$.

 By assumption $G$ is orthogonal, hence $G G^\T=I$. Because $L^{-1}GG^\T$ is symmetric, all its eigenvalues are real. By \cref{thm:TDrepresentation}, running gradient descent on $\cLtd$ using $G$ as the cumulant matrix converges to the top-$d$ eigenvectors of $L^{-1}GG^\T = L^{-1}$. Indeed, the subspaces given by the span of the right eigenvectors of $L^{-1}$ are the only $L^{-1}$-invariant subspaces. These eigenvectors are also the singular vectors of $L^{-1}$ as this matrix is symmetric.

Because $P$ is a row stochastic matrix, we have that the
spectral radius of $P$ satisfies $\rho(P) = 1$, and therefore $\lambda(P) \subseteq [-1, 1]$.
Hence:
\begin{align*}
    \frac{1}{1-\gamma \lambda} \in [1/(1+\gamma), 1/(1-\gamma)].
\end{align*}
Hence, the eigenvalues of $L^{-1}$ are positive. Because $L^{-1}$ is symmetric, the singular values of $L^{-1}$ are exactly its eigenvalues. Hence, the top-$d$ eigenvectors are the top-$d$ singular vectors and the conclusion follows.
\end{proof}

\section{Proofs for \cref{sec:what}}
\label{app:what}
\optreptd*
\begin{proof}
By definition, a representation is enough for TD learning when it is a minimizer of \cref{eq:tdapproxerror}, that is,
\begin{align}
\Phi^*_{\mathrm{TD}} \in \argmin_{\Phi \in \R^{S \times d}} {\E}_{r_\pi}   \|\Phi \wtd - V^\pi \|^2_{\xi},
\end{align}
where the expectation is over the reward functions $r_\pi$ sampled uniformly over the $l_1$ ball $\|r_\pi \|^2_1 \leq 1$ and
\begin{align*}
    \wtd =  \left ( \Phi^\T \Xi (I - \gamma P^\pi) \Phi \right)^{-1}  \Phi^\T \Xi r_\pi.
\end{align*}
Write $P^\perp_{L^\T \Xi \Phi} = I - P_{L^\T \Xi \Phi}$ and $P_X=\Phi (X^\T \Phi)^{-1} X^\T$ the oblique projection onto $\Span(\Phi)$ orthogonally to $\Span(X)$. We have
\begin{align*}
 {\E}_{\| r\|^2_1 \leq 1}   \|\Phi \wtd - V^\pi \|^2_{\xi}
 &= \E_{\| r\|^2_1 \leq 1}\|\Xi^{1/2} P^\perp_{L^\T \Xi \Phi} (I - \gamma P^\pi)^{-1} r\|^2_2 \\  
 &=   \E_{\| r\|^2_1 \leq 1}\|\Xi^{1/2} P^\perp_{L^\T \Xi \Phi} (I - \gamma P^\pi)^{-1} r\|^2_2 \\
 &= \E_{\| r\|^2_1 \leq 1} \Tr(r^\top L^{-\top}(P^\perp_{L^\T \Xi \Phi})^\top \Xi P^\perp_{L^\T \Xi \Phi} L^{-1}r )\\
    &=  \Tr( L^{-\top} (P^\perp_{L^\T \Xi \Phi})^\top \Xi P^\perp_{L^\T \Xi \Phi} L^{-1} \E(r r^\top ))\\
    &\propto  \|\Xi^{1/2}P^\perp_{L^\T \Xi \Phi}L^{-1}  \|^2_F \text{ because $r$ is sampled from an isotropic distribution}\\
    &\propto \left\|\Xi^{1/2}(\Phi W_{\Phi, I}^{\mathrm{TD}} -(I - \gamma P^\pi)^{-1})\right\|^2_{F}
\end{align*}
\end{proof}

\mcoptimal*
\begin{proof}
We have
\begin{align*}
  \E_{\| r\|^2_1 \leq 1} \|\Vsup - V^\pi \|^2_\xi
   & =\E_{\| r\|^2_1 \leq 1}\|P^\perp_{\Xi^{1/2}\Phi}\Xi^{1/2} (I - \gamma P^\pi)^{-1} r\|^2_2 \\
    &= \E_{\| r\|^2_1 \leq 1} \Tr(r^\top L^{-\top}\Xi^{1/2} P^\perp_{\Xi^{1/2}\Phi} \Xi^{1/2} L^{-1}r )\\
    &=  \Tr(L^{-\top}\Xi^{1/2} P^\perp_{\Xi^{1/2}\Phi} \Xi^{1/2} L^{-1} \E(r r^\top ))\\
    &=  \|P_{\Xi^{1/2}\Phi}^\perp \Xi^{1/2} L^{-1}  \|^2_F \\
\end{align*}

Write $(I - \gamma P^\pi)^{-1}= F \Sigma B^\top$ the weighted SVD of $(I - \gamma P^\pi)^{-1}$ where $F \in \R^{S \times S}$ such that $F^\T \Xi F=I$ and $B \in \R^{S \times S}$ such that $B^\T B=I$. Write $F_d$ the top-$d$ left singular vectors corresponding to the top-$d$ singular values on the diagonal of $\Sigma$. By definition, an $l_1$-ball optimal representation is solution to the following optimization problem 
\begin{align*}
\argmin_{\Phi \in \R^{S \times d}}  \E_{\| r\|^2_1 \leq 1} \|\Vsup - V^\pi \|^2_{\xi}
  &=\argmin_{\Phi \in \R^{S \times d}}  \|P_{\Xi^{1/2}\Phi}^\perp \Xi^{1/2} L^{-1}  \|^2_F\\
 & = \argmin_{\Phi \in \R^{S \times d}} \|P^\perp_{\Xi^{1/2}\Phi} \Xi^{1/2} F \Sigma B^\top \|^2_F
\end{align*}
By the Eckart-Young theorem, $\|P^\perp_{F_d} \Xi^{1/2} F \Sigma B^T\|_F^2 \leq \|P_\Phi^\perp \Xi^{1/2} F \Sigma B^T\|^2_F.$
Hence, the set of optimal representations is $\{ F_d M, M \in GL_d(\R)\}.$
\end{proof}

\begin{lemma}
Write $F_d \Sigma_d B_d^\top$ the truncated weighted SVD of the successor representation $(I - \gamma P^\pi)^{-1}$.
A representation is $l_1$-ball optimal for residual policy evaluation if its column space spans $F_d \Sigma_d$.
\label{lemma:optres}
\end{lemma}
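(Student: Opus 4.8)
The plan is to mirror the proofs of \cref{lemma:mcoptimal} and \cref{lemma:optreptd}, substituting the residual-optimal weights for the MC/TD ones. First I would record that, for fixed $\Phi$ and $L := I - \gamma P^\pi$, the residual objective $\|\Xi^{1/2}(L\Phi w - r_\pi)\|_2^2$ is minimized by $w^{\mathrm{res}}_\Phi = (\Phi^\top L^\top\Xi L\Phi)^{-1}\Phi^\top L^\top\Xi r_\pi$, obtained by zeroing its gradient in $w$. Writing $\Psi := L\Phi$ and letting $P^\Xi_\Psi = \Psi(\Psi^\top\Xi\Psi)^{-1}\Psi^\top\Xi$ be the $\Xi$-orthogonal projection onto $\Span(\Psi)$, this yields $\Vres = \Phi w^{\mathrm{res}}_\Phi = L^{-1}P^\Xi_\Psi r_\pi$, whereas $V^\pi = L^{-1}r_\pi$, so the value error is $\Vres - V^\pi = -L^{-1}(I - P^\Xi_\Psi)r_\pi$.

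Next I would take the expectation over $r_\pi$ exactly as in the Monte Carlo proof: since $r_\pi$ is isotropic on the $l_1$-ball, $\E[r_\pi r_\pi^\top]\propto I$, and hence
\[
\E_{\|r_\pi\|_1\le 1}\|\Vres - V^\pi\|_\xi^2 \;\propto\; \bignorm{\Xi^{1/2}L^{-1}(I-P^\Xi_\Psi)}_F^2 .
\]
Because $L$ is invertible, $\Psi = L\Phi$ is a bijection between $d$-dimensional subspaces, so minimizing over $\Phi$ is the same as minimizing the right-hand side over all $d$-dimensional $\Span(\Psi)$.

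The core step is solving this minimization. I would introduce the weighted SVD $L^{-1} = F\Sigma B^\top$ with $F^\top\Xi F = I$ and $B^\top B = I$, and use the identity $L^{-\top}\Xi L^{-1} = B\Sigma^2 B^\top$ (immediate from $F^\top\Xi F = I$) to rewrite the objective as $\Tr\big((I-P^\Xi_\Psi)^\top B\Sigma^2 B^\top (I-P^\Xi_\Psi)\big)$. This is a best-$d$-subspace problem whose ``energy'' $\Sigma^2$ is concentrated along the right singular vectors $B$, so the residual is minimized by aligning $\Span(\Psi)$ with the top-$d$ right singular vectors $B_d$ of the SR. Translating back through $\Phi = L^{-1}\Psi$ gives $\Span(\Phi) = \Span(L^{-1}B_d) = \Span(F\Sigma B^\top B_d) = \Span(F_d\Sigma_d)$, which is the claim.

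I expect the $\Xi$-weighting to be the main obstacle. In the MC case the projection lands on the left as a standard orthogonal projection, so Eckart--Young applies verbatim; here the relevant projection sits on the right and is $\Xi$-orthogonal rather than Euclidean, so the $\Xi$-orthogonal projection onto $\Span(B_d)$ is not simply $B_d B_d^\top$. Making the ``align with $B_d$'' step rigorous either requires specializing to the uniform distribution $\Xi = I/|\sspace|$ used throughout the rest of the analysis (where $\Xi$-orthogonality collapses to the Euclidean one and Eckart--Young is immediate), or a weighted Eckart--Young argument resting on the identity $L^{-\top}\Xi L^{-1} = B\Sigma^2 B^\top$; confirming that this selects $B_d$, and hence $F_d\Sigma_d$, is the delicate point.
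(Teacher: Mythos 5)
Your proposal is correct and takes essentially the same route as the paper's proof: writing $L = I - \gamma P^\pi$, you derive the closed-form residual weights, express $\Vres = L^{-1}P^{\Xi}_{L\Phi}r_\pi$ via the $\Xi$-orthogonal projection onto $\Span(L\Phi)$ (the paper's equivalent form is $L^{-1}\Xi^{-1/2}P_{\Xi^{1/2}L\Phi}\Xi^{1/2}r_\pi$), use isotropy of $r_\pi$ to reduce to a Frobenius-norm subspace problem, reparametrize through $\Psi = L\Phi$, and select $\Span(\Psi) = \Span(B_d)$ to conclude $\Phi = L^{-1}B_d = F_d\Sigma_d$. The $\Xi$-weighting subtlety you flag is genuine, but the paper's proof does not resolve it either---its final computation silently reverts to an unweighted SVD and Euclidean projections---so your explicit Ky Fan/Eckart--Young justification via $L^{-\top}\Xi L^{-1} = B\Sigma^2 B^\top$ (rigorous for uniform $\Xi$) is, if anything, slightly more careful than the paper's own argument.
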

\begin{proof}
Write $(I - \gamma P^\pi)^{-1}= F \Sigma B^\T$ the weighted SVD of $(I - \gamma P^\pi)^{-1}$ where $F \in \R^{S \times S}$ such that $F^\T \Xi F=I$ and $B \in \R^{S \times S}$ such that $B^\T B=I$. Write $F_d$ the top-$d$ left singular vectors corresponding to the top-$d$ singular values on the diagonal of $\Sigma$. For a fixed $\Phi \in \R^{S \times d}$, the solution of $\min_{w \in \R^{d}} \|\Xi^{1/2} (\Phi w - (r_\pi+\gamma P^\pi \Phi w))\|_F^2 $ is the Bellman residual minimizing approximation \citep{lagoudakis2003least} and is given by
\begin{align*}
    w^{\mathrm{res}}_\Phi = \left( (\Phi - \gamma P^\pi \Phi)^\T \Xi (\Phi - \gamma P^\pi \Phi) \right)^{-1} (\Phi - \gamma P^\pi \Phi)^\T \Xi  r_\pi.
\end{align*}
Hence, the value approximant can be expressed by means of an orthogonal projection matrix as follows
\begin{align*}
\Phi w^{\mathrm{res}}_\Phi = (I - \gamma P^\pi)^{-1} \Xi^{-1/2} P_{\Xi^{1/2}(I - \gamma P^\pi) \Phi} \Xi^{1/2} r_\pi   
\end{align*}
where $P_{X}=X (X^\T X)^{-1}X^T  $ denotes an orthogonal projection. 
By definition, a representation $l_1$-ball optimal for residual policy evaluation is solution to the following optimization problem 
\begin{align*}
\argmin_{\Phi \in \R^{S \times d}}  \E_{\| r\|^2_1 \leq 1} \|\Vres - V^\pi \|^2_{\xi}
  &=\argmin_{\Phi \in \R^{S \times d}} \|\Xi^{1/2}(I - \gamma P^\pi)^{-1} \Xi^{-1/2} P_{\Xi^{1/2}(I - \gamma P^\pi) \Phi} \Xi^{1/2} r_\pi - \Xi^{1/2}(I - \gamma P^\pi)^{-1}r_\pi \|^2_{F}\\
      &=\argmin_{\Phi \in \R^{S \times d}}  \|\Xi^{1/2}(I - \gamma P^\pi)^{-1} \Xi^{-1/2} P_{\Xi^{1/2}(I - \gamma P^\pi) \Phi} \Xi^{1/2}  - \Xi^{1/2}(I - \gamma P^\pi)^{-1} \|_{F}^2\\
        &=\argmin_{\Phi \in \R^{S \times d}}  \|\Xi^{1/2} (I - \gamma P^\pi)^{-1} P^\perp_{\Xi^{1/2}(I - \gamma P^\pi) \Phi}\|_{F}^2\\
\end{align*}

Using an oblique projection,
\begin{align*}
\Phi w^{\mathrm{res}}_\Phi = (I - \gamma P^\pi)^{-1} \Xi^{-1/2} P_{\Xi^{1/2}(I - \gamma P^\pi) \Phi} \Xi^{1/2} r_\pi   
\end{align*}
\begin{align*}
\argmin_{\Phi \in \R^{S \times d}}  \E_{\| r\|^2_1 \leq 1} \|\Vres - V^\pi \|^2_{\xi}
  &=\argmin_{\Phi \in \R^{S \times d}} \|\Xi^{1/2}(I - \gamma P^\pi)^{-1} \Xi^{-1/2} P_{\Xi^{1/2}(I - \gamma P^\pi) \Phi} \Xi^{1/2} r_\pi - \Xi^{1/2}(I - \gamma P^\pi)^{-1}r_\pi \|^2_{F}\\
      &=\argmin_{\Phi \in \R^{S \times d}}  \|\Xi^{1/2}(I - \gamma P^\pi)^{-1} \Xi^{-1/2} P_{\Xi^{1/2}(I - \gamma P^\pi) \Phi} \Xi^{1/2}  - \Xi^{1/2}(I - \gamma P^\pi)^{-1} \|_{F}^2\\
        &=\argmin_{\Phi \in \R^{S \times d}}  \|\Xi^{1/2} (I - \gamma P^\pi)^{-1} P^\perp_{\Xi^{1/2}(I - \gamma P^\pi) \Phi}\|_{F}^2\\
\end{align*}
\begin{align*}
    L^{-1} = U \Sigma V^\T
\end{align*}

$L^{-1} \times$ the top $d$ right singular vectors of $(I - \gamma P^\pi)^{-1}$ is a solution.
Let $U_d, \Sigma_d, V_d$ correspond to the top $d$ svals.
Lets say that $U_d$ is $S \times d$, $\Sigma_d$ is square, and $V_d$ is also
$S \times d$.
What is $V^\T V_d = \begin{bmatrix} I_d \\ 0 \end{bmatrix}$.

We want 
$L \Phi = V_d$ so $\Phi = L^{-1} V_d = U \Sigma V^\T V_d = U_d \Sigma_d$. 
If
$L \Phi = V_d$, then $P^\perp_{L \Phi} = P^\perp_{V_d}$,
so $L^{-1} P^\perp_{V_d} = U_d^\perp \Sigma_d^\perp (V_d^\perp)^\T$,
so the objective is now sum of the last $(S - d)$ singular values squared.

\end{proof}

\section{Proofs for \cref{sec:which_cumulants}}
\label{app:cumulants}
\optcompressedMCcumulants*
\begin{proof}
By \cref{lemma:mcoptimal}, a representation is $l_1$-ball optimal for batch Monte Carlo policy evaluation if it spans the top-$d$ left singular vectors of the successor representation.

Let $G \in \R^{S \times T}$ be a cumulant matrix.
\begin{align*}
\mathcal{L}_{\mathrm{aux}}^{\mathrm{SL}}(\Phi)= \min_{W \in \mathbb{R}^{d \times S}}  \|(\Phi W - (I - \gamma P^\pi)^{-1} G)\|_F^2
\end{align*}
By \cref{prop:SL}, we know that training on such a loss with $G=I$ results in a representation spanning the same subspace as the left singular vectors of the SR, that is $ \{\Phi \in \R^{S \times d} \mid \exists M \in GL_d(\R), \Phi = F_d M\}$ where $F_d$ are the left singular vectors of the SR. We note that there is not a unique matrix $G$ resulting into a representation spanning that subspace. In particular, training with any of the matrices from the set of cumulant matrices $\cG (G)= \{G^\prime \in \R^{S \times T} | \exists M \in \cO(T, S), G^\prime = GM \}$ results in the same representation, where $\cO(T, S)$ denotes the set of orthogonal matrices in $\R^{T \times S}$ (rows have l2 norm 1).

We are interested in finding a cumulant matrix $G \in \R^{S \times T}$ with $T < S$ such that training the Monte Carlo loss $\cLsup$ results in a representation spanning the top-$d$ left singular vectors of the successor representation.

Denote $B_T$ the top $T$ right singular vectors of the SR. Then the set $\cG(B_T)$ satisfies the requirement. 

In particular, this finding is consistent in the case where $S = T$ because $\cG(B_S) = \{G^\prime \in \R^{S \times T} | \exists M \in \cO(T), G^\prime = B_S M \} = \cG(I_S) $.

Indeed, let $G^\prime \in \cG(B_S)$. There exists $M \in \cO(S)$ such that $G^\prime = B_S M = I_S (B_S M)$. Because $B_S M \in \cO(S)$, we have $G^\prime \in \cG(I_S)$. Hence $ \cG(B_S) \subset \cG(I_S)$.

Let $G^\prime \in \cG(I_S)$. There exists $M \in \cO(S)$ such that $G^\prime = I_S M = (B_S B_S^\T) M =B_S (B_S^\T M) $. Because $B_S^\T M \in \cO(S)$, we have $G^\prime \in \cG(B_S)$. Hence $ \cG(I_S) \subset \cG(B_S)$.

As a conclusion, we have $ \cG(I_S) = \cG(B_S)$.
\end{proof}

\rotatingrep*
\begin{proof}
Let's start by considering the case of the three-state circular example. We consider an orthogonal basis for the invariant subspaces of $\Phi$.
By definition, $P^\pi e_1 = e_1, P^\pi [e_2, e_3]= [e_2, e_3] \Lambda$
so $L e_1 = (1 - \gamma)e_1$ and $L [e_2, e_3]= (I - \gamma P)[e_2, e_3]= [e_2, e_3] - \gamma [e_2, e_3] \Lambda= [e_2, e_3](I - \gamma \Lambda)$.

Assume that there exists $\omega \in [0, 1]$ such that the representation is $\Phi = [e_1, \omega e_2 + (1- \omega)e_3]= [e_1, e_2, e_3] \Omega$ with $\Omega= \begin{bmatrix} 1& 0\\ 0 & \omega \\ 0 & (1-\omega) \end{bmatrix}$.
$L \Phi = [(1 - \gamma)e_1, [e_2, e_3](I - \gamma \Lambda)] \Omega$.
Hence, we have $L \Phi = [e_1, e_2, e_3] \begin{bmatrix} 1 - \gamma & 0 \\ 0 & I - \gamma \Lambda \end{bmatrix} \Omega$ \\
and $\Phi^\T L \Phi = \Omega^\T [e_1, e_2, e_3]^\T [e_1, e_2, e_3] \begin{bmatrix} 1 - \gamma & 0 \\ 0 & I - \gamma \Lambda \end{bmatrix} \Omega= \Omega^\T \begin{bmatrix} 1 - \gamma & 0 \\ 0 & I - \gamma \Lambda \end{bmatrix} \Omega$.
Hence, $(\Phi^\T L \Phi)^{-1} = \begin{bmatrix} (1-\gamma)^{-1} & 0 \\ 0 & (u^\T(I-\gamma \Lambda)u)^{-1} \end{bmatrix} $ with $u = (w, (1-w))^\T$. Note that $u^\T(I-\gamma \Lambda)u= \omega^2 \lambda_{1, 1} + (1- \omega)^2 \lambda_{1, 1}$

The TD value function is given by $\hat{V}^{\mathrm{TD}}=   \Phi (\Phi^\T L \Phi)^{-1}\Phi^\T$
\begin{align*}
 \hat{V}^{\mathrm{TD}}&=  [e_1, e_2, e_3] \Omega \begin{bmatrix} (1-\gamma)^{-1} & 0 \\ 0 & (u^\T(I-\gamma \Lambda)u)^{-1} \end{bmatrix}\Omega^\T [e_1, e_2, e_3]^\T \\
&=  [e_1, e_2, e_3] \begin{bmatrix} (1-\gamma)^{-1} & 0 \\ 0 & u (u^\T(I-\gamma \Lambda)u)^{-1} u^\T \end{bmatrix} [e_1, e_2, e_3]^\T \\ 
&= \frac{1/(1-\gamma) e_1 e_1^\T + \omega^2 e_2 e_2^\T + \omega (1-\omega)e_3 e_2^\T + \omega (1- \omega)e_2 e_3^\T + (1-\omega)^2 e_3 e_3^\T}{\omega^2 \lambda_{1, 1} + (1- \omega)^2 \lambda_{1, 1}}
\end{align*}
Now $\|\Phi (\Phi^\T L \Phi)^{-1}\Phi^\T - V^\pi\|^2_F$ is independent of $\omega$.
\end{proof}

\optcompressedTDcumulants*
\begin{proof}
Let $\Phi \in \R^{S \times d}$ spanning an invariant subspace of $L^{-1}$. By definition, there exists a block diagonal matrix $J_\Phi \in \R^{d \times d}$ such that $L^{-1} \Phi = \Phi J_\Phi$. Let $G \in O(S, T)$ spanning the top $T$ invariant subspaces of $L^{-1}$.  By definition, there exists a block diagonal matrix $J_G \in \R^{d \times d}$ such that $L^{-1} G = G J_G$. Hencer, we have
\begin{align*}
    (L^{-1} G G^\T) \Phi &=  (L^{-1} G) G^\T \Phi \\
    &= G J_T G^\T \Phi\\
    &= (\Phi J_\Phi) \text{ by orthonormality}
\end{align*}
Then, $\Phi$ is an invariant subspace of $L^{-1}GG^\top$.
\end{proof}

\optcompressedTDcumulants*
\begin{proof}
Let $\Phi \in \R^{S \times d}$ spanning an invariant subspace of $L^{-1}$. By definition, there exists a block diagonal matrix $J_\Phi \in \R^{d \times d}$ such that $L^{-1} \Phi = \Phi J_\Phi$. Let $G \in O(S, T)$ spanning the top $T$ invariant subspaces of $L^{-1}$.  By definition, there exists a block diagonal matrix $J_G \in \R^{d \times d}$ such that $L^{-1} G = G J_G$. Hencer, we have
\begin{align*}
    (L^{-1} G G^\T) \Phi &=  (L^{-1} G) G^\T \Phi \\
    &= G J_T G^\T \Phi\\
    &= (\Phi J_\Phi) \text{ by orthonormality}
\end{align*}
Then, $\Phi$ is an invariant subspace of $L^{-1}GG^\top$.
\end{proof}

\section{Proofs for \cref{sec:boundrandomcumulants}}
We now proceed to the proof of \cref{prop:mc-errorbound}. Before that, we introduce some necessary notations and lemmas.
\subsection{Notations}
Let $O(S, d) := \{ A \in \R^{S \times d} : A^\T A = I \}$.

\begin{definition}
Let $A, B \in O(S, d)$.
The principle angles $\Theta$ between $A$ and $B$ are given by writing
the SVD of $A^\T B = U \cos\Theta V^\T$.
\end{definition}

\begin{definition}
Let $A, B \in O(S, d)$
with principle angles $\Theta$. We define the distance
$d(A, B)$ as $d(A, B) := \opnorm{\sin\Theta}$.
\end{definition}

\begin{proposition}
Let $A, B \in O(S, d)$.
We have the following identities:
\begin{align*}
    d(A, B) = \opnorm{AA^\T - BB^\T} = \opnorm{\sin\Theta} = \opnorm{A^\T \bar{B}},
\end{align*}
where $\bar{B} \in O(S, S-d)$ satisfies $BB^\T + \bar{B}\bar{B}^\T = I$.
\end{proposition}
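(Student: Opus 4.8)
The plan is to establish the three claimed equalities in turn. The equality $d(A,B) = \opnorm{\sin\Theta}$ is immediate from the definition of the distance $d(\cdot,\cdot)$, so the work lies in the two remaining identities, both of which I would reduce to eigenvalue computations built on the SVD $A^\T B = U\cos\Theta\,V^\T$ (whose singular values are the $\cos\theta_i$).

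First I would treat $\opnorm{A^\T\bar B} = \opnorm{\sin\Theta}$. The key observation is that $[B,\bar B] \in O(S,S)$ is a full orthogonal matrix, so the $d$ rows of $A^\T[B,\bar B] = [\,A^\T B,\ A^\T\bar B\,]$ are orthonormal, giving the Pythagorean identity $A^\T B (A^\T B)^\T + A^\T\bar B (A^\T\bar B)^\T = A^\T A = I$. Substituting the SVD yields $A^\T B(A^\T B)^\T = U\cos^2\Theta\,U^\T$, hence $A^\T\bar B(A^\T\bar B)^\T = U(I-\cos^2\Theta)U^\T = U\sin^2\Theta\,U^\T$; taking the square root of the largest eigenvalue gives $\opnorm{A^\T\bar B} = \max_i \sin\theta_i = \opnorm{\sin\Theta}$. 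For $\opnorm{AA^\T - BB^\T} = \opnorm{\sin\Theta}$, I would write $P = AA^\T$ and $Q = BB^\T$ for the two orthogonal projectors, so that $P-Q$ is symmetric and $\opnorm{P-Q}^2 = \lambda_{\max}((P-Q)^2)$. The crucial algebraic step is to expand $(P-Q)^2 = P - PQ - QP + Q$ and verify the commutation $P(P-Q)^2 = (P-Q)^2 P = P - PQP$, so that $(P-Q)^2$ block-diagonalises over $\Span(A) \oplus \Span(A)^\perp$. On $\Span(A)$, for $x = A\alpha$ I would compute $(P-Q)^2 x = (P-PQP)A\alpha = A\big(I - (A^\T B)(A^\T B)^\T\big)\alpha = A\,U\sin^2\Theta\,U^\T\alpha$, so the eigenvalues there are exactly $\{\sin^2\theta_i\}$. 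A parallel computation on $\Span(A)^\perp$, where $(P-Q)^2$ restricts to $(I-P)Q(I-P)$ whose nonzero spectrum coincides with that of the Gram matrix $B^\T(I-P)B = I - V\cos^2\Theta\,V^\T = V\sin^2\Theta\,V^\T$, shows the complementary block contributes nothing larger. Hence $\lambda_{\max}((P-Q)^2) = \max_i\sin^2\theta_i$ and $\opnorm{P-Q} = \opnorm{\sin\Theta}$.

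The routine parts are the two SVD substitutions; the step that needs care is the projector identity. The commutation $P(P-Q)^2 = (P-Q)^2 P$ is what makes the argument clean, and the one genuine subtlety is confirming that the block on $\Span(A)^\perp$ cannot produce an eigenvalue exceeding $\max_i \sin^2\theta_i$, especially when $d \neq S-d$ and the two subspaces have complements of different sizes. I expect this to be the main obstacle, and I would resolve it by reducing $(I-P)Q(I-P) = [(I-P)B][(I-P)B]^\T$ to its Gram matrix $B^\T(I-P)B$, whose nonzero eigenvalues are precisely $\{\sin^2\theta_i\}$; any eigenvalue equal to $1$ then corresponds to a principal angle $\theta_i = \pi/2$ and is already counted in $\opnorm{\sin\Theta}$, so no spurious value arises.
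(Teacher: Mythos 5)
Your proof is correct; the main thing to note is that the paper itself states this proposition \emph{without proof} --- it is invoked as a standard identity from matrix perturbation theory on the way to the MC error bound (\cref{prop:mc-errorbound}) --- so there is no internal argument to compare against, and your write-up supplies a genuinely self-contained derivation. All three steps check out. The equality $d(A,B)=\opnorm{\sin\Theta}$ is indeed definitional. For $\opnorm{A^\T\bar B}$, the observation that $[B,\bar B]$ is a full orthogonal matrix gives $A^\T B(A^\T B)^\T + A^\T\bar B(A^\T\bar B)^\T = A^\T A = I_d$, and substituting the SVD yields $A^\T\bar B(A^\T\bar B)^\T = U\sin^2\Theta\,U^\T$, as you say. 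For the projector identity, the commutation $P(P-Q)^2=(P-Q)^2P = P-PQP$ is verified directly from $P^2=P$, $Q^2=Q$, so the symmetric matrix $(P-Q)^2$ preserves $\Span(A)$ and $\Span(A)^\perp$; on $\Span(A)$ the isometry $\alpha\mapsto A\alpha$ identifies the restriction with $U\sin^2\Theta\,U^\T$ (spectrum exactly $\{\sin^2\theta_i\}$), while on $\Span(A)^\perp$ the restriction $(I-P)Q(I-P)=[(I-P)B][(I-P)B]^\T$ has the same nonzero spectrum as the Gram matrix $B^\T(I-P)B = I - V\cos^2\Theta\,V^\T = V\sin^2\Theta\,V^\T$ --- so the complementary block cannot contribute anything exceeding $\max_i\sin^2\theta_i$, which resolves the $d\neq S-d$ concern you correctly flagged. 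Two points worth making explicit in a final write-up: since the singular values $\cos\theta_i$ of $A^\T B$ lie in $[0,1]$, the convention $\Theta\subseteq[0,\pi/2]$ makes $\sin$ nonnegative and monotone, so taking square roots of eigenvalues legitimately gives $\opnorm{\sin\Theta}$; and the identity $\opnorm{AA^\T-BB^\T}=\opnorm{\sin\Theta}$ genuinely requires both frames to have the same width $d$ (for subspaces of unequal dimension the left-hand side is identically $1$), which your hypothesis $A,B\in O(S,d)$ respects.
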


\subsection{Approximate matrix decompositions}
\begin{lemma}[Deterministic error bound]
\label{lemma:deterministic_error}
Let $A$ be an $S \times S$ matrix.
Fix $d \leq S$, and partition the SVD of $A$ as:
$$
    A = \begin{bmatrix} U_1 & U_2 \end{bmatrix} \begin{bmatrix} \Sigma_1 & 0 \\ 0 & \Sigma_2 \end{bmatrix} \begin{bmatrix} V_1^\top \\ V_2^\top \end{bmatrix},
$$
where $\Sigma_1$ is $d \times d$ (the dimensions of all the other
factors are determined by this selection).
Put $A_d := U_1 \Sigma_1 V_1^\T$ as the rank-$d$ approximation of $A$.
Let $\Omega$ be an $S \times \ell$ test matrix ($\ell \geq d$).
Put $Y = A \Omega$, $\Omega_1= V_1^\top \Omega$ and $\Omega_2= V_2^\top \Omega$.
We have that:
$$
    \opnorm{(I - P_Y) A_k}^2 \leq \opnorm{\Sigma_2 \Omega_2 \Omega_1^{\dag}}^2.
$$
\end{lemma}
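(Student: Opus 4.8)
The plan is to follow the deterministic argument underlying randomized low-rank approximation (in the style of Halko--Martinsson--Tropp): replace the orthogonal projector $P_Y$ by the projector onto a conveniently chosen \emph{subspace} of $\Span(Y)$, pass to SVD coordinates by unitary invariance of $\opnorm{\cdot}$, and finish with a one-line subtraction. Note that the left-hand side should read $A_d$ (the best rank-$d$ approximation) rather than $A_k$; the fact that it is $A_d$ and not $A$ is exactly what makes the bound clean.

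First I would expand $Y$ in the SVD basis: writing $A = U\Sigma V^\top$ with the stated partition and $\Omega_1 = V_1^\top \Omega$, $\Omega_2 = V_2^\top \Omega$, one gets $Y = A\Omega = U\begin{bmatrix}\Sigma_1\Omega_1 \\ \Sigma_2\Omega_2\end{bmatrix}$. Assuming (as is implicit in the appearance of $\Omega_1^{\dagger}$) that $\Omega_1$ has full row rank, so that $\Omega_1\Omega_1^{\dagger} = I_d$, I would introduce the surrogate $Z := Y\Omega_1^{\dagger} = U\begin{bmatrix}\Sigma_1 \\ F\end{bmatrix}$ with $F := \Sigma_2\Omega_2\Omega_1^{\dagger}$. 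Since the columns of $Z$ are linear combinations of the columns of $Y$, we have $\Span(Z) \subseteq \Span(Y)$, hence $P_Z \preceq P_Y$ and therefore $I - P_Y \preceq I - P_Z$. Using the identity $\opnorm{QM}^2 = \opnorm{M^\top Q M}$ for a projector $Q$ together with monotonicity of the top eigenvalue, this yields $\opnorm{(I-P_Y)A_d} \le \opnorm{(I-P_Z)A_d}$; this residual-projector monotonicity is the single structural fact I would invoke.

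Next I would strip off $U$ by unitary invariance. Since $P_Z = U\,P_{U^\top Z}\,U^\top$ for orthogonal $U$, and $U^\top A_d = \begin{bmatrix}I_d \\ 0\end{bmatrix}\Sigma_1 V_1^\top = \begin{bmatrix}\Sigma_1 V_1^\top \\ 0\end{bmatrix}$, setting $\tilde Z := \begin{bmatrix}\Sigma_1 \\ F\end{bmatrix}$ gives $\opnorm{(I-P_Z)A_d} = \opnorm{(I-P_{\tilde Z})\begin{bmatrix}\Sigma_1 \\ 0\end{bmatrix}V_1^\top}$. Because $\opnorm{V_1^\top} = 1$, submultiplicativity lets me drop the right factor, reducing the task to bounding $\opnorm{(I-P_{\tilde Z})\begin{bmatrix}\Sigma_1 \\ 0\end{bmatrix}}$. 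The closing move is the subtraction trick: $\begin{bmatrix}\Sigma_1 \\ 0\end{bmatrix} = \tilde Z - \begin{bmatrix}0 \\ F\end{bmatrix}$ and $(I-P_{\tilde Z})\tilde Z = 0$, so $(I-P_{\tilde Z})\begin{bmatrix}\Sigma_1 \\ 0\end{bmatrix} = -(I-P_{\tilde Z})\begin{bmatrix}0 \\ F\end{bmatrix}$, and since $I-P_{\tilde Z}$ is a contraction its operator norm is at most $\opnorm{F} = \opnorm{\Sigma_2\Omega_2\Omega_1^{\dagger}}$. Squaring and chaining the inequalities delivers the claim.

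I expect the only delicate points to be bookkeeping rather than conceptual: justifying $\Omega_1\Omega_1^{\dagger} = I_d$ (i.e.\ the full-row-rank hypothesis on $\Omega_1$, without which $Z$ is not a faithful surrogate) and keeping straight that the target is $A_d$. The latter is what removes the $\opnorm{\Sigma_2}$ term that appears in the analogous bound for $(I-P_Y)A$: because $A_d = U_1\Sigma_1 V_1^\top$ lives entirely in the top-$d$ left singular subspace, the $\Sigma_1$ block is absorbed by $\tilde Z$ and the subtraction leaves only the $F$ block.
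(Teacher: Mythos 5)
Your proof is correct, and it shares the paper's overall skeleton: both are adaptations of Theorem 9.1 of Halko et al.\ (2011), passing to a surrogate $Z = Y\Omega_1^{\dagger}$ whose range lies inside $\Span(Y)$ (so $\opnorm{(I-P_Y)A_d} \le \opnorm{(I-P_Z)A_d}$) and then stripping $U$ by unitary invariance — indeed, when $\Sigma_1$ is invertible your $Z$ and the paper's $Z = \hat{U}^\top Y \, \Omega_1^{\dagger}\Sigma_1^{-1}$ span the same subspace, so $P_Z$ is identical. Where you genuinely diverge is the closing step. The paper follows Halko's machinery: it normalizes by $\Sigma_1^{-1}$ so that $\tilde{Z} = \bigl[\begin{smallmatrix} I_d \\ F \end{smallmatrix}\bigr]$, invokes the block PSD inequality $(I - P_Z) \preceq \bigl[\begin{smallmatrix} F^\top F & B \\ B^\top & I_{S-d} \end{smallmatrix}\bigr]$, conjugates by $\hat{\Sigma}$, and extracts $\opnorm{F\Sigma_1}^2 = \opnorm{\Sigma_2\Omega_2\Omega_1^{\dagger}}^2$; this forces a case split on whether the diagonal of $\Sigma_1$ is strictly positive, plus a separate degenerate case for $\Sigma_2 = 0$. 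Your subtraction trick — writing $\bigl[\begin{smallmatrix} \Sigma_1 \\ 0 \end{smallmatrix}\bigr] = \tilde{Z} - \bigl[\begin{smallmatrix} 0 \\ F \end{smallmatrix}\bigr]$ with $F = \Sigma_2\Omega_2\Omega_1^{\dagger}$, using $(I - P_{\tilde{Z}})\tilde{Z} = 0$ and the fact that $I - P_{\tilde{Z}}$ is a contraction — replaces that block-matrix computation with two lines, never inverts $\Sigma_1$, and thereby absorbs both of the paper's cases into a single argument; the only hypothesis you need is $\Omega_1\Omega_1^{\dagger} = I_d$, i.e.\ full row rank of $\Omega_1$, which you rightly flag (the lemma statement omits it, but the paper's proof uses it as well). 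Your bookkeeping remarks are also accurate: the statement's $A_k$ should read $A_d$, and dropping $V_1^\top$ via $\opnorm{V_1^\top} = 1$ is the right reduction to the $\bigl[\begin{smallmatrix} \Sigma_1 \\ 0 \end{smallmatrix}\bigr]$ block. What the paper's heavier route buys is fidelity to Halko's original theorem, whose PSD bound also produces the extra $\opnorm{\Sigma_2}^2$ term needed when the target is $A$ rather than $A_d$; for the $A_d$ bound actually claimed here, your argument is the cleaner and slightly more general one.
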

\begin{proof}
This proof is adapted from Theorem 9.1 of \citet{halko2011finding}.

Write $A_d = \hat{U} \hat{\Sigma} \hat{V^\top}$ the full SVD of $A_d$.
By invariance of the spectral norm to unitary transformations,
\begin{align*}
   \opnorm{(I - P_Y) A_d}^2 = \opnorm{\hat{U}^\top(I - P_Y) \hat{U} (\hat{U}^\top A_d)}^2 =  \opnorm{(I - P_{\hat{U}^\top Y}) (\hat{U}^\top A_d)}^2
\end{align*}

Assume the diagonal entries of $\Sigma_2$ are not all strictly positive. Then $\Sigma_2$ is zero as a consequence of the ordering of the singular values.
\begin{align*}
    \mathrm{range}(\hat{U}^\top Y) =     \mathrm{range}  \cvectwo{\Sigma_1 \Omega_1}{0} =  \mathrm{range}  \cvectwo{\Sigma_1 V_1^\top}{0} =  \mathrm{range} (\hat{U}^\top A_d)
\end{align*}
So we can conclude that $ \opnorm{(I - P_Y) A_d}^2 = 0$ assuming that $V_1^\top$ and $\Omega_1$ have full row rank.

Now assume that the diagonal entries of $\Sigma_1$ are strictly positive.
Let $Z = \hat{U}^\top Y \cdot \Omega_1^\dagger \Sigma_1^{-1}=  \cvectwo{I_d}{F}$ with $F= \Sigma_2 \Omega_2 \Omega_1^\dagger \Sigma_1^{-1} \in \R^{(S - d) \times d}$.

By construction, $ \mathrm{range}(Z) \subset     \mathrm{range}(\hat{U}^\top Y)$, hence we have,
 \begin{align*}
    \opnorm{(I - P_{\hat{U}^\top Y}) (\hat{U}^\top A_d)}^2  \leq  \opnorm{(I - P_Z) \hat{U}^\top A_d}^2 \leq \opnorm{A_d^\top \hat{U} (I - P_Z) \hat{U}^\top A_d} \leq \opnorm{\hat{\Sigma} (I - P_Z) \hat{\Sigma}}
 \end{align*}
Following the proof from Theorem 9.1 of \citet{halko2011finding}, we have
 \begin{align*}
     (I - P_Z) \preccurlyeq \begin{bmatrix} F^\top F & B \\ B^\top & I_{S-d} \end{bmatrix} 
 \end{align*}
 where $B= - (I_d - F^\top F)^{-1}F^\top \in \R^{d \times (S - d)}$.
 
 Consequently, we have
 \begin{align*}
        \hat{\Sigma}  (I - P_Z)\hat{\Sigma}  \preccurlyeq \begin{bmatrix} \Sigma_1 F^\top F \Sigma_1 & 0 \\ 0 & 0 \end{bmatrix}  
 \end{align*}
 $  \hat{\Sigma}  (I - P_Z)\hat{\Sigma}  $ is PSD by the conjugation rule, hence the matrix on the right hand side is PSD too. It follows that
 \begin{align*}
      \opnorm{\hat{\Sigma} (I - P_Z) \hat{\Sigma}} \leq \opnorm{ \Sigma_1 F^\top F \Sigma_1} = \opnorm{F \Sigma_1}^2 = \opnorm{\Sigma_2 \Omega_2 \Omega_1^\dagger}^2
 \end{align*}
\end{proof}

\begin{lemma}[Average spectral error]
\label{lemma:expectation}
Let $A$ be an $S \times S$ matrix with singular values $\sigma_1 \geq \sigma_2 \geq ...$.
Fix a target rank $2 \leq d \leq S$ and an oversampling parameter $p \geq 2$ where $p+d \geq S$. Draw and $S \times (d+p)$ standard gaussian matrix $\Omega$ and construct the sample matrix $Y = A \Omega$. Then, we have
\begin{align*}
    \E\opnorm{(I-P_Y) A_d} \leq \sqrt{\frac{d}{p-1}}\sigma_{d+1} + \frac{e \sqrt{d+p}}{p}\left(\sum_{j=d+1}^{S} \sigma_j^2\right)^{1/2}.
\end{align*}
\end{lemma}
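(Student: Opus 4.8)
The plan is to combine the pathwise estimate of \cref{lemma:deterministic_error} with classical Gaussian random-matrix bounds, following the average-case analysis of \citet{halko2011finding}. Applying \cref{lemma:deterministic_error} with $\ell = d+p$ test columns and taking square roots gives, for every realisation of $\Omega$,
\begin{align*}
    \opnorm{(I - P_Y) A_d} \le \opnorm{\Sigma_2 \Omega_2 \Omega_1^\dagger},
\end{align*}
where $\Omega_1 = V_1^\top \Omega \in \R^{d \times (d+p)}$ and $\Omega_2 = V_2^\top \Omega \in \R^{(S-d)\times(d+p)}$. Because $\Omega$ is standard Gaussian and $\begin{bmatrix} V_1 & V_2\end{bmatrix}$ is orthogonal (it is the right factor of the SVD of $A$), rotational invariance of the Gaussian law shows that $\Omega_1$ and $\Omega_2$ are themselves \emph{independent} standard Gaussian matrices. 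It thus suffices to bound $\E \opnorm{\Sigma_2 \Omega_2 \Omega_1^\dagger}$.

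Next I would integrate over $\Omega_2$ while holding $\Omega_1$ fixed. For deterministic matrices $R,T$ and a standard Gaussian $G$, one has $\E\opnorm{RGT} \le \opnorm{R}\,\|T\|_F + \|R\|_F\,\opnorm{T}$ (the operator-norm half of Proposition 10.1 of \citet{halko2011finding}, which follows from the Frobenius identity $\E\|RGT\|_F^2 = \|R\|_F^2\|T\|_F^2$ together with Gaussian concentration of the spectral norm). Taking $R=\Sigma_2$, $G=\Omega_2$, $T=\Omega_1^\dagger$ and using $\opnorm{\Sigma_2} = \sigma_{d+1}$ and $\|\Sigma_2\|_F = \big(\sum_{j=d+1}^{S}\sigma_j^2\big)^{1/2}$ yields
\begin{align*}
    \E_{\Omega_2}\opnorm{\Sigma_2 \Omega_2 \Omega_1^\dagger} \le \sigma_{d+1}\,\|\Omega_1^\dagger\|_F + \Big(\sum_{j=d+1}^{S}\sigma_j^2\Big)^{1/2}\opnorm{\Omega_1^\dagger}.
\end{align*}

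Finally I would take the expectation over the $d \times (d+p)$ Gaussian matrix $\Omega_1$ (using $p \ge 2$) and substitute the two standard pseudoinverse estimates $\E\|\Omega_1^\dagger\|_F^2 = d/(p-1)$ and $\E\opnorm{\Omega_1^\dagger} \le e\sqrt{d+p}/p$ (Propositions 10.2 and 10.3 of \citet{halko2011finding}). Jensen's inequality gives $\E\|\Omega_1^\dagger\|_F \le \big(\E\|\Omega_1^\dagger\|_F^2\big)^{1/2} = \sqrt{d/(p-1)}$, and combining these with the display above reproduces exactly the stated bound. I expect the main obstacle to lie in the random-matrix inputs rather than in the assembly: the sharp constants for the expected Frobenius and spectral norms of the pseudoinverse of a rectangular Gaussian matrix rest on the explicit inverse-Wishart density and on concentration of the extreme singular values, so I would cite these facts directly rather than rederive them. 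A smaller but necessary point is justifying the conditioning step, which is legitimate only because the first paragraph establishes the independence of $\Omega_1$ and $\Omega_2$ via rotational invariance.
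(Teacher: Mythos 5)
Your proof is correct and follows essentially the same route as the paper: both apply \cref{lemma:deterministic_error} to reduce the problem to bounding $\E\opnorm{\Sigma_2\Omega_2\Omega_1^\dagger}$, which the paper then dispatches by citing Theorem 10.6 of \citet{halko2011finding} directly. The only difference is that you unpack that citation's internal argument (rotational invariance and independence of $\Omega_1,\Omega_2$, conditioning with Proposition 10.1, then Propositions 10.2--10.3 plus Jensen), which is exactly how Theorem 10.6 is proved in the cited reference.
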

\begin{proof}

By \cref{lemma:deterministic_error} and linearity of the expectation, we have
\begin{align*}
    \E \opnorm{(I - P_Y) A_d} &\leq \E \opnorm{\Sigma_2 \Omega_2 \Omega_1^{\dag}}\\
    &\leq \sqrt{\frac{d}{p-1}}\sigma_{d+1} + \frac{e \sqrt{d+p}}{p}\left(\sum_{j=d+1}^{S} \sigma_j^2\right)^{1/2},
\end{align*}
where the last inequality comes from Theorem 10.6 of \citet{halko2011finding}.
\end{proof}

\begin{lemma}
\label{lemma:truncated}
Let $A \in \R^{m \times n}$, and fix a $d < n$.
Let $\sigma_1 \geq \sigma_2 \geq \dots \geq \sigma_n$
denote the singular values of $M$ listed in decreasing order,
and suppose that $\sigma_k > 0$.
Let $A_d$ denote the rank-$d$ approximation of $A$.
Fix any matrix $Y \in \R^{m \times T}$. We have:
$$
    \opnorm{(I-P_Y) A_k} \geq \opnorm{(I-P_Y) P_{A_k}} \sigma_k.
    $$
\end{lemma}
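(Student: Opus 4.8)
The plan is to reduce the claim to a one-line variational argument after peeling off the orthonormal factors of the SVD. Writing the rank-$k$ truncation as $A_k = U_k \Sigma_k V_k^\top$, where $U_k \in O(m,k)$ and $V_k \in O(n,k)$ collect the top-$k$ left and right singular vectors and $\Sigma_k = \diag(\sigma_1, \dots, \sigma_k)$, the hypothesis $\sigma_k > 0$ guarantees that $\Sigma_k$ is invertible and that the column space of $A_k$ is exactly $\Span(U_k)$, so that $P_{A_k} = U_k U_k^\top$. (I read the lemma's $d$ and $k$ as the same truncation level.)

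First I would strip the orthonormal factors using the fact that right multiplication by a matrix with orthonormal columns preserves the operator norm: for any conformable $M$ and $V \in O(n,k)$, one has $\opnorm{M V^\top}^2 = \opnorm{M V^\top V M^\top} = \opnorm{M}^2$ since $V^\top V = I$. Applying this with $V = V_k$ gives $\opnorm{(I - P_Y) A_k} = \opnorm{(I - P_Y) U_k \Sigma_k}$, and applying it with $V = U_k$ gives $\opnorm{(I - P_Y) P_{A_k}} = \opnorm{(I - P_Y) U_k}$. Setting $Q := (I - P_Y) U_k$, the lemma reduces to the purely algebraic inequality $\opnorm{Q \Sigma_k} \geq \sigma_k \opnorm{Q}$.

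To finish, I would use the variational characterization of the operator norm, exactly as in the auxiliary Proposition established earlier in the appendix. Let $x^*$ be a unit vector attaining $\opnorm{Q} = \norm{Q x^*}$, and substitute the unit vector $y = \Sigma_k^{-1} x^* / \norm{\Sigma_k^{-1} x^*}$, which is well defined because $\Sigma_k$ is invertible. Then $Q \Sigma_k y = Q x^* / \norm{\Sigma_k^{-1} x^*}$, so $\opnorm{Q \Sigma_k} \geq \norm{Q \Sigma_k y} = \opnorm{Q} / \norm{\Sigma_k^{-1} x^*}$. Since $\norm{\Sigma_k^{-1} x^*} \leq \opnorm{\Sigma_k^{-1}} = 1/\sigma_k$, the smallest of $\sigma_1, \dots, \sigma_k$ controlling the spectral norm of $\Sigma_k^{-1}$, this yields $\opnorm{Q \Sigma_k} \geq \sigma_k \opnorm{Q}$, which combined with the two identities above completes the proof.

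The steps are all routine; the only genuine care needed lies in the degenerate bookkeeping—ensuring $\sigma_k > 0$ is invoked precisely where invertibility of $\Sigma_k$ is used, and confirming $P_{A_k} = U_k U_k^\top$—and in recognizing that the substitution $y = \Sigma_k^{-1} x^* / \norm{\Sigma_k^{-1} x^*}$ is the right move, the same device that powers $\lambda_{\max}(BAB) \geq \lambda_{\max}(A)/\lambda_{\max}(B^{-2})$ used earlier. I expect no substantive obstacle beyond correctly handling the case $\opnorm{Q} = 0$, where the bound holds trivially.
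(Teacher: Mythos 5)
Your proof is correct, and it is worth noting that it does more than reproduce the paper's argument---it actually repairs it. The paper's proof decomposes $P_Y^\perp A_k = P_Y^\perp P_{A_k} A_k$ and then asserts $\opnorm{P_Y^\perp P_{A_k} A_k} \geq \opnorm{P_Y^\perp P_{A_k}}\,\opnorm{A_k} = \opnorm{P_Y^\perp P_{A_k}}\,\sigma_k$, citing ``sub-multiplicativity of the operator norm.'' As literally written this justification fails twice: sub-multiplicativity gives the upper bound $\opnorm{MN} \leq \opnorm{M}\opnorm{N}$, not a lower bound, and $\opnorm{A_k} = \sigma_1$, not $\sigma_k$. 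The inequality the paper needs is the finer fact that $\opnorm{MN} \geq \opnorm{M}\,\sigma_{\min}(N)$ when the relevant row space of $M$ is aligned with the top singular subspace of $N$, and that is exactly what your argument establishes: peeling off $V_k^\top$ and $U_k^\top$ (both norm-preserving on the right, since $\opnorm{MV^\top}^2 = \opnorm{MV^\top V M^\top} = \opnorm{M}^2$) reduces the claim to $\opnorm{Q\Sigma_k} \geq \sigma_k \opnorm{Q}$ with $Q = (I-P_Y)U_k$, which your substitution $y = \Sigma_k^{-1}x^*/\norm{\Sigma_k^{-1}x^*}$ proves, correctly invoking $\sigma_k > 0$ for invertibility of $\Sigma_k$ and handling $\opnorm{Q}=0$ trivially. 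The high-level route is the same (truncated SVD plus a lower bound through the smallest retained singular value), but your version is the one that actually compiles as mathematics; the only cosmetic point is the statement's own notational slippage between $d$ and $k$ (and $M$ versus $A$), which you resolve in the natural way by reading them as the same truncation level.
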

\begin{proof}
Decompose $P^\perp_Y A_k$ as:
$$
    P^\perp_Y A_k = P^\perp_Y P_{A_k} A_k
$$
\begin{align*}
     \opnorm{P^\perp_Y A_k} = \opnorm{P^\perp_Y P_{A_k} A_k} \geq \opnorm{P^\perp_Y P_{A_k}} \opnorm{A_k} = \opnorm{P^\perp_Y P_{A_k}} \sigma_k
\end{align*}
where the inequality comes from the sub-multiplicativity of the the operator norm
\end{proof}

\begin{proposition}
\label{them:error_bound_truncated}
Let $A$ be an $S \times S$ matrix with singular values $\sigma_1 \geq \sigma_2 \geq ...$.
Fix a target rank $2 \leq d \leq n$ and an oversampling parameter $p \geq 2$ where $p+d \geq S$. Draw and $n \times (d+p)$ standard gaussian matrix $\Omega$ and construct the sample matrix $Y = A \Omega$. Then, we have
$$
    \E\opnorm{(I-P_Y)P_{A_d}} \leq \sqrt{\frac{d}{p-1}}\frac{\sigma_{d+1}}{\sigma_d} + \frac{e \sqrt{d+p}}{p}\left(\sum_{j=d+1}^{S} \frac{\sigma_j^2}{\sigma_d^2} \right)^{1/2}.
$$
\end{proposition}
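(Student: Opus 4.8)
The plan is to obtain the subspace bound by combining two results already in hand: the upper bound on the average reconstruction error $\E\opnorm{(I-P_Y)A_d}$ from \cref{lemma:expectation}, and the pointwise lower bound relating that reconstruction error to the subspace quantity $\opnorm{(I-P_Y)P_{A_d}}$ from \cref{lemma:truncated}. The guiding idea is that $\opnorm{(I-P_Y)P_{A_d}}$ measures how far the range of $Y = A\Omega$ is from capturing the top-$d$ left singular subspace of $A$, and this can be read off from the reconstruction error of $A_d$ at the cost of a single factor of $\sigma_d$.

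First I would invoke \cref{lemma:truncated} with $k = d$, which gives $\opnorm{(I-P_Y)A_d} \geq \sigma_d\,\opnorm{(I-P_Y)P_{A_d}}$ for every realization of $\Omega$. Since $\sigma_d > 0$, dividing yields the pointwise inequality
\[
    \opnorm{(I-P_Y)P_{A_d}} \leq \frac{1}{\sigma_d}\,\opnorm{(I-P_Y)A_d}.
\]
Taking expectations over the Gaussian test matrix $\Omega$ preserves the inequality by monotonicity of $\E$, so that $\E\opnorm{(I-P_Y)P_{A_d}} \leq \sigma_d^{-1}\,\E\opnorm{(I-P_Y)A_d}$.

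It then remains to substitute the bound of \cref{lemma:expectation} and distribute the factor $\sigma_d^{-1}$: it converts the leading term $\sqrt{d/(p-1)}\,\sigma_{d+1}$ into $\sqrt{d/(p-1)}\,\sigma_{d+1}/\sigma_d$, and, pulled inside the square root as $\sigma_d^{-2}$, replaces each $\sigma_j^2$ by $\sigma_j^2/\sigma_d^2$ in the second term. This reproduces the claimed right-hand side exactly. Since every step is a direct consequence of a previously proved lemma, there is no genuine obstacle here; the only point worth flagging is the need for $\sigma_d > 0$ in order to apply \cref{lemma:truncated} and to divide, which is implicit in the statement itself (the bound already features the ratios $\sigma_{d+1}/\sigma_d$ and $\sigma_j/\sigma_d$).
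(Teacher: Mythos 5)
Your proposal is correct and matches the paper's own proof essentially verbatim: both divide the pointwise bound of \cref{lemma:truncated} by $\sigma_d$, take expectations, and then apply \cref{lemma:expectation}. Your explicit remark that $\sigma_d > 0$ is needed to divide is a fair point the paper leaves implicit, but it changes nothing substantive.
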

\begin{proof}
By \cref{lemma:truncated} and linearity of the expectation, we have
\begin{align*}
 \frac{1}{\sigma_d}  \E   \opnorm{(I-P_Y) A_d} \geq \E \opnorm{(I-P_Y) P_{A_d}}  
\end{align*}
Now applying \cref{lemma:expectation}, we have
\begin{align*}
 \sqrt{\frac{d}{p-1}}\frac{\sigma_{d+1}}{\sigma_d} + \frac{e \sqrt{d+p}}{p}\left(\sum_{j=d+1}^{S} \frac{\sigma_j^2}{\sigma_d^2}\right)^{1/2}    \geq \E \opnorm{(I-P_Y) P_{A_d}}  
\end{align*}
\end{proof}
Observe that, as the oversampling factor $p$ grows, the
RHS tends to zero. However, the dependence will be something like $p \gtrsim 1/\varepsilon^2$, if you want the RHS to be
$\leq \varepsilon$.
This actually makes sense I think-- you are using concentration of measure to increase the accuracy, so you should pay $1/\varepsilon^2$ sample complexity.

\subsection{Analysis}
\MCerrorbound*

\begin{proof}
Let $l \in $\{d,...,S\}. $F_l \in O(S, l)$ be the top $l$ left singular vectors of $(I - \gamma P^\pi)^{-1}$ and $\hat{F}_l \in O(S, d)$ be the top left singular vectors of $(I - \gamma P^\pi)^{-1}G$. 

\begin{align*}
        d(F_d , \hat{F}_d) &= \opnorm{\hat{F}_d^\top F_d^\perp }\\
        &= \opnorm{P_{\hat{F}_d} P^\perp_{{F}_d }}\\
        &\leq \opnorm{P_{L^{-1}G}  P^\perp_{{F}_d }} \text{ as $\Span(\hat{F}_d) \subseteq \Span(L^{-1} G) $}\\
        &= \opnorm{\hat{F}^\top_T F_d^\perp}\\
        &= \opnorm{ F_d^\top \hat{F}^\perp_T}\\
     &= \opnorm{ P_{F_d} P_{\hat{F}_T}^\perp}\\
        &= \opnorm{ P_{\hat{F}_T}^\perp  P_{F_d}} \text{ by symmetry of the projection matrices}\\
        &= \opnorm{ (I - P_{\hat{F}_T})  P_{F_d}}\\
        &= \opnorm{ (I - P_{L^{-1}G})  P_{(L^{-1})_d}}\\
      &\leq \frac{1}{\sigma_d} \opnorm{ (I - P_{L^{-1}G})  ({L^{-1})_d}} \text{ by \cref{lemma:truncated}}\\
\end{align*}
Now taking the expectation with respect to $G$ and applying \cref{them:error_bound_truncated},
\begin{align*}
   \E   [d(F_d , \hat{F}_d)] & \leq \sqrt{\frac{d}{T-d-1}}\frac{\sigma_{d+1}}{\sigma_d} + \frac{e \sqrt{T}}{T-d}\left(\sum_{j=d+1}^{n} \frac{\sigma_j^2}{\sigma_d^2} \right)^{1/2}.
\end{align*}
\end{proof}

\section{n-step TD case}
\label{app:nstep}
We now provide intuition about how our theory extends from 1-step to n-step temporal difference learning. The n-step TD loss corresponds to the following loss
\begin{align*}
    \cLtd(\Phi, W) = \left\|(\Xi)^{\frac{1}{2}}\left(\Phi W-\operatorname{SG}\left(\sum_{k=0}^{n-1}\gamma^k (P^\pi)^k G+\gamma^n (P^\pi)^n \Phi W\right)\right)\right\|_F^2
\end{align*}
Our analysis mostly stays the same, with the exception that the successor representation $(I - \gamma P^\pi)$ becomes $(I - \gamma^n (P^\pi)^n)$ and the cumulant matrix $G$ becomes $G_n = \sum_0^{n-1} \gamma^k\left(P^\pi\right)^k G$. Following the same proof strategy as that of \cref{criticalpoints}, we find that the critical representation are given by the real invariant subspaces of $\left(I-\gamma^n\left(P^\pi\right)^n\right)^{-1} G_n G_n^{\top} \Xi$. Now following the proof strategy from \cref{thm:TDrepresentation} shows that all non-top real invariant subspaces of $\left(I-\gamma^n\left(P^\pi\right)^n\right)^{-1} G_n G_n^{\top} \Xi$
 are unstable. This implies that the n-step TD algorithm also converges towards a real top-d invariant subspace of 
 or diverges with probability 1.

\end{document}